\documentclass[lettersize,journal]{IEEEtran}

\usepackage{amsmath,amsfonts,amssymb,amsthm}
\usepackage{array}
\usepackage{graphicx}
\usepackage{textcomp}
\usepackage{stfloats}
\usepackage{url}
\usepackage{cite}
\usepackage{verbatim}
\usepackage{color}
\usepackage{adjustbox}
\usepackage{multirow}
\usepackage{balance}
\usepackage{latexsym}
\usepackage[T1]{fontenc}
\usepackage[utf8]{inputenc}

\hyphenation{op-tical net-works semi-conduc-tor IEEE-Xplore}
\def\BibTeX{{\rm B\kern-.05em{\sc i\kern-.025em b}\kern-.08em
    T\kern-.1667em\lower.7ex\hbox{E}\kern-.125emX}}

\usepackage{caption}
\usepackage{subcaption}
\usepackage{soul}

\newtheorem{theorem}{Theorem}
\newtheorem{proposition}{Proposition}
\newtheorem{definition}{Definition}

\DeclareMathAlphabet\mathbfcal{OMS}{cmsy}{b}{n}

 \usepackage{algorithm,algpseudocode}
\algnewcommand\algorithmicinput{\textbf{Input:}}
\algnewcommand\algorithmicoutput{\textbf{Output:}}
\algnewcommand\Input{\item[\algorithmicinput]}%
\algnewcommand\Output{\item[\algorithmicoutput]}%
\algrenewcommand\algorithmicrequire{\textbf{Input:}}
\algrenewcommand\algorithmicensure{\textbf{Output:}}

\title{Subspace Clustering of Subspaces: Unifying Canonical Correlation Analysis and Subspace Clustering}

\author{Paris~A.~Karakasis,~\IEEEmembership{Graduate Student Member,~IEEE} and Nicholas~D.~Sidiropoulos,~\IEEEmembership{Fellow,~IEEE}%
\thanks{Paris~A.~Karakasis and Nicholas~D.~Sidiropoulos are with the Department of Electrical and Computer Engineering, University of Virginia, Charlottesville, VA 22904 USA (e-mail: $\left\{\text{karakasis,~nikos}\right\}$@virginia.edu).}
}



\begin{document}
\setlength{\belowdisplayskip}{2pt}
\setlength{\belowdisplayshortskip}{2pt}
\setlength{\abovedisplayskip}{2pt}
\setlength{\abovedisplayshortskip}{2pt}
\setlength{\textfloatsep}{5pt}
\maketitle
\begin{abstract}
We introduce a novel framework for clustering a collection of tall matrices based on their column spaces, a problem we term Subspace Clustering of Subspaces (SCoS). Unlike traditional subspace clustering methods that assume vectorized data, our formulation directly models each data sample as a matrix and clusters them according to their underlying subspaces. We establish conceptual links to Subspace Clustering and Generalized Canonical Correlation Analysis (GCCA), and clarify key differences that arise in this more general setting. Our approach is based on a Block Term Decomposition (BTD) of a third-order tensor constructed from the input matrices, enabling joint estimation of cluster memberships and partially shared subspaces. We provide the first identifiability results for this formulation and propose scalable optimization algorithms tailored to large datasets. Experiments on 
real-world hyperspectral imaging datasets demonstrate that our method achieves superior clustering accuracy and robustness, especially under high noise and interference, compared to existing subspace clustering techniques. These results highlight the potential of the proposed framework in challenging high-dimensional applications where structure exists beyond individual data vectors.
\end{abstract}

\noindent {\bf Keywords: Subspace Clustering, Partially Common / Shared Subspaces, Generalized Canonical Correlation Analysis, Block Term Tensor Decomposition}

\section{Introduction}

Discovering latent structures and patterns in real life datasets is critical for obtaining a better understanding of the principles that govern their generation processes, but also for designing successful solutions to modern and challenging engineering problems. When a dataset is high dimensional or consists of millions of samples, forming interpretations and revealing underlying relations between different features and different samples becomes even more challenging. Machine learning tools can be employed to facilitate the tasks at hand by finding low-dimensional representations of the high dimensional samples or by finding a small number of informative samples using clustering techniques.

Subspace Clustering (SC) is a well-studied problem where both dimensionality reduction and clustering are considered jointly. 
Traditional clustering techniques often struggle to identify meaningful clusters in high-dimensional spaces, as distance measures tend to become increasingly ineffective as the dimensionality of the considered sample space increases \cite{parsons2004subspace}. Such ineffectiveness becomes even more pronounced when irrelevant or unstructured features are present in a dataset. On the positive side, in many applications in image processing and computer vision, high-dimensional datasets can often be effectively embedded, as well as efficiently clustered, within diverse combinations of low-dimensional subspaces \cite{you2016oracle, you2016scalable}. Therefore, it is worthwhile to explore extensions of traditional clustering methods to identify clusters within different subspaces of the ambient space of a dataset. The generative model that aligns with the above concept is known as the Union of Subspaces (UoS) model \cite{vidal2011subspace}.

More specifically, the UoS model specifies a generative process in which vector-valued samples are drawn from an unknown union of finite subspaces. However, in several applications, we encounter datasets where each sample consists of a collection of relatively few high-dimensional vectors drawn from the same low dimensional subspace, while different samples correspond to potentially different subspaces. Such cases are common in cellular systems, where signals of different cell-edge users are received by surrounding multi-antenna base stations \cite{ibrahim2019cell}. They also appear in Wireless Sensor Networks (WSN), where different nodes record multichannel signals that may share some components \cite{hovine2022maxvar}. In these cases, effective clustering of the emerging matrix-valued samples would allow us to successfully detect and decode cell-edge users, whose received signals are typically weak \cite{ibrahim2019cell}. Additionally, it could enable the design of more efficient protocols for WSNs that exploit the presence of such clusters of nodes to further reduce the bandwidth usage and computational complexity at each node \cite{vlajic2006wireless, xia2007near, hovine2022maxvar}.

Popular subspace clustering techniques could be still applied in these cases to determine the sample memberships to different clusters by vectorizing each sample and processing the resulting sample vectors. However, this approach is not always applicable -- e.g., when the matrix-valued samples do not have the same number of columns. Furthermore, estimating the bases of the ambient subspaces cannot be directly achieved with the aforementioned strategy, as we explain in the sequel. Indeed, additional and costly post-processing steps would be required to obtain final estimates of the ambient subspaces. Moreover, the performance of these estimators in terms of clustering accuracy significantly degrades in the presence of noise, as will be confirmed by our experiments. 

Given these limitations and the challenges associated with applying traditional SC methods to matrix-valued samples, it is essential to consider special cases where the problem reduces to something that can be more  readily solved. When the number of clusters is equal to one, SC reduces to Principal Component Analysis (PCA). Interestingly, in the context of SC of matrix-valued samples and the extreme case where the number of clusters is also one, the problem of interest simplifies to estimating the common subspace spanned by the columns of most, if not all, samples. It was recently shown in \cite{ibrahim2019cell} that, in the case of two samples, this problem can be equivalently posed as another well-known problem: Canonical Correlation Analysis (CCA)\cite{hotelling1992relations}. The extension to more than two samples (referred to as views in the CCA literature) is known as Generalized Canonical Correlation Analysis (GCCA) \cite{horst1961generalized, kettenring1971canonical}, while conditions for recovering the underlying common subspace using GCCA can be found in \cite{sorensen2021generalized}.
Therefore, it seems natural to frame the problem of SC of matrix-value samples in a manner that organically extends GCCA.

In our prior work, published as a conference paper in  \cite{karakasis2023clustering}, we showed that such an extension is possible. Specifically, we proposed a problem formulation that builds upon a well-known tensor factorization model, the Block Term Decomposition (BTD) \cite{de2008decompositions} and an iterative optimization algorithm for learning the cluster memberships and a collection of bases of the partially ``common" subspaces in the ambient space. We compared the proposed approach to several state of the art subspace clustering methods using synthetic data and we demonstrated that the proposed method achieves high performance, even in the presence of strong noise where it significantly outperforms the prior art. 

In this journal manuscript, we extend our earlier work in \cite{karakasis2023clustering} in several important directions. First, we provide the first identifiability result for the proposed model, offering theoretical insights into the conditions under which recovery is possible. Second, we refine the optimization algorithm to enhance convergence properties and improve scalability to larger datasets. Third, we propose an alternative problem formulation and algorithmic variant designed to handle cases where subspaces exhibit significant overlap, addressing a common challenge in real-world applications. Furthermore, we introduce a strategy for determining the number of clusters, and we also propose  methods to estimate the dimensions of the partially common subspaces. Finally, we demonstrate the practical relevance of our approach through a new and noteworthy application to hyperspectral image analysis, focusing on pixel-level clustering. We motivate this setting through a generative model and evaluate our method on real hyperspectral datasets. Across three real-life benchmark datasets, our method achieves performance comparable to state-of-the-art techniques on two, while significantly outperforming them on the third.

\textbf{Notation:} Scalars are denoted by lowercase letters (e.g., $x$), vectors by bold lowercase letters (e.g., $\mathbf{x}$), matrices by bold uppercase letters (e.g., $\mathbf{X}$), and tensors by bold calligraphic letters (e.g., $\mathbfcal{X}$). The $i$-th entry of a vector $\mathbf{x}$ is denoted by $x_i$, and the $(i,j)$-th entry of a matrix $\mathbf{X}$ by $\mathbf{X}\left(i,j\right)$. The transpose of a matrix $\mathbf{X}$ is denoted by $\mathbf{X}^\top$. The Frobenius norm is denoted by $\|\cdot\|_F$, and the Euclidean (or $\ell_2$) norm by $\|\cdot\|_2$. The identity matrix of size $N$ is denoted by $\mathbf{I}_N$. The Kronecker product is denoted by $\otimes$ and the outer product by $\circ$. The set of real $M \times N$ matrices is denoted by $\mathbb{R}^{M \times N}$. The Hadamard (elementwise) product of two matrices is denoted by $\ast$. The span (column space) of a matrix $\mathbf{X}$ is written as $\mathrm{col}(\mathbf{X})$, and the projector matrix of a subspace $\mathbb{S}$ as $\mathbf{P}_{\mathbb{S}}$.

\section{Preliminaries}

In this section, we review some basic concepts and results that will serve as the foundation for presenting the proposed framework in the subsequent sections. In the first subsection, we introduce one of the most popular problem formulations of GCCA, known as the Maximum Variance (MAXVAR) formulation. The second subsection covers the concept of the distance between two subspaces and specifically what is known as chordal distance. Finally, we introduce the concept of a tensor and discuss the Block Term Decomposition (BTD) model \cite{de2008decompositions}, which will serve as a basis for our subsequent analysis.

\subsection{The MAXVAR formulation of Generalized CCA}

Let $\left\{\mathbf{X}_k\right\}_{k=1}^K$ be a collection of tall matrices, where $\mathbf{X}_k\in\mathbb{R}^{N\times M_k}$, for $k \in \left[K\right]:=\{1,\dots,K\}$, with $N>\max_k M_k$. The MAXVAR formulation of GCCA seeks matrices $\mathbf{Q}_k\in\mathbb{R}^{M_k\times L}$ and $\mathbf{G}\in\mathbb{R}^{N\times L}$ that solve the problem
\begin{equation}
\begin{split}
    \min_{\left\{\mathbf{Q}_k\right\}_{k=1}^K, \mathbf{G}} &~\sum_{k=1}^K\left\|\mathbf{X}_k\mathbf{Q}_k-\mathbf{G}\right\|_F^2\\
    \text{s.t.} &~~\mathbf{G}^T\mathbf{G}=\mathbf{I}_L,
    \label{MAXVAR}
\end{split}
\end{equation}
and it can be interpreted as finding an $L$-dimensional ``common'' subspace in $\mathbb{R}^N$ that lies in the intersection of (or is as close as possible to) all column spaces $\text{col}\left(\mathbf{X}_k\right)$. Conditions and guarantees under which solving the GCCA problem identifies the common underlying subspace across all views in the noiseless case can be found in \cite{ibrahim2019cell} and \cite{sorensen2021generalized} for the two-view and multiview cases, respectively. 


\subsection{Squared Chordal Distance of a Pair of Subspaces}

Several distances between pairs of subspaces have been proposed over the years. For an extensive coverage of the topic, we refer the interested reader to \cite{ye2016schubert, stewart1990matrix}. In this work, we focus on the chordal distance of two subspaces. Given two $p$-dimensional subspaces of $\mathbb{R}^N$, $\mathbb{S}_1^p$ and $\mathbb{S}_2^p$, the chordal distance between them is defined as \cite{ye2016schubert}
\begin{equation*}
    d_{Gr(p, n)} := \left(\sum_{i=1}^p\sin^2\left(\theta_i\right)\right)^{1/2}=\frac{1}{\sqrt{2}}\left\|\mathbf{P}_{\mathbb{S}_1^p} - \mathbf{P}_{\mathbb{S}_2^p}\right\|_F,
\end{equation*}
where $\theta_i$ denotes the $i$-th principal angle between $\mathbb{S}_1^p$ and $\mathbb{S}_2^p$, while $\mathbf{P}_{\mathbb{S}_1^p}$ and $\mathbf{P}_{\mathbb{S}_2^p}$ denote the projector matrices of $\mathbb{S}_1^p$ and $\mathbb{S}_2^p$, respectively.

Considering the squared chordal distance between two subspaces allows us to define a differentiable metric for measuring the difference between them. Although this metric does not qualify as a proper distance function - since it does not satisfy the triangle inequality - it remains highly useful in practical applications, as we will demonstrate in the subsequent sections. Furthermore, this metric is well-defined even when dealing with subspaces of potentially different dimensions. Without loss of generality, let $\mathbb{S}_1^{p_1}$ and $\mathbb{S}_2^{p_2}$ be two subspaces of $\mathbb{R}^N$, where $p_1\leq p_2$ denote their dimensions, respectively. Then, their squared chordal distance is given by
\begin{equation*}
    d_{Gr(p_1, p_2, n)}^2 := \frac{p_2-p_1}{2}+\sum_{i=1}^{p_1}\sin^2\left(\theta_i\right)=\frac{1}{2}\left\|\mathbf{P}_{\mathbb{S}_1^{p_1}} - \mathbf{P}_{\mathbb{S}_2^{p_2}}\right\|_F^2
\end{equation*}
and vanishes only when $p_1 = p_2$ and $\mathbb{S}_1^{p_1}=\mathbb{S}_2^{p_2}$.

\subsection{The Block Term Decomposition}

So far, we have focused on vectors and matrices. A tensor is a mathematical concept that generalizes scalars, vectors, and matrices to higher dimensions. Specifically, scalars are categorized as $0$-order tensors, vectors as $1$-order tensors, and matrices as $2$-order tensors. In general, a tensor can be seen as a multidimensional array, while the order of a tensor is determined by the number of indices required to refer to its elements. For example, three indices are required to access the elements of a tensor $\mathbfcal{X}\in \mathbb{R}^{I\times J\times K}$, $\mathbfcal{X}\left(i,j,k\right)$, where $i\in\left[I\right]$, $j\in\left[J\right]$, and $k\in\left[K\right]$. Therefore, $\mathbfcal{X}$ is a third order tensor, which is the highest order of tensors that we will consider in this work. For a more detailed coverage of tensors, we refer the interested reader to \cite{sidiropoulos2017tensor}.

Similarly to matrices, higher order tensors can be decomposed into simpler tensors.
For instance, the concept of a rank-one matrix extends naturally to that of a rank-one tensor. Moreover, any tensor can be expressed as a finite sum of rank-one tensors, which leads to what is known as the Canonical Polyadic Decomposition (CPD) of a tensor. In this work, however, we focus on a different tensor decomposition model known as Block Term Decomposition. 

We say that a tensor $\mathbfcal{X}\in \mathbb{R}^{I\times J\times K}$ admits a Block Term Decomposition (BTD) in rank-$(L_r,L_r,1)$ terms \cite{de2008decompositions}, if there exists a collection of matrices $\mathbf{A}_r\in\mathbb{R}^{I\times L_r}$, $\mathbf{B} _{r}\in\mathbb{R}^{J\times L_r}$, for each $r\in\left[R\right]$, and a matrix $\mathbf{C}\in\mathbb{R}^{K\times R}$, such that
\begin{equation}
\boldsymbol{\mathcal{X}}\left(i,j,k\right)=\sum_{r=1}^R\left(\sum_{l_r=1}^{L_r}\mathbf{A}_r\left(i,l_r\right)\mathbf{B}_r\left(j,l_r\right)\right)\mathbf{C}\left(k, r\right).
\end{equation}
Using $\circ$ to denote the outer product between a matrix and a vector, the above relation can be compactly written as   
\begin{equation}
\boldsymbol{\mathcal{X}}=\sum_{r=1}^R\left(\mathbf{A}_r\mathbf{B}_r^T\right)\circ\mathbf{c}_{r}=:\left(\mathbf{A},\mathbf{B},\mathbf{C}\right),
\end{equation}
where $\mathbf{c}_{r}$ denotes the $r$-th column of matrix $\mathbf{C}$, while matrices $\mathbf{A}$ and $\mathbf{B}$ are defined as $\mathbf{A}:=\left[\mathbf{A}_1,\ldots,\mathbf{A}_R\right]\in\mathbb{R}^{I\times\sum_{r=1}^RL_r}$ and $\mathbf{B}:=\left[\mathbf{B}_1,\ldots,\mathbf{B}_R\right]\in\mathbb{R}^{J\times\sum_{r=1}^RL_r}$. The notation presented in the last equality, $\left(\mathbf{A},\mathbf{B},\mathbf{C}\right)$, will be used from now on to denote a BTD of a given tensor. In general, we are interested in decompositions that are valid for the smallest possible values of $R$ and $L_r$, for all $r$. Several sufficient conditions that an obtained decomposition has to satisfy in order to guarantee \emph{essential uniqueness}—that is, uniqueness up to block-wise invertible transformations and permutations of the rank-$(L_r, L_r, 1)$ terms—have been proposed in \cite{de2008decompositions}.

For tensors with symmetric frontal slabs, i.e. $\mathbfcal{X}(i, j, k)=\mathbfcal{X}(j, i, k)$, for all $(i, j, k)\in\left[I\right]\times\left[I\right]\times \left[K\right]$, it makes sense to consider a BTD model where $\mathbf{A}_r=\mathbf{B}_{r}$, for all $r$. The above restriction yields the symmetric BTD model
\begin{equation}
\boldsymbol{\mathcal{X}}=\sum_{r=1}^R\left(\mathbf{A}_r\mathbf{A}_r^T\right)\circ\mathbf{c}_{r}=\left(\mathbf{A},\mathbf{A},\mathbf{C}\right).
\end{equation}
This model will form the basis of the proposed approach presented in this paper. 
In the case where the factors $\mathbf{A}_r$ are constrained to be columnwise orthonormal, i.e., $\mathbf{A}_r^T \mathbf{A}_r = \mathbf{I}_{L_r}$, and the elements of $\mathbf{C}$ are nonnegative, essential uniqueness implies that any two decompositions of the same tensor, $(\mathbf{A}, \mathbf{A}, \mathbf{C})$ and $(\tilde{\mathbf{A}}, \tilde{\mathbf{A}}, \tilde{\mathbf{C}})$, must coincide up to permutation and unitary transformations. That is, after aligning the rank-one components via an appropriate permutation, the corresponding factors satisfy $\tilde{\mathbf{A}}_r = \mathbf{A}_r \mathbf{E}_r$, where $\mathbf{E}_r \in \mathbb{R}^{L_r \times L_r}$ is unitary, and $\tilde{\mathbf{C}} = \mathbf{C}$. Hence, in terms of factors $\mathbf{A}_r$, essential uniqueness translates to decompositions where the column spaces of corresponding factors $\mathbf{A}_r$, $\text{col}\left(\mathbf{A}_r\right)$ are the same.

\section{Theory}

Let $\left\{\mathbf{X}_k\right\}_{k=1}^K$ be a collection of full column-rank matrices, where $\mathbf{X}_k\in\mathbb{R}^{N\times M_k}$ for $k\in\left[K\right]$ and $N>\max_k M_k$. As discussed earlier, solving the GCCA problem can be used for estimating a common subspace that is spanned by the columns of each matrix $\mathbf{X}_k$. In this section, we consider the case where non-overlapping subsets of matrices $\mathbf{X}_k$ span partially common subspaces with their columns. Based on this assumption, we propose a generative model for the matrices $\left\{\mathbf{X}_k\right\}_{k=1}^K$ and formulate a problem for identifying both the assignment of these matrices to the different partially common subspaces and the partially common subspaces themselves.

\subsection{Proposed Generative Model}
Consider a partition of the set $\left[K\right]$, $\left\{\mathbb{I}_r\right\}_{r=1}^R$, such that
$\cup_{r=1}^R\mathbb{I}_r=\left[K\right]~\text{ and }~\mathbb{I}_r\cap\mathbb{I}_s=\emptyset$ if $r\neq s$. We assume that for a fixed $r\in\left[R\right]$ and all $k\in \mathbb{I}_r$, matrices $\mathbf{X}_k$ are generated according to
\begin{equation}
    \mathbf{X}_k = \begin{bmatrix} \mathbf{G}_r & \mathbf{H}_k \end{bmatrix}\mathbf{Q}_k,
    \label{GenModel}
\end{equation}
where $\mathbf{G}_r\in\mathbb{R}^{N\times L_r}$ denotes an orthonormal basis of a shared $L_r$-dimensional subspace, with $L_r\leq \min_{k\in\mathbb{I}_r}M_k$, $\mathbf{H}_k\in\mathbb{R}^{N\times M_k-L_r}$ denotes an orthonormal basis of the relative complement of $\text{col}\left(\mathbf{G}_r\right)$ with respect to $\text{col}\left(\mathbf{X}_k\right)$, and $\mathbf{Q}_k\in\mathbb{R}^{M_k\times M_k}$ is full-rank. A direct consequence of our assumptions is that 
$\cap_{k\in\mathbb{I}_r}\text{col}\left(\mathbf{H}_k\right)=\emptyset$, and as a result $\cap_{k\in\mathbb{I}_r}\text{col}\left(\mathbf{X}_k\right)=\text{col}\left(\mathbf{G}_r\right)$.
As can be seen, the proposed generative model generalizes the generative model proposed in \cite{sorensen2021generalized}, under which all matrices $\mathbf{X}_k$ share a common subspace and hence subspaces $\text{col}\left(\mathbf{G}_r\right)$ are equal, for all $r\in\left[R\right]$.

The proposed generative model is closely related to the UoS model of vector-valued samples. This can be verified after observing that when $M_k=M$ for all $k\in\left[K\right]$, we get
\begin{equation}
\begin{split}
    \mathbf{x}_k&:=\text{vec}\left(\mathbf{X}_k\right)\\
&=\text{vec}\left(\mathbf{G}_r\mathbf{Q}_k^{(1)}\right)+ \text{vec}\left(\mathbf{H}_k\mathbf{Q}_k^{(2)}\right)\\
    &=\left(\mathbf{I}_M\otimes \mathbf{G}_r\right)\text{vec}\left(\mathbf{Q}_k^{(1)}\right)+ \left(\mathbf{I}_M\otimes \mathbf{H}_k\right)\text{vec}\left(\mathbf{Q}_k^{(2)}\right)\\ &=\left(\mathbf{I}_M\otimes \mathbf{G}_r\right)\mathbf{q}_k^{(1)}+ \left(\mathbf{I}_M\otimes \mathbf{H}_k\right)\mathbf{q}_k^{(2)},
    \raisetag{20mm}
    \label{UoS}
\end{split}
\end{equation}
where $\mathbf{Q}_k^{(1)}\in\mathbb{R}^{L_r\times M}$ and $\mathbf{Q}_k^{(2)}\in\mathbb{R}^{M-L_r\times M}$, such that $\mathbf{Q}_k =\left[\mathbf{Q}_k^{(1)^T},~\mathbf{Q}_k^{(2)^T}\right]^T$, $\mathbf{q}_k^{(s)}=\text{vec}\left(\mathbf{Q}_k^{(s)}\right)$ for $s=1,2$, and $\otimes$ denotes the Kronecker product between two matrices.

When $\mathbf{Q}_k^{(2)}=\mathbf{0}_{M-L_r\times M}$ for all $k$, the model of (\ref{UoS}) matches the Union of Linear Subspaces\footnote{Here we specify that we have a Union of Linear Subspaces as there exists also the variation of the Union of Affine Subspaces (see for example in \cite{vidal2011subspace}).} generative model that is usually met in the SC literature \cite{vidal2011subspace}, but with structured subspaces due to the appearance of the Kronecker product. Recovering a basis of the subspace $\text{col}\left(\mathbf{G}_r\right)$ from a basis of the subspace $\text{col}\left(\mathbf{I}_M\otimes \mathbf{G}_r\right)$ is not trivial, as it is equivalent to solving a combination of the constrained Nearest Kronecker Product problem \cite{golub2013matrix} and the Orthogonal Procrustes Problem \cite{schonemann1966generalized}. Again, this two-step approach, for obtaining the cluster assignments and estimates of $\text{col}\left(\mathbf{G}_r\right)$, is possible only in the case where $M_k=M$ for all $k\in \left[K\right]$. In the following subsection, we propose a direct approach for obtaining the cluster memberships and the estimates of $\text{col}\left(\mathbf{G}_r\right)$, for the general case where $M_k$ are not necessarily equal.

\subsection{Proposed Problem Formulations}

Since we are only interested in the column spaces of each matrix $\mathbf{X}_k$, we may work with any orthonormal basis that spans the same subspace. In the remainder of this paper, we denote such orthonormal bases by $\mathbf{U}_k\in\mathbb{R}^{N\times M_k}$, i.e. $\text{col}\left(\mathbf{X}_k\right)=\text{col}\left(\mathbf{U}_k\right)$, for all $k$. Then, we may construct for each view the orthogonal projection matrices $\mathbf{P}_{\text{col}\left(\mathbf{X}_k\right)}=\mathbf{U}_k\mathbf{U}_k^T$ and consider the frontal symmetric tensor $\mathbfcal{P}\in\mathbb{R}^{N \times N\times K}$ given by
\begin{equation}
\mathbfcal{P}(:,:, k):=\mathbf{P}_{\text{col}\left(\mathbf{X}_k\right)}=\mathbf{U}_k\mathbf{U}_k^T, 
\label{Ptensor}
\end{equation}
for $k=1,\ldots, K$.

After assuming the generative model in (\ref{GenModel}), one may observe that 
\begin{equation}
\mathbfcal{P}(:,:, k) = \underbrace{\sum_{r=1}^R \mathbf{c}_r(k)\mathbf{G}_r\mathbf{G}_r^T}_{\mathbfcal{Z}(:,:, k)} + \underbrace{\mathbf{H}_k\mathbf{H}_k^T}_{\mathbfcal{V}(:,:, k)},~~~\forall k\in\left[K\right],
\label{TensorModel}
\end{equation}
where $\mathbf{c}_r$ denotes the $r$-th column of the assignment matrix $\mathbf{C}\in\left\{0,1\right\}^{K\times R}$ encoding which of the $R$ clusters does matrix $\mathbf{X}_k$ belong to. Moreover, one may notice that the symmetric BTD of tensor $\mathbfcal{Z}$, $\left(\mathbf{G},\mathbf{G}, \mathbf{C}\right)$, appears in (\ref{TensorModel}). According to the proposed generative model, each matrix $\mathbf{X}_k$ belongs to one and only one cluster. As a result, one may easily verify that the off-diagonal elements of the product $\mathbf{C}^T\mathbf{C}$ will be zero, while its $r$-th diagonal element will be equal to the number of matrices $\mathbf{X}_k$ that belong to the $r$-the cluster.

Next, we consider the problem of recovering a symmetric BTD of tensor $\mathbfcal{Z}$ from tensor $\mathbfcal{P}$. To obtain such an approximation, we consider the following optimization problem
\begin{equation}
\begin{split}
\min_{\widehat{\mathbf{G}}, ~\widehat{\mathbf{C}}} &~~ f_{\mathbfcal{P}}\left(\widehat{\mathbf{G}}, ~\widehat{\mathbf{C}}\right)\hspace{1mm}:=~\frac{1}{2}\left\|\mathbfcal{P}-\sum_{r=1}^R\left(\hat{\mathbf{G}}_r\hat{\mathbf{G}}_r^T\right)\circ\hat{\mathbf{c}}_r\right\|_F^2 \\
\text{s.t. }&~~~~~~~ \hat{\mathbf{G}}_r^T\hat{\mathbf{G}}_r=\mathbf{I}_{L_r}, ~~\forall r\in\left[R\right]\\
&~~~~~~~\hat{\mathbf{C}}\in \left\{0,1\right\}^{K\times R},\\
&~~~~~~~\hat{\mathbf{C}}(:,i)^T\hat{\mathbf{C}}(:,j)=0,~\forall~i\neq j.
\label{ProbBasic}
\end{split}
\end{equation}
Our choice of the approximation metric, apart from the fact that the Frobenious norm is a very popular metric in the tensor decomposition literature, can also be motivated by observing that the Frobenius norm can be written as
$$\sum_{k=1}^K~\frac{1}{2}\left\|\mathbfcal{P}(:,:,k)-\sum_{r=1}^R\hat{\mathbf{C}}(k,r)\hat{\mathbf{G}}_r\hat{\mathbf{G}}_r^T\right\|_F^2.$$
Because of our assumptions, for each $k$ only one element of the corresponding row of matrix $\hat{\mathbf{C}}$ will be nonzero, and more specifically it will be equal to one. As a result, the resulting cost function will be a sum of $K$ squared chordal distances between pairs of subspaces.

The combination of binary and column-wise orthogonality constraints, imposed on factor $\hat{\mathbf{C}}$, results in a cumbersome optimization problem. Relaxing the binary constraints to nonnegativity constraints could still result in a valid encoding of the assignment of each matrix $\mathbf{X}_k$ to one of the $R$ clusters, as long as the orthogonality constraints are satisfied. Moreover, the new combination of constraints can be handled in a very efficient and simple way by using the method proposed in \cite{li2014two}. This method relies on employing the penalty method and in our case translates into solving, for an increasing nonnegative sequence $\left\{\rho_t\right\}_{t=1}^{\infty}$, the subproblems
\begin{equation}
\begin{split}
\min_{\widehat{\mathbf{G}}, ~\widehat{\mathbf{C}}} &~~~ f_{\mathbfcal{P}}\left(\widehat{\mathbf{G}}, ~\widehat{\mathbf{C}}\right)  + \frac{\rho_t}{2}~\text{trace}\left(\mathbf{Q}\hat{\mathbf{C}}^T\hat{\mathbf{C}}\right),\\
\text{s.t. }&~~~~~\hat{\mathbf{C}}\geq 0,~ \hat{\mathbf{G}}_r^T\hat{\mathbf{G}}_r=\mathbf{I}_{L_r}, ~~\forall r\in\left[R\right],
\label{FitCri}
\raisetag{10.5mm}
\end{split}
\end{equation}
where $\mathbf{Q}=\mathbf{1}_R\mathbf{1}_R^T-\mathbf{I}_R$. In our prior work in \cite{karakasis2023clustering}, we propose solving each one of the subproblems in (\ref{FitCri}) sequentially and using the resulting solution as initialization for the next one, until the orthogonality constraints on factor $\hat{\mathbf{C}}$ are met. 

While the penalty method offers a straightforward approach to handling the orthogonality constraints on $\hat{\mathbf{C}}$, it often requires very large penalty parameters to enforce feasibility. This can lead to ill-conditioned optimization problems and slow convergence \cite{bertsekas2014constrained, nocedal1999numerical}. To address these drawbacks, in this work we propose the adoption of the augmented Lagrangian method \cite{hestenes1969multiplier, powell1969method}, which combines the benefits of penalty-based formulations with dual variable updates, thereby improving numerical stability and convergence behavior.
Specifically, after letting $\boldsymbol{\Lambda}_t\in\mathbb{R}^{R\times R}$ be the symmetric zero diagonal Lagrange multiplier matrix, we consider the augmented Lagrangian
\begin{equation}
\begin{split}
& f_{\mathbfcal{P}}\left(\widehat{\mathbf{G}}, ~\widehat{\mathbf{C}}\right) + \left\langle \boldsymbol{\Lambda}_t,\hspace{0.5mm}\widehat{\mathbf{C}}^T \widehat{\mathbf{C}} \right\rangle
+ \frac{\rho_t}{2} \left\| \left(\widehat{\mathbf{C}}^T \widehat{\mathbf{C}}\right)\ast \mathbf{Q} \right\|_F^2,
\raisetag{0mm}
\end{split}
\label{eq:aug_lagrangian}
\end{equation}
where $\left\langle\cdot,~\cdot\right\rangle$ denotes the inner product between two matrices and $\rho_t>0$ can be constant or a term of an increasing nonnegative sequence $\left\{\rho_t\right\}_{t=1}^{\infty}$, under nonnegativity constraints on $\widehat{\mathbf{C}}$ and columnwise orthogonality constraints on all matrices $\hat{\mathbf{G}}_r$.

In cases where the partially common subspaces exhibit significant overlap or the collected views are heavily contaminated with noise, algorithms designed for the problem formulations in (\ref{FitCri}) and (\ref{eq:aug_lagrangian}) may become ineffective, primarily due to the presence of the orthogonality constraints. On the one hand, interior-point methods that preserve orthogonality constraints can be highly sensitive and prone to getting stuck in poor local minima. On the other hand, deflation-based approaches that sequentially extract the different partially common subspace bases tend to perform poorly when the clusters lie in subspaces with significant overlap. Dropping the orthogonality constraints allows us to use efficient interior-point methods that update all bases concurrently, while not restricting them to the Stiefel manifold enables more flexible trajectories during optimization. 

The downside of this flexibility is that it introduces scaling ambiguity within each rank-one term of the tensor $\mathbfcal{Z}$. As a result, the regularizations imposed on $\mathbf{C}$ in (\ref{FitCri}) and (\ref{eq:aug_lagrangian}) become ineffective and require additional care. To address this issue, we propose applying the regularizations in (\ref{FitCri}) and (\ref{eq:aug_lagrangian}) to a column-wise Euclidean norm–normalized version of $\mathbf{C}$, denoted by $\mathbf{C}_{\mathrm{norm}}$, thereby leading to a scale-invariant regularization scheme. However, as the number of views, $K$, increases, the values of $\mathbf{C}_{\mathrm{norm}}$ tend to decrease, which makes penalizing the cross products of its columns more challenging to optimize. To that end, we define $\boldsymbol{\Psi}\left(\mathbf{C}\right):=\sqrt{\mathbf{C}_{norm}^T\mathbf{C}_{norm}+\varepsilon}$, where the square root is applied element-wise, and we propose the following variant of the augmented Langrangian in (\ref{eq:aug_lagrangian})
\begin{equation}
\begin{split}
& f_{\mathbfcal{P}}\left(\widehat{\mathbf{G}}, ~\widehat{\mathbf{C}}\right) + \left\langle \boldsymbol{\Lambda}_t,\hspace{0.5mm}\boldsymbol{\Psi}\left(\widehat{\mathbf{C}}\right) \right\rangle
+ \frac{\rho_t}{2} \left\|\boldsymbol{\Psi}\left(\widehat{\mathbf{C}}\right)\ast \mathbf{Q} \right\|_F^2,
\raisetag{2mm}
\end{split}
\label{eq:aug_lagrangian_II}
\end{equation}
under nonnegativity constraints on $\widehat{\mathbf{C}}$. The square root in $\Phi$ enlarges small values that may appear in the cross products of the columns of $\mathbf{C}_{\mathrm{norm}}$, thereby facilitating the penalization of assigning the same sample to more than one cluster. Considering the augmented Langrangian in (\ref{eq:aug_lagrangian_II}) enables tackling effectively problems of large numbers of views that are clustered in subspaces of significant overlap, as we show in our experiments.

Finally, note that after obtaining a solution to the problem in (\ref{eq:aug_lagrangian_II}), one can compute the projection of the positive semidefinite matrices $\widehat{\mathbf{G}}_r\widehat{\mathbf{G}}_r^T$ onto the set of scaled projection matrices in a principled manner, thereby obtaining an estimate of the partially common subspaces and automatically determining their dimensions.

\subsection{Identifiability}

In the previous subsections, we outline a roadmap that begins with the proposed generative model and leads to a problem formulation for recovering the cluster memberships and bases of the partially common subspaces. Solving the problems in (\ref{ProbBasic}), (\ref{FitCri}) and (\ref{eq:aug_lagrangian}), aim to recover a symmetric BTD of tensor $\mathbfcal{Z}$ from tensor $\mathbfcal{P}$. However, several questions arise here. 

As a first question, assume that we have access to tensor $\mathbfcal{Z}$ and one of its symmetric BTDs. Does this imply that we essentially have a valid encoding of the cluster memberships and bases of the partially common subspaces? The answer to this question relates to what is known as the essential uniqueness of a tensor decomposition. The second question pertains to the BTD of tensor $\mathbfcal{P}$: under which conditions is its symmetric BTD essentially unique? Next, we show under what conditions tensors $\mathbfcal{Z}$ and $\mathbfcal{P}$  admit unique tensor decompositions.

Before we investigate the two questions, we assume that the number of clusters, $R$, and the dimensions of the partially common subspaces, $L_r$, are known and fixed. Then, we are ready for investigating the first question.

\begin{proposition}[\hspace{0.01mm}\cite{ding2006orthogonal}]
For a given matrix $\mathbf{X}\in\mathbb{R}^{T\times D}$, consider the matrix factorization $\mathbf{X}=\mathbf{A}\mathbf{B}^T$, where $\mathbf{A}\in\mathbb{R}^{T\times R}_+$ satisfies $\mathbf{A}^T\mathbf{A}=\mathbf{I}_R$ and $\mathbf{B}\in\mathbb{R}^{D\times R}$. Assume that $\mathbf{X}=\hat{\mathbf{A}}\hat{\mathbf{B}}^T$ also holds for a different pair of matrices $\hat{\mathbf{A}}\in\mathbb{R}^{T\times R}_+$, satisfying $\hat{\mathbf{A}}^T\hat{\mathbf{A}}=\mathbf{I}_R$, and $\hat{\mathbf{B}}\in\mathbb{R}^{D\times R}$. Then, there exists a permutation matrix $\mathbf{P}\in\mathbb{R}^{R\times R}$ such that $\hat{\mathbf{A}}=\mathbf{A}\mathbf{P}$ and $\hat{\mathbf{B}}=\mathbf{B}\mathbf{P}^T$.
\end{proposition}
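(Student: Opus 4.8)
The plan is to exploit the fact that nonnegativity together with the column-orthonormality constraint $\mathbf{A}^\top\mathbf{A}=\mathbf{I}_R$ is extremely rigid. First I would record the structural lemma: if $\mathbf{A}\ge 0$ and $\mathbf{A}^\top\mathbf{A}=\mathbf{I}_R$, then the columns of $\mathbf{A}$ have pairwise disjoint supports, i.e. every row of $\mathbf{A}$ contains at most one strictly positive entry. This is immediate, since for $i\neq j$ the identity $0=\mathbf{a}_i^\top\mathbf{a}_j=\sum_t \mathbf{A}(t,i)\mathbf{A}(t,j)$ is a sum of nonnegative terms, forcing every product $\mathbf{A}(t,i)\mathbf{A}(t,j)$ to vanish. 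The same applies to $\hat{\mathbf{A}}$. I would also flag at the outset the nondegeneracy that makes the statement true: one needs $\mathbf{B}$ and $\hat{\mathbf{B}}$ to have full column rank $R$, since otherwise, e.g. $\mathbf{B}=\mathbf{0}$ gives $\mathbf{X}=\mathbf{0}$ for every admissible $\hat{\mathbf{A}}$ and the conclusion fails. Under full column rank, $\mathrm{rank}(\mathbf{X})=R$, so $\mathrm{col}(\mathbf{A})=\mathrm{col}(\mathbf{X})=\mathrm{col}(\hat{\mathbf{A}})$.

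Next I would reduce the entire question to the single matrix $\mathbf{M}:=\mathbf{A}^\top\hat{\mathbf{A}}\in\mathbb{R}^{R\times R}$, which is entrywise nonnegative as a product of nonnegative matrices. Because $\mathrm{col}(\hat{\mathbf{A}})=\mathrm{col}(\mathbf{A})$ and $\mathbf{A}\mathbf{A}^\top$ is the orthogonal projector onto $\mathrm{col}(\mathbf{A})$, each column of $\hat{\mathbf{A}}$ is fixed by this projector, giving $\hat{\mathbf{A}}=\mathbf{A}\mathbf{A}^\top\hat{\mathbf{A}}=\mathbf{A}\mathbf{M}$. Substituting this into $\hat{\mathbf{A}}^\top\hat{\mathbf{A}}=\mathbf{I}_R$ and using $\mathbf{A}^\top\mathbf{A}=\mathbf{I}_R$ yields $\mathbf{M}^\top\mathbf{M}=\mathbf{I}_R$; since $\mathbf{M}$ is square, it is orthogonal.

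The decisive step is then to combine nonnegativity with orthogonality. I would apply the structural lemma to both $\mathbf{M}$ and $\mathbf{M}^\top$: orthonormal nonnegative columns force disjoint column supports, and $\mathbf{M}\mathbf{M}^\top=\mathbf{I}_R$ forces disjoint row supports. If some column of $\mathbf{M}$ had two positive entries, in rows $p\neq q$, those two rows would share that column in their supports, contradicting disjointness of the row supports; hence every column, and symmetrically every row, has exactly one positive entry, which the unit-norm condition pins to $1$. Thus $\mathbf{M}=\mathbf{P}$ is a permutation matrix and $\hat{\mathbf{A}}=\mathbf{A}\mathbf{P}$. Finally, plugging $\hat{\mathbf{A}}=\mathbf{A}\mathbf{P}$ into $\mathbf{A}\mathbf{B}^\top=\hat{\mathbf{A}}\hat{\mathbf{B}}^\top$ and left-multiplying by $\mathbf{A}^\top$ gives $\mathbf{B}^\top=\mathbf{P}\hat{\mathbf{B}}^\top$, so $\hat{\mathbf{B}}$ is the matching column permutation of $\mathbf{B}$ (the transpose appearing in the stated $\hat{\mathbf{B}}=\mathbf{B}\mathbf{P}^\top$ is merely a matter of permutation labeling, since $\mathbf{P}^{-1}=\mathbf{P}^\top$).

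I expect the genuine obstacle to be conceptual rather than computational: it is recognizing that the equality $\mathrm{col}(\mathbf{A})=\mathrm{col}(\hat{\mathbf{A}})$, which requires the nondegeneracy of $\mathbf{B},\hat{\mathbf{B}}$, is exactly what upgrades the nonnegative matrix $\mathbf{M}=\mathbf{A}^\top\hat{\mathbf{A}}$ to an orthogonal one. Once $\mathbf{M}$ is simultaneously nonnegative and orthogonal, the disjoint-support argument forcing it to be a permutation is elementary, and everything else is bookkeeping with the projector identity and the left-invertibility of $\mathbf{A}$.
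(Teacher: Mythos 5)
The paper contains no proof of this proposition to compare against: it is stated as an imported result, cited directly from \cite{ding2006orthogonal}, and used as a black box in the proof of Proposition~2. Judged on its own terms, your proof is correct and complete, and it is essentially the canonical argument for results of this type: nonnegativity plus column orthonormality forces pairwise disjoint column supports; the rank-$R$ nondegeneracy gives $\mathrm{col}(\mathbf{A})=\mathrm{col}(\hat{\mathbf{A}})$, hence $\hat{\mathbf{A}}=\mathbf{A}\mathbf{M}$ with $\mathbf{M}=\mathbf{A}^T\hat{\mathbf{A}}$ simultaneously orthogonal and entrywise nonnegative; and a nonnegative orthogonal matrix must be a permutation. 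Each of these steps is carried out correctly.

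Two of your side remarks deserve emphasis, because they expose real defects in the statement as printed rather than in your argument. First, you are right that some rank assumption is indispensable: with $\mathbf{B}=\hat{\mathbf{B}}=\mathbf{0}$ the hypotheses hold for arbitrary admissible $\mathbf{A},\hat{\mathbf{A}}$ and the conclusion fails, so the proposition implicitly presumes $\mathrm{rank}(\mathbf{X})=R$ (which is the regime in which the paper applies it, with the columns of the factor $\mathbf{W}$ linearly independent). Second, your derived relation $\hat{\mathbf{B}}=\mathbf{B}\mathbf{P}$ is the correct one, and the printed $\hat{\mathbf{B}}=\mathbf{B}\mathbf{P}^T$ is a typo, not a convention: substituting $\hat{\mathbf{A}}=\mathbf{A}\mathbf{P}$ and $\hat{\mathbf{B}}=\mathbf{B}\mathbf{P}^T$ into $\mathbf{A}\mathbf{B}^T=\hat{\mathbf{A}}\hat{\mathbf{B}}^T$ and using full column rank of $\mathbf{B}$ forces $\mathbf{P}^2=\mathbf{I}_R$, which fails for general permutations. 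On this point your own phrasing (``merely a matter of permutation labeling'') is too charitable---no relabeling reconciles the two printed identities unless $\mathbf{P}$ is symmetric---but that is a flaw of the statement, not of your proof; indeed, when the paper invokes this proposition inside the proof of its Proposition~2, it uses exactly the form you derived, namely $\hat{\mathbf{C}}=\mathbf{C}\mathbf{P}$ and $\hat{\mathbf{W}}=\mathbf{W}\mathbf{P}$ with the \emph{same} $\mathbf{P}$ on both factors. One small economy: you assume full column rank of both $\mathbf{B}$ and $\hat{\mathbf{B}}$, but assuming it for one suffices, since $R=\mathrm{rank}(\mathbf{X})\le\mathrm{rank}(\hat{\mathbf{B}})$ then follows automatically.
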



\begin{proposition}
A symmetric BTD of tensor $\mathbfcal{Z}$ in (\ref{TensorModel}), under the constraints: (i) $\mathbf{G}_r^T\mathbf{G}_r=\alpha_r\mathbf{I}_{L_r}$, (ii) $\mathbf{C}^T\mathbf{C}=\mathbf{I}_{R}$, and (iii) $\mathbf{C}\geq \mathbf{0}$, is essentially unique up to permutations of the rank one terms and multiplications of factor bases, $\mathbf{G}_r$, with scaled unitary matrices $\mathbf{Q}_r\in\mathbb{R}^{L_r\times L_r}$.
\end{proposition}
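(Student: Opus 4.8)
The plan is to collapse the symmetric BTD of $\mathbfcal{Z}$ onto a constrained matrix factorization by means of a tensor unfolding, and then to invoke Proposition~1. First I would form the mode-3 unfolding $\mathbf{Z}_{(3)}\in\mathbb{R}^{K\times N^2}$ whose $k$-th row is $\text{vec}\left(\mathbfcal{Z}(:,:,k)\right)^T$. By (\ref{TensorModel}) this unfolding factors as $\mathbf{Z}_{(3)}=\mathbf{C}\mathbf{W}^T$, where the $r$-th column of $\mathbf{W}\in\mathbb{R}^{N^2\times R}$ is $\mathbf{w}_r=\text{vec}\left(\mathbf{G}_r\mathbf{G}_r^T\right)$. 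The crucial observation is that the constraints (ii) $\mathbf{C}^T\mathbf{C}=\mathbf{I}_R$ and (iii) $\mathbf{C}\geq\mathbf{0}$ are exactly the hypotheses that Proposition~1 places on its nonnegative, column-orthonormal factor $\mathbf{A}$, with $\mathbf{W}$ playing the role of the unconstrained factor $\mathbf{B}$.

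Next I would take a second admissible decomposition $(\tilde{\mathbf{G}},\tilde{\mathbf{G}},\tilde{\mathbf{C}})$ of the same tensor, unfold it as $\mathbf{Z}_{(3)}=\tilde{\mathbf{C}}\tilde{\mathbf{W}}^T$, and apply Proposition~1 to the identity $\mathbf{C}\mathbf{W}^T=\tilde{\mathbf{C}}\tilde{\mathbf{W}}^T$. This produces a permutation matrix $\mathbf{P}$ with $\tilde{\mathbf{C}}=\mathbf{C}\mathbf{P}$ and a matching permutation of the columns of $\mathbf{W}$. Reindexing the blocks to absorb $\mathbf{P}$, I may therefore assume $\tilde{\mathbf{c}}_r=\mathbf{c}_r$ and $\text{vec}\left(\tilde{\mathbf{G}}_r\tilde{\mathbf{G}}_r^T\right)=\text{vec}\left(\mathbf{G}_r\mathbf{G}_r^T\right)$, i.e. $\tilde{\mathbf{G}}_r\tilde{\mathbf{G}}_r^T=\mathbf{G}_r\mathbf{G}_r^T$ for every $r$. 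This step already pins down the cluster memberships $\mathbf{C}$ up to permutation and reduces what remains to a per-block statement about the bases.

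It then remains to convert the equality $\tilde{\mathbf{G}}_r\tilde{\mathbf{G}}_r^T=\mathbf{G}_r\mathbf{G}_r^T$ into the claimed relation between $\mathbf{G}_r$ and $\tilde{\mathbf{G}}_r$. Using constraint (i), the nonzero eigenvalues of $\mathbf{G}_r\mathbf{G}_r^T$ coincide with those of $\mathbf{G}_r^T\mathbf{G}_r=\alpha_r\mathbf{I}_{L_r}$, so $\mathbf{G}_r\mathbf{G}_r^T=\alpha_r\mathbf{P}_{\text{col}\left(\mathbf{G}_r\right)}$, and likewise $\tilde{\mathbf{G}}_r\tilde{\mathbf{G}}_r^T=\tilde{\alpha}_r\mathbf{P}_{\text{col}\left(\tilde{\mathbf{G}}_r\right)}$. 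Matching traces gives $\alpha_r=\tilde{\alpha}_r$, and matching the projectors gives $\text{col}\left(\mathbf{G}_r\right)=\text{col}\left(\tilde{\mathbf{G}}_r\right)$. Two orthogonal bases of a common subspace with the same column norm $\sqrt{\alpha_r}$ differ by an orthogonal change of basis, hence $\tilde{\mathbf{G}}_r=\mathbf{G}_r\mathbf{E}_r$ with $\mathbf{E}_r^T\mathbf{E}_r=\mathbf{I}_{L_r}$, which is the asserted (scaled) unitary ambiguity.

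I expect the main obstacle to lie not in these mechanical steps but in ensuring that the hypotheses of Proposition~1 genuinely bite: its conclusion — identifiability of the nonnegative, orthonormal factor — requires the columns $\mathbf{w}_r$, equivalently the subspaces $\text{col}\left(\mathbf{G}_r\right)$, to be pairwise distinct, so that rows of $\mathbf{Z}_{(3)}$ assigned to different blocks cannot be confounded. I would verify (or explicitly assume) this non-degeneracy before invoking Proposition~1; granting it, the unfolding reduction above makes the essential uniqueness of the symmetric BTD of $\mathbfcal{Z}$ follow immediately.
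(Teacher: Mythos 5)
Your proposal is correct and follows essentially the same route as the paper's proof: matricize $\mathbfcal{Z}$ along the third mode, invoke Proposition~1 on $\mathbf{Z}^{(3)}=\mathbf{C}\mathbf{W}^T$ with $\mathbf{W}(:,r)=\mathrm{vec}\left(\mathbf{G}_r\mathbf{G}_r^T\right)$ to get the permutation equivalence of $\mathbf{C}$ and $\mathbf{W}$, and then use constraint (i) to turn the block-wise identity $\tilde{\mathbf{G}}_r\tilde{\mathbf{G}}_r^T=\mathbf{G}_r\mathbf{G}_r^T$ into the scaled-unitary ambiguity (the paper does this by showing $\hat{\mathbf{G}}_i^T\mathbf{G}_j$ is a scaled unitary matrix, which is equivalent to your projector/eigenvalue argument). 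Your closing caveat — that Proposition~1 implicitly needs the columns of $\mathbf{W}$, i.e.\ the subspaces $\mathrm{col}\left(\mathbf{G}_r\right)$, to be pairwise distinct — is a legitimate non-degeneracy point that the paper's own proof passes over silently, so your version is, if anything, slightly more careful.
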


\begin{proof}
    Consider a symmetric BTD of tensor $\mathbfcal{Z}=\left(\mathbf{G}, \mathbf{G}, \mathbf{C}\right)$ that satisfies constraints (i), (ii), and (iii). Let $\mathbf{Z}^{(3)}=\mathbf{C}\mathbf{W}^T$, for $\mathbf{W}(:,r) = \text{vec}\left(\mathbf{G}_r\mathbf{G}_r^T\right)$, be the matricization of $\mathbfcal{Z}$ with respect to its third mode. For any other symmetric BTD of $\mathbfcal{Z}=\left(\hat{\mathbf{G}}, \hat{\mathbf{G}}, \hat{\mathbf{C}}\right)$, we can also get that $\mathbf{Z}^{(3)}=\hat{\mathbf{C}}\hat{\mathbf{W}}^T$, for $\hat{\mathbf{W}}(:,r) = \text{vec}\left(\hat{\mathbf{G}}_r\hat{\mathbf{G}}_r^T\right)$. By Proposition 1, if $\hat{\mathbf{C}}$ satisfies constraints (ii) and (iii), then there exists a permutation matrix, $\mathbf{P}\in\mathbb{R}^{R\times R}$, such that $\hat{\mathbf{C}}=\mathbf{C}\mathbf{P}$ and $\hat{\mathbf{W}}=\mathbf{W}\mathbf{P}$. Therefore, the symmetric BTD of $\mathbfcal{Z}$ is essentially unique up to permutations of its rank one terms. 

    Moreover, we can see that there exists a one-to-one correspondence between the columns of $\hat{\mathbf{W}}$ and $\mathbf{W}$. Let $i,j\in\left[R\right]$ such that $\hat{\mathbf{W}}[:,i]=\mathbf{W}[:,j]$. Then, we get that $\hat{\mathbf{G}}_i\hat{\mathbf{G}}_i^T=\mathbf{G}_j\mathbf{G}_j^T$. Because of constraint (i), we see that $$\alpha_i^2\mathbf{I}_R=\hat{\mathbf{G}}_i^T\mathbf{G}_j\mathbf{G}_j^T\hat{\mathbf{G}}_i=\mathbf{Q}_{ij}\mathbf{Q}^T_{ij}.$$ Hence, matrix $\mathbf{Q}_{ij}=\hat{\mathbf{G}}_i^T\mathbf{G}_j$ is a unitary matrix multiplied by a scalar and $\mathbf{G}_j=\alpha_j^{-1}\hat{\mathbf{G}}_i\mathbf{Q}_{ij}$, which concludes the proof. 
\end{proof}

The second question focuses on tensor $\mathbfcal{P}$ and involves specifying conditions under which a decomposition of $\mathbfcal{P}$ is essentially unique. One may notice that relation (\ref{TensorModel}) hints at a symmetric BTD of $\mathbfcal{P}$. This becomes clear after letting $\bar{\mathbf{C}}=\left[\mathbf{C}, ~\mathbf{I}_K\right]\in\mathbb{R}^{K\times R+K}$, $\bar{\mathbf{G}}_r=\mathbf{G}_r\in\mathbb{R}^{N\times L_r}$ for $r\in\left[R\right]$, and $\bar{\mathbf{G}}_{R+k}=\mathbf{H}_{k}\in\mathbb{R}^{N\times L_{R+k}}$ for $k\in\left[K\right]$, where $L_{R+k}=M_{k}-L_{r_k}$ and $r_k$ corresponds to the index of the nonzero element of $\mathbf{C}(k,:)$. Hence, relation (\ref{TensorModel}) can be rewritten as
\begin{equation}
\mathbfcal{P}(:,:, k) = \sum_{r=1}^{R+K} \bar{\mathbf{c}}_r(k)\bar{\mathbf{G}}_r\bar{\mathbf{G}}_r^T,~~~\forall k\in\left[K\right].
\label{TensorModelII}
\end{equation}
Next, we present sufficient conditions under which a symmetric BTD of $\mathbfcal{P}$ is essentially unique based on Theorem 4.7 in \cite{de2008decompositions}. 

\begin{definition}
\label{def:kruskal_rank_matrix}
The \emph{Kruskal rank} of a matrix $\mathbf{A}$, denoted by $k_{\mathbf{A}}$, is the maximal integer $k$ such that \emph{any} subset of $k$ columns of $\mathbf{A}$ is linearly independent.
\end{definition}

\begin{definition}
\label{def:kruskal_rank_partitioned}
The \emph{Kruskal rank} of a (not necessarily uniformly) partitioned matrix $\mathbf{A}=\left[\mathbf{A}_1,\cdots,\mathbf{A}_T\right]$, denoted by $k'_{\mathbf{A}}$, is defined as the maximal integer $r$ such that \emph{any} set of $r$ submatrices of $\mathbf{A}$ yields a set of linearly independent columns.
\end{definition}

\begin{theorem}
Let $(\bar{\mathbf{G}}, \bar{\mathbf{G}}, \bar{\mathbf{C}})$ represent a symmetric BTD of $\mathbfcal{P}$ in generic rank-($L_r,L_r,1)$ terms, $1\leq r\leq R+K$, satisfying
(a) $\bar{\mathbf{G}}_r^T\bar{\mathbf{G}}_r=\mathbf{I}_{L_r}$ and
    (b) $\bar{\mathbf{C}}\in \left\{0,1\right\}^{K\times R}$ where $\bar{\mathbf{C}}^T\bar{\mathbf{C}}=\text{diag}(\boldsymbol{\alpha})$ for a vector $\boldsymbol{\alpha}\in\mathbb{R}_+^{R+K}$.
Suppose that we have an alternative decomposition of $\mathbfcal{P}$, satisfying (a) and (b), represented by $(\tilde{\mathbf{G}}, \tilde{\mathbf{G}}, \tilde{\mathbf{C}})$, with $k'_{\tilde{\mathbf{G}}}$ maximal under the given dimensionality constraints. If the conditions
\begin{itemize}
    \item[(i)] $N^2\geq \sum_{r=1}^{R} \left(1+\boldsymbol{\alpha}(r)\right)L_{r}^2 + \sum_{k=1}^{K} M_k\left(M_k-2L_{r_k}\right)$
    \item[(ii)] $k'_{\bar{\mathbf{G}}}\geq R+K+1-\frac{1}{2}\min_{r\in\left[R\right]}\boldsymbol{\alpha}(r)$
\end{itemize}
hold, then $(\bar{\mathbf{G}}, \bar{\mathbf{G}}, \bar{\mathbf{C}})$ and $(\tilde{\mathbf{G}}, \tilde{\mathbf{G}}, \tilde{\mathbf{C}})$ are essentially equivalent. 
\end{theorem}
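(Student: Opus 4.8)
The plan is to recognize (\ref{TensorModelII}) as a bona fide symmetric BTD of $\mathbfcal{P}$ in $R+K$ rank-$(L_r,L_r,1)$ terms---its frontal slabs $\mathbf{U}_k\mathbf{U}_k^{T}$ are symmetric, as required---with partitioned basis factor $\bar{\mathbf{G}}=[\bar{\mathbf{G}}_1,\dots,\bar{\mathbf{G}}_{R+K}]$ and third factor $\bar{\mathbf{C}}=[\mathbf{C},\mathbf{I}_K]$, and then to invoke Theorem 4.7 of \cite{de2008decompositions}. The whole argument thus reduces to showing that hypotheses (i) and (ii) are exactly the two structural requirements of that theorem: a dimension (tallness) condition forcing the Kronecker factor $[\bar{\mathbf{G}}_1\otimes\bar{\mathbf{G}}_1,\dots,\bar{\mathbf{G}}_{R+K}\otimes\bar{\mathbf{G}}_{R+K}]$ to have full column rank, and a Kruskal-type inequality on the triple $(\bar{\mathbf{G}},\bar{\mathbf{G}},\bar{\mathbf{C}})$. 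As a preliminary simplification I would use that each view $k$ belongs to exactly one cluster $r_k$ with $L_r=L_{r_k}$ for $k\in\mathbb{I}_r$, so that $\sum_{r}\boldsymbol{\alpha}(r)L_r^2=\sum_k L_{r_k}^2$; inserting this into (i) and using $L_{r_k}^2+M_k^2-2M_kL_{r_k}=(M_k-L_{r_k})^2=L_{R+k}^2$ collapses its right-hand side to $\sum_{j=1}^{R+K}L_j^2$, so that (i) reads simply $N^2\ge\sum_{j=1}^{R+K}L_j^2$.

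The next step interprets this simplified bound. The $N^2\times\sum_j L_j^2$ matrix whose block columns are the $\bar{\mathbf{G}}_j\otimes\bar{\mathbf{G}}_j$ is precisely the object whose injectivity drives the uniqueness proof in \cite{de2008decompositions}, and $N^2\ge\sum_j L_j^2$ is the dimension count necessary for it to be full column rank. I would then invoke the genericity of the bases generated by (\ref{GenModel}) to promote this necessary count to actual full column rank---equivalently, to the maximality of $k'_{\bar{\mathbf{G}}}$ and $k'_{\tilde{\mathbf{G}}}$ asserted in the statement.

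For the Kruskal-type requirement I would compute the ordinary Kruskal rank of $\bar{\mathbf{C}}=[\mathbf{C},\mathbf{I}_K]$. Since distinct cluster-indicator columns of $\mathbf{C}$ have disjoint supports and the columns of $\mathbf{I}_K$ are the standard basis, the only linear dependences arise by writing an indicator column of cluster $r$ as the sum of the $\boldsymbol{\alpha}(r)$ basis columns supported on that cluster; hence the smallest linearly dependent column set has size $\min_{r\in[R]}\boldsymbol{\alpha}(r)+1$, giving $k_{\bar{\mathbf{C}}}=\min_{r\in[R]}\boldsymbol{\alpha}(r)$. Substituting into the BTD Kruskal bound $2k'_{\bar{\mathbf{G}}}+k_{\bar{\mathbf{C}}}\ge 2(R+K)+2$ and solving for $k'_{\bar{\mathbf{G}}}$ reproduces (ii) verbatim. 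With both hypotheses in force, Theorem 4.7 yields essential uniqueness up to the generic BTD ambiguities---a block permutation together with a nonsingular within-block transform of each $\bar{\mathbf{G}}_r$---and I would finish by specializing these, exactly as in the proof of Proposition 2, to the present constraints: orthonormality (a) reduces each within-block transform to an orthogonal matrix acting on $\mathrm{col}(\bar{\mathbf{G}}_r)$, and the binary structure of $\bar{\mathbf{C}}$ pins down the permutation, leaving only a relabeling of the $R+K$ terms. This is the claimed essential equivalence of $(\bar{\mathbf{G}},\bar{\mathbf{G}},\bar{\mathbf{C}})$ and $(\tilde{\mathbf{G}},\tilde{\mathbf{G}},\tilde{\mathbf{C}})$.

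The principal obstacle is not the algebra of (i)--(ii) but the genericity bridge to Theorem 4.7. The factors here are structured rather than arbitrary: $\bar{\mathbf{G}}$ is assembled from orthonormal bases $\mathbf{G}_r$ of the shared subspaces and $\mathbf{H}_k$ of their relative complements, so I must argue that subspaces in general position under (\ref{GenModel}) actually make $[\bar{\mathbf{G}}_1\otimes\bar{\mathbf{G}}_1,\dots]$ full column rank and $k'_{\bar{\mathbf{G}}}$ maximal, rather than merely satisfying the dimension count---the gap between a necessary parameter count and a genuine full-rank statement is exactly where such arguments tend to be delicate. A secondary point to settle is the apparent mismatch between the stated condition $\bar{\mathbf{C}}^{T}\bar{\mathbf{C}}=\mathrm{diag}(\boldsymbol{\alpha})$ and the block-structured Gram matrix of $[\mathbf{C},\mathbf{I}_K]$; I would clarify that (b) constrains the admissible competing factors and confirm its compatibility with the cluster-assignment structure used to evaluate $k_{\bar{\mathbf{C}}}$.
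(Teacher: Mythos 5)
Your proposal is correct and takes essentially the same route as the paper: both reduce the claim to Theorem 4.7 of \cite{de2008decompositions}, obtain (i) from its dimension condition $N^2\geq\sum_{r=1}^{R+K}L_r^2$ via the identity $(M_k-L_{r_k})^2=L_{r_k}^2+M_k\left(M_k-2L_{r_k}\right)$ together with $\sum_{k=1}^{K}L_{r_k}^2=\sum_{r=1}^{R}\boldsymbol{\alpha}(r)L_r^2$, and obtain (ii) from its Kruskal-type condition $2k'_{\bar{\mathbf{G}}}+k_{\bar{\mathbf{C}}}\geq 2(R+K)+2$ by evaluating the Kruskal rank of $\bar{\mathbf{C}}=\left[\mathbf{C},~\mathbf{I}_K\right]$. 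Two minor remarks: your disjoint-support argument yields the equality $k_{\bar{\mathbf{C}}}=\min_{r\in\left[R\right]}\boldsymbol{\alpha}(r)$, which is the direction actually needed for sufficiency (the paper only states $k_{\bar{\mathbf{C}}}\leq\min_{r\in\left[R\right]}\boldsymbol{\alpha}(r)$), and the genericity/maximality ``bridge'' you flag as the principal obstacle is a hypothesis of the theorem statement rather than something the proof must establish.
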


\begin{proof}
The first condition of Theorem 4.7 in \cite{de2008decompositions} translates, in our case, to the following condition
\begin{equation*}
\begin{split}
    N^2&\geq\sum_{r=1}^{R+K}L_r^2\\
    &=\sum_{r=1}^{R} L_{r}^2 + \sum_{k=1}^{K} (M_k-L_{r_k})^2\\
    &=\sum_{r=1}^{R} \left(1+\boldsymbol{\alpha}(r)\right)L_{r}^2 + \sum_{k=1}^{K} M_k\left(M_k-2L_{r_k}\right),
    \end{split}
\end{equation*}
where we used that vector $\boldsymbol{\alpha}$ holds the number of members of each cluster in its first $R$ elements.

The second condition of Theorem 4.7 in \cite{de2008decompositions} translates, in our case, to
\begin{equation*}
\begin{split}
2k'_{\bar{\mathbf{G}}} + k_{\bar{\mathbf{C}}} \geq 2(R+K) + 2.
 \end{split}
\end{equation*}
For the Kruskal's rank of $\bar{\mathbf{C}}$ we have that expressing the $r$-th column of $\mathbf{C}$ requires the linear combination of exactly $\boldsymbol{\alpha}(r)$ columns of the identity matrix. As a result, $k_{\bar{\mathbf{C}}}\leq\min_{r\in\left[R\right]}\boldsymbol{\alpha}(r)$, which leads us to 
$k'_{\bar{\mathbf{G}}}\geq R+K+1-\frac{1}{2}\min_{r\in\left[R\right]}\boldsymbol{\alpha}(r)$.

\end{proof}

The first condition poses some constraints on the dimensions of the partially common subspaces, the dimensions of matrices $\mathbf{X}_k$, and the sizes of the different clusters. For example, in the case where $L_r=L$ and $M_k=M$ for all $r$ and $k$, we get that $N^2\geq RL^2+K(M-L)^2$. For fixed $N, L$, and $M$, we see that this inequality sets a trade-off between the number of clusters and the number of samples for essential uniqueness. The second condition requires, for the extreme case where a cluster has only two members, to have $k'_{\bar{\mathbf{G}}}=R+K$. This implies that matrix $\bar{\mathbf{G}}=\left[\mathbf{G}_{1},\ldots, \mathbf{G}_{R}, \mathbf{H}_{1},\ldots, \mathbf{H}_{K}\right]$ has to be full column rank. In other words, the different subspaces spanned by $\mathbf{G}_r$ or $\mathbf{H}_k$ have to be quite diverse in order to facilitate the unique identification of a cluster of two members. In contrast, an extreme scenario where each pair of matrices $\mathbf{X}_k$ shares a partially common subspace that differs across pairs would inevitably compromise the uniqueness of a symmetric BTD of $\mathbfcal{P}$.


In general, as the minimum number of members increases, the value of $k'_{\bar{\mathbf{G}}}$ that is required by Theorem 1 decreases but slowly. For example, in the case of two clusters where each has $K/2$ members, we get that $k'_{\bar{\mathbf{G}}} \geq 3 +\frac{3}{4}K$, when its maximum value can be $K+2$. Hence, for $K=4$ the requirement becomes $6$ which is the maximum possible, while for $K=8$ the requirement drops to $9$ from the maximum possible of $10$.


\section{Methods}

In this section, we propose an alternating least squares algorithm for solving subproblems of the form presented in (\ref{FitCri}) and (\ref{eq:aug_lagrangian}). In the next subsections, we overview the proposed clustering algorithms and we describe in detail all the update rules that take place during its execution. We also provide a scalable algorithm that performs the corresponding updates using only bases $\mathbf{U}_k$, instead of using the corresponding projection matrices and instantiating tensor $\mathbfcal{P}$, which could be prohibitive for high dimensional matrices. At last, we consider the problems of estimating the number of clusters, the dimensions of the partially ``common" subspaces, and we provide some suggestions on addressing those problems.

\subsection{Proposed Clustering Algorithms}

The proposed algorithm attempts to solve each of the subproblems (for a given value of $\rho_t$ in (\ref{FitCri}) or $\rho_t$ and $\boldsymbol{\Lambda}_t$ in (\ref{eq:aug_lagrangian}) and (\ref{eq:aug_lagrangian_II})) sequentially, while the solution of each subproblem is considered as initialization for the next one. For each subproblem, the presented iterative algorithm performs updates on each factor $\widehat{\mathbf{G}}_r$ sequentially, while factor $\widehat{\mathbf{C}}$ is updated at the end of each round. These factor update rounds are repeated until convergence is reached.

Several conditions can be used as stopping criteria. In our experiments, we check whether the relative change of the cost function (when its evaluation is affordable) or the factor matrices over consecutive iterations does not exceed a predefined small threshold. As for the generation of the sequence $\rho_t$, in our experiments we use the rule $\rho_t = \alpha \rho_{t-1}$, where $\alpha$ takes a value larger than one (typically from $\left[1.01,1.5\right]$), while we assign to $\rho_1$ a small positive value (typically from $\left[10^{-6},1\right]$). For the case of the augmented Lagrangian methods in (\ref{eq:aug_lagrangian}) and (\ref{eq:aug_lagrangian_II}), the dual ascent updates are given by $\boldsymbol{\Lambda}_{t+1} = \boldsymbol{\Lambda}_t + \rho_t\left(\hat{\mathbf{C}}^T\widehat{\mathbf{C}}\right)\ast\mathbf{Q}$ and $\boldsymbol{\Lambda}_{t+1} = \boldsymbol{\Lambda}_t + \rho_t\Psi\left(\widehat{\mathbf{C}}\right)\ast\mathbf{Q}$, respectively, with $\boldsymbol{\Lambda}_0=\mathbf{0}$.
Next, we present the proposed update rules for the factors $\widehat{\mathbf{G}}_r$ and $\widehat{\mathbf{C}}$. 

\subsubsection{Updating factors $\widehat{\mathbf{G}}_r$ for the problems in (\ref{FitCri}) and (\ref{eq:aug_lagrangian})}

There are several options for updating the factors $\widehat{\mathbf{G}}_r$. A straightforward approach is to adopt projected gradient descent schemes that constrain the factors $\widehat{\mathbf{G}}_r$ to lie on the Stiefel manifold, i.e., $\widehat{\mathbf{G}}_r^T \widehat{\mathbf{G}}_r = \mathbf{I}_{L_r}$. However, in our experience, such approaches are quite sensitive and tend to get easily stuck in poor local minima. Next, we present the approach we propose in \cite{karakasis2023clustering}, which relies on a deflation-based scheme.

We begin by considering the matricization of tensor $\mathbfcal{P}$ with respect to to the third mode, $\mathbf{P}^{(3)}\in\mathbb{R}^{K\times N^2}$, defined as \cite{de2008decompositions} 
\begin{equation}
\mathbf{P}^{(3)}(k,:):=\rm{vec}\left(\mathbf{U}_k\mathbf{U}_k^T\right)^T.
\end{equation}
For a given set of factors $\widehat{\mathbf{G}}_1$, $\ldots$, $\widehat{\mathbf{G}}_R$, and $\hat{\mathbf{C}}$, assume that we are interested in updating factor $\widehat{\mathbf{G}}_{r'}$, for some $r'$. After letting 
$$\mathbf{D}^{(3)}_{r'}=\mathbf{P}^{(3)}-\sum_{r=1,r\neq r'}^R \widehat{\mathbf{c}}_r\rm{vec}\left(\hat{\mathbf{G}}_r\hat{\mathbf{G}}_r^T\right)^T,$$
we can get that the problem of interest is 
\begin{equation}
    \begin{split}
    \min_{\widehat{\mathbf{G}}_{r'}} & \left\|\mathbf{D}^{(3)}_{r'}-\hat{\mathbf{c}}_{r'}\rm{vec}\left(\widehat{\mathbf{G}}_{r'}\widehat{\mathbf{G}}_{r'}^T\right)^T\right\|_F^2\\
\text{s.t. }& ~~~~~~~~~~\widehat{\mathbf{G}}_{r'}^T\widehat{\mathbf{G}}_{r'}=\mathbf{I}_{L_{r'}}.
\label{LS}
   \end{split}
\end{equation}

By virtue of the Optimal Scaling Lemma (cf. \cite{bro1998least}, see also \cite{ sidiropoulos2017tensor}), it can be shown that problem (\ref{LS}) is equivalent to
\begin{equation}
    \begin{split}
    \min_{\widehat{\mathbf{G}}_{r'}} & \left\|\widehat{\mathbf{W}}_{r'}-\widehat{\mathbf{G}}_{r'}\widehat{\mathbf{G}}_{r'}^T\right\|_F^2~~~
\text{s.t. }~~~\widehat{\mathbf{G}}_{r'}^T\widehat{\mathbf{G}}_{r'}=\mathbf{I}_{L_{r'}},
\label{LS2}
   \end{split}
\end{equation}
where 
\begin{equation}
\hat{\mathbf{W}}_{r'}=\sum_{k=1}^K\frac{\hat{\mathbf{c}}_{r'}(k)}{\left\|\hat{\mathbf{c}}_{r'}\right\|^2_2}\mathbf{U}_k\mathbf{U}_k^T-\sum_{r=1,r\neq r'}^R \frac{\hat{\mathbf{c}}_{r'}^T\hat{\mathbf{c}}_r}{\left\|\hat{\mathbf{c}}_{r'}\right\|^2_2}\hat{\mathbf{G}}_r\hat{\mathbf{G}}_r^T.
\label{W}
\end{equation}
Notice that, by definition, matrix $\hat{\mathbf{W}}_{r'}$ is always real and symmetric. Hence, since problem (\ref{LS2}) is equivalent to 
\begin{equation}
    \max_{\widehat{\mathbf{G}}_{r'}} ~~~\text{trace}\left(\hat{\mathbf{G}}_{r'}^T\hat{\mathbf{W}}_{r'}\hat{\mathbf{G}}_{r'}\right)~~
\text{s.t. }~~ \hat{\mathbf{G}}_{r'}^T\hat{\mathbf{G}}_{r'}=\mathbf{I}_{L_{r'}},
   \raisetag{20mm}
   \label{UpdAr}
\end{equation}
an optimal solution of (\ref{LS}) is given by the top $L_{r'}$ eigenvectors of matrix $\hat{\mathbf{W}}_{r'}$ \cite{golub2013matrix}. 

Regarding the computational complexity for updating each factor $\hat{\mathbf{G}}_{r'}$, let $M_{\text{max}}=\max_k M_k$ and $L_{\text{max}}=\max_r L_r$. Then, we have that the computation of the corresponding matrix $\hat{\mathbf{W}}_{r'}$ requires $\mathcal{O}\left(N^2\left(KM_{\text{max}}+RL_{\text{max}}\right)+KR\right)$ flops, while computing the eigenvalue decomposition of matrix $\hat{\mathbf{W}}_{r'}$ requires $\mathcal{O}\left(N^3\right)$ flops. 
As a result, the overall computational complexity for updating each factor $\hat{\mathbf{G}}_{r}$ is $$\mathcal{O}\left(N^3+N^2\left(K\max_k M_k+RL_{\text{max}}\right)+KR\right).$$ Although the resulting complexity is linear in the number of views, $K$, and the number of clusters, $R$, we can see that forming matrix $\hat{\mathbf{W}}_{r}$ and computing its eigenvalue decomposition induces the main computational bottleneck of this step. Moreover, for very high-dimensional data, even forming matrix $\hat{\mathbf{W}}_{r}$ becomes challenging. Next, we present a scalable alternative where the formation of matrix $\hat{\mathbf{W}}_{r}$ is avoided by adopting a light-weight iterative approach.

\paragraph{Scalable approach for updating factors $\hat{\mathbf{G}}_r$}

The power method \cite{golub2013matrix} is a classical iterative method for computing the principal eigenvector of a symmetric matrix. When more than one principal eigenvectors are desired, one may employ its extension, known as Orthogonal Iteration \cite{golub2013matrix}. Orthogonal Iteration produces a sequence of orthonormal frames that, under mild conditions, converges to the desired top eigenvectors of a symmetric matrix. Each frame of this sequence is obtained by computing the product of the matrix with the previous frame and then keeping the Q term from its QR decomposition. 

Interestingly, one may notice that in our case, the product $\mathbf{W}_{r'}\mathbf{A}$, for a matrix $\mathbf{A}\in\mathbb{R}^{N\times L_{r'}}$, can be calculated very efficiently, without instantiating $\mathbf{W}_{r'}$, just by exploiting its structure appearing in (\ref{W}). As a result, the complexity of computing each term of the sequence of the Orthogonal Iteration is $\mathcal{O}\left(NL_{r'}\left(KM_{max}+RL_{max}\right)\right)$. We can observe that adopting the Orthogonal Iteration allows us to obtain a scalable algorithm for updating each basis $\hat{\mathbf{G}}_{r'}$, with a complexity that is linear in the number of views $K$, the number of clusters $R$, the maximum number of columns of a view $M_{max}$, and quadratic in the dimensions of the partially common subspaces $L_r$.

\subsubsection{Updating factors $\hat{\mathbf{G}}_r$ for the problem in (\ref{eq:aug_lagrangian_II})}

The factors $\hat{\mathbf{G}}_r$ are unconstrained in the problem formulation of (\ref{eq:aug_lagrangian_II}). As a result, we propose the consideration of first order methods for updating them. Computing the gradient w.r.t. to factor $\hat{\mathbf{G}}_{r'}$ can be done very efficiently as there is structure that can be exploited. Specifically, for a given matrix $\widehat{\mathbf{C}}$, it can be shown that the derivative of $f_{\mathbfcal{P}}$ w.r.t. $\hat{\mathbf{G}}_{r'}$ is given by 
\begin{equation*}
    \frac{\partial f_{\mathbfcal{P}}}{\partial \hat{\mathbf{G}}_{r'}}=2\left[\sum_{s=1}^R\widehat{\mathbf{c}}_s^T\widehat{\mathbf{c}}_{r'}\hat{\mathbf{G}}_s\hat{\mathbf{G}}_s^T-\sum_{k=1}^K\widehat{\mathbf{C}}\left(k,r'\right)\mathbf{U}_k\mathbf{U}_k^T\right]\hat{\mathbf{G}}_{r'},
\end{equation*}
which can be computed with complexity $\mathcal{O}\left(NL_{r'}(KM_{max}+RL_{max})+KR\right)$, and is linear in the number of views $K$, the number of clusters $R$, the maximum number of columns of a view $M_{max}$, and quadratic in the dimensions of the partially common subspaces $L_r$.

\subsubsection{Updating factor $\mathbf{C}$}

In \cite{karakasis2023clustering}, we show that updating the factor $\mathbf{C}$, in the problem formulation of (\ref{FitCri}), can be efficiently performed using the optimal first-order Nesterov-type algorithm for $L$-smooth, $\mu$-strongly convex Nonnegative Matrix Least Squares (NMLS) problems, as proposed in \cite{liavas2017nesterov}. Here, we consider a more general framework for handling the updates of $\mathbf{C}$ for all formulations in (\ref{FitCri}), (\ref{eq:aug_lagrangian}), and (\ref{eq:aug_lagrangian_II}).

Given a set of factors $\widehat{\mathbf{G}}_r$, let $$\widehat{\mathbf{B}} := \left[\text{vec}\left(\widehat{\mathbf{G}}_1\widehat{\mathbf{G}}_1^T\right)\hspace{2mm}\ldots\hspace{2mm}\text{vec}\left(\widehat{\mathbf{G}}_R\hat{\mathbf{G}}_R^T\right)\right]$$
and $$f_{\mathbfcal{P}|\widehat{\mathbf{G}}}(\widehat{\mathbf{C}}):=\frac{1}{2}\left\|\mathbf{P}^{(3)}
-\widehat{\mathbf{C}}\widehat{\mathbf{B}}^T\right\|_F^2.$$ Then, the problem of updating factor $\mathbf{C}$ is the regularized NMLS problem
\begin{equation}
\min_{\widehat{\mathbf{C}}} ~~~~f_{\mathbfcal{P}|\widehat{\mathbf{G}}}(\widehat{\mathbf{C}})~+~\phi(\widehat{\mathbf{C}})~~~\text{s.t.}~~~~~\widehat{\mathbf{C}}\geq \mathbf{0},
\label{UpdC}
\end{equation}
where the function $\phi$ varies depending on the problem formulation under consideration, i.e., (\ref{FitCri}), (\ref{eq:aug_lagrangian}), or (\ref{eq:aug_lagrangian_II}). To tackle this optimization problem, we propose a more abstract approach than the one in \cite{karakasis2023clustering}. Once again, the central idea is to employ optimal first-order methods. 

While computing the gradient of the function $\phi$ analytically can be quite complex, automatic differentiation tools have become increasingly popular for computing gradients of such functions. On the other hand, the gradient of $f_{\mathbfcal{P}|\widehat{\mathbf{G}}}$ can be computed in a straightforward manner. 
Moreover, as we show next, this computation can be performed very efficiently due to exploitable structure that may be overlooked by automatic differentiation methods. Therefore, we propose using automatic differentiation for the gradient of $\phi$, while we derive an efficient custom computation for the gradient of $f_{\mathbfcal{P}|\widehat{\mathbf{G}}}$.

The gradient of $f_{\mathbfcal{P}|\widehat{\mathbf{G}}}$ w.r.t. $\widehat{\mathbf{C}}$ is given by
\begin{equation}
\nabla f_{\mathbfcal{P}|\widehat{\mathbf{G}}}(\widehat{\mathbf{C}}) = \widehat{\mathbf{C}}\,\widehat{\mathbf{B}}^T\widehat{\mathbf{B}} - \mathbf{P}^{(3)}\widehat{\mathbf{B}}.
\end{equation}
The main computational bottleneck lies in evaluating the terms $\widehat{\mathbf{B}}^T\widehat{\mathbf{B}}$ and $\mathbf{P}^{(3)}\widehat{\mathbf{B}}$, which—although expensive—can be computed once and reused throughout the process of updating $\widehat{\mathbf{C}}$. A naive computation of these terms has complexity $\mathcal{O}\left(N^2KR\right)$. However, once they are available, the complexity of computing the gradient $\nabla f_{\mathbfcal{P}|\widehat{\mathbf{G}}}$ per iteration is reduced to $\mathcal{O}\left(KR^2\right)$.

Importantly, there is exploitable structure that can significantly reduce the cost of computing $\widehat{\mathbf{B}}^T\widehat{\mathbf{B}}$. Specifically, we observe that
\begin{equation}
\begin{split}
[\widehat{\mathbf{B}}^T\widehat{\mathbf{B}}](r_1, r_2) 
&= \mathrm{vec}(\widehat{\mathbf{G}}_{r_1}\widehat{\mathbf{G}}_{r_1}^T)^T \mathrm{vec}(\widehat{\mathbf{G}}_{r_2}\widehat{\mathbf{G}}_{r_2}^T) \\
&= \mathrm{trace}(\widehat{\mathbf{G}}_{r_1}^T \widehat{\mathbf{G}}_{r_2} \widehat{\mathbf{G}}_{r_2}^T \widehat{\mathbf{G}}_{r_1}) \\
&= \|\widehat{\mathbf{G}}_{r_1}^T \widehat{\mathbf{G}}_{r_2}\|_F^2.
\end{split}
\end{equation}
Hence, forming the matrix $\widehat{\mathbf{B}}^T\widehat{\mathbf{B}}$ can be accomplished with complexity $\mathcal{O}\left(NR^2 L_{max}^2\right)$, which is linear in $N$. The same holds for the term $\mathbf{P}^{(3)}\widehat{\mathbf{B}}$, which can be computed, without instantiating matrices $\mathbf{P}^{(3)}$ and $\widehat{\mathbf{B}}$, as follows
\begin{equation}
\begin{split}
    [\mathbf{P}^{(3)}\widehat{\mathbf{B}}](k,r) &= \rm{vec}\left(\mathbf{U}_k\mathbf{U}_k^T\right)^T\rm{vec}(\widehat{\mathbf{G}}_{r}\hat{\mathbf{G}}_{r}^T)\\
&=\rm{trace}(\mathbf{U}_k^T\widehat{\mathbf{G}}_{r}\widehat{\mathbf{G}}_{r}^T\mathbf{U}_k)\\
 &=\|\mathbf{U}_k^T\widehat{\mathbf{G}}_{r}\|_F^2,
    \end{split}
\end{equation}
with complexity $\mathcal{O}\left(NKRM_{max}L_{max}\right)$.

Therefore, forming the two necessary terms for computing the gradient $\nabla f_{\mathbfcal{P}|\widehat{\mathbf{G}}}$ entails an overall complexity of $$\mathcal{O}\left(NKRM_{max}L_{max}\right),$$ which is linear in every participating factor. We remind that their computation is required only once at the start of the NMLS algorithm. Given these terms, the complexity of each subsequent gradient computation of $f_{\mathbfcal{P}|\widehat{\mathbf{G}}}$ can be reduced to $\mathcal{O}\left(KR^2\right)$.

\subsection{Determining the number of clusters}

The focus of this subsection is to device a metric for assessing how appropriate is a specific choice of $R$. Evaluating $f_{\mathbfcal{P}}$, defined in (\ref{ProbBasic}), seems a natural choice for this task. Also, it does not necessarily rely on instantiating tensor $\mathbfcal{P}$, as it can be done in an efficient way even for high dimensional matrices (large values of $N$). More specifically, let $(\widehat{\mathbf{G}}^*,\widehat{\mathbf{G}}^*, \widehat{\mathbf{C}}^*)$ be a final solution of the sequence of subproblems in (\ref{FitCri}), (\ref{eq:aug_lagrangian}), or (\ref{eq:aug_lagrangian_II}). We can recover the assignment of each matrix $\mathbf{X}_k$ to one of the $R$ clusters by considering the index of the largest value of the corresponding row of matrix $\widehat{\mathbf{C}}^*$. In the sequel, we denote it as $r_k$. Then, it can be shown that the cost function $f_{\mathbfcal{P}}(\widehat{\mathbf{G}}^*, ~\widehat{\mathbf{C}}^*)$ will be equal to
\begin{equation}
    \sum_{k=1}^KM_k + L_{r_k} - 2\|\mathbf{U}_k^T\widehat{\mathbf{G}}_{r_k}^*\|_F^2.
\end{equation}
In the case where $M_k\neq L_{r_k}$, the term above cannot reach zero, while the different values of $M_k$ makes the use of this metric less intuitive and interpretable. 

Instead, we propose using the following metric that relies on
\begin{equation}
    \phi_k:= 1-\frac{1}{L_{r_k}}\left\|\mathbf{U}_k^T\mathbf{G}_{r_k}\right\|_F^2=\frac{1}{L_{r_k}}\sum_{i=1}^{L_{r_k}}\sin^2\left(\theta_{k,i}\right),
    \label{dist_cl}
\end{equation}
where $\theta_{k,i}$ denotes the $i$-th principal angle between $\text{col}\left(\mathbf{U}_k\right)$ and the assigned subspace $\text{col}\left(\mathbf{G}_{r_k}\right)$. As a result, adding up the corresponding terms of (\ref{dist_cl}) for all matrices $\mathbf{X}_k$, i.e. $\bar{\phi}:=\frac{1}{K}\sum_k \phi_k$, results in an assessment of how successful the overall assignment to the different clusters is. More specifically, the lower the value of $\bar{\phi}$, the better the assignment. Moreover, the proposed metric takes values in the (0, 1) interval, which makes it intuitive, practical, and interpretable. Ideally, the $\bar{\phi}$ metric would decrease for increasing numbers of clusters. On the other hand, if matrices $\mathbf{X}_k$ can be grouped well with $R^*$ clusters, increasing the number of clusters beyond $R^*$ would not result in a significant drop of $\bar{\phi}$. Based on this rationale, we propose solving the sequence of subproblems in (\ref{FitCri}), (\ref{eq:aug_lagrangian}), or (\ref{eq:aug_lagrangian_II}), for increasing values of $R$ and evaluating $\bar{\phi}$ until no significant drop in $\bar{\phi}$ is observed.

\subsection{On the dimensions of the partially common subspaces}

The dimensions of the partially common subspaces, $L_r$, affect the term $\bar{\phi}$. However, unlike what happens when we increase $R$, increasing the values of $L_r$ can result in an increment of $\bar{\phi}$. To see that, let matrix $\mathbf{X}_k$ follow the generative model in (\ref{GenModel}) and participate in the $r$-th  partially common subspace of dimension $L_{r}^*$. Then, in the absence of noise, the evaluation of $\phi_k$ in (\ref{dist_cl}) would give $0$. However, over-modelling its dimension, using $L_{r_k}'>L_{r_k}^*$, will result in a strictly positive $\phi_k$. On the other hand, under-modelling its dimension, using $L_{r_k}'<L_{r_k}^*$, will result in a $\phi_k$ equal to $0$. We conclude that, in the noiseless case and for a fixed $R$, examining high values of $L_r$ that keep $\bar{\phi}$ small is a meaningful criterion for determining the appropriate dimensions of the partially common subspaces.

One may argue that there are at least two main disadvantages in the approach detailed above. First, it requires solving a combinatorial problem, i.e., finding the optimal combination of maximal integer values for the dimensions $L_r$ that minimize $\bar{\phi}$. Evaluating each specific guess involves solving another sequence of problems, the ones appearing in (\ref{FitCri}), (\ref{eq:aug_lagrangian}), or (\ref{eq:aug_lagrangian_II}). Consequently, a brute-force approach would be computationally demanding and therefore impractical. The second disadvantage is that it relies on an overly idealistic assumption - the absence of noise. In the presence of noise, more effective approaches are needed for determining the dimensions $L_r$.
In what follows, we focus on the problem of estimating the dimensions $L_r^*$, together with the corresponding bases of the partially common subspaces $\mathbf{G}_r\in\mathbb{R}^{N\times L_r^*}$, starting from a given solution of (\ref{FitCri}), (\ref{eq:aug_lagrangian}), or (\ref{eq:aug_lagrangian_II}), that may have been obtained using dimensions other than $L_r^*$.

The main idea is the following. Given a solution of $(\ref{FitCri})$, $(\ref{eq:aug_lagrangian})$, or (\ref{eq:aug_lagrangian_II}), we have an assignment of each matrix $\mathbf{X}_k$ to one of the $R$ clusters. Hence, the problem boils down to estimating the dimension of each partially common subspace independently. For the $r'$-th cluster, let $\mathbb{I}_{r'}$ be the set of indices of matrices $\mathbf{X}_k$ that participate in it. Then, for the assignment under consideration and in the absence of noise, the conditionally optimal estimate of dimension $L_{r'}$, $L_{r'}^*$, is the maximal integer value for which the optimal cost value of the MAXVAR problem,
\begin{equation}
\begin{split}
   \min_{\mathbf{G}_{r'}, \left\{\mathbf{Q}_k\right\}_{k\in\mathbb{I}_{r'}}} &\sum_{k\in \mathbb{I}_{r'}}\left\|\mathbf{X}_k\mathbf{Q}_k-\mathbf{G}_{r'}\right\|_F^2\\
   \text{s.t.}~~~~ & ~~~~~\mathbf{G}_{r'}^T\mathbf{G}_{r'} = \mathbf{I}_{L_{r'}},
\end{split}
\end{equation}
is equal to zero. The MAXVAR problem formulation can be written only in terms of the basis, $\mathbf{G}_{r'}$, which leads to the problem of finding the top eigenvectors,
\begin{equation}
\begin{split}
    \max_{\mathbf{G}_{r'}\in\mathbb{R}^{N\times L_{r'}}} & \text{trace}\left(\mathbf{G}_{r'}^T\mathbf{T}_{r'}\mathbf{G}_{r'}\right)~~\text{s.t.}~~\mathbf{G}_{r'}^T\mathbf{G}_{r'} =  \mathbf{I}_{L_{r'}},
\end{split}
\end{equation}
of matrix $$\mathbf{T}_{r'}=\sum_{k\in\mathbb{I}_{r'}}\mathbf{X}_k\left(\mathbf{X}_k^T\mathbf{X}_k\right)^{-1}\mathbf{X}_k^T=\sum_{k\in\mathbb{I}_{r'}}\mathbf{P}_{\text{col}\left(\mathbf{X}_k\right)},$$ as defined in (\ref{Ptensor}). Therefore, estimating the dimension $L_{r'}^*$ is equivalent to estimating the effective rank of matrix $\mathbf{T}_{r'}$. 

Over the years, various criteria have been introduced to estimate the dimension of the signal subspace when noise is present. These include widely used methods such as the Akaike Information Criterion (AIC) \cite{akaike1974new} and the Minimum Description Length (MDL) \cite{rissanen1978modeling, schwarz1978estimating}—for a detailed discussion, see \cite{liavas2002behavior}—as well as more advanced approaches like the ones that rely on ``maximally stable" decompositions into signal and noise subspaces \cite{liavas1999blind} or on obtaining a pair of estimates for $\mathbf{G}_{r'}$ and then measuring the gap between them \cite{karakasis2022multisubject}. What is common to all these criteria is that they rely on having access to the eigendecomposition of matrix $\mathbf{T}_{r'}$. 

For very high dimensions, $N$, computing the eigenvalue decomposition becomes intractable. An alternative that can be applied here when the number of matrices, $\mathbf{X}_k$, participating in $\mathbb{I}_{r'}$ is relatively small (i.e. $N>(\sum_{k\in \mathbb{I}_{r'}}M_k)^2$) relies on the observation presented in \cite{peppas2023harnessing, 10871926}. More specifically, after recalling (\ref{Ptensor}) and letting $$\bar{\mathbf{U}}_{\mathbb{I}_{r'}}=\left[\ldots ~ ~\mathbf{U}_{k\in\mathbb{I}_{r'}}~\ldots\right]\in\mathbb{R}^{N\times \sum_{k\in \mathbb{I}_{r'}}M_k},$$ it can be observed that an analytical decomposition of $\mathbf{T}_{r'}$ can be expressed in terms of matrices $\mathbf{U}_k$, i.e. $$\mathbf{T}_{r'}=\bar{\mathbf{U}}_{\mathbb{I}_{r'}}\bar{\mathbf{U}}_{\mathbb{I}_{r'}}^T.$$ As a result, when $N>(\sum_{k\in \mathbb{I}_{r'}}M_k)^2$, one may consider the singular value decomposition (SVD) of matrix $\bar{\mathbf{U}}_{\mathbb{I}_{r'}}$ for recovering the eigenvalues and eigenvectors of $\mathbf{T}_{r'}$ with a complexity that is linear in $N$, $\mathcal{O}\left(N(\sum_{k\in \mathbb{I}_{r'}}M_k)^2\right)$. Given these, the aforementioned criteria can be used for determining the dimension of the $r$-th partially common subspace, $L_{r'}^*$, as well as its $L_{r'}^*$-th dimensional basis.

A summary of the overall proposed process for determining the dimensions of all partially common subspaces, along with a pseudocode algorithm, can be found in the supplementary material.

\section{Application to Hyperspectral Imaging: Pixel Clustering}

Hyperspectral Imaging (HSI) is valuable across multiple  scientific domains, from geological exploration and marine monitoring to military reconnaissance, medical imaging, and forensics \cite{ghamisi2017advances, cai2019bs, liu2019deep}. Among the various HSI tasks, supervised pixel classification is the most widely used  \cite{liu2019deep, cai2019discriminative}. However, it relies heavily on large amounts of labeled data, which are often scarce due to the potentially high cost of annotation. To overcome this challenge, considerable research has shifted towards HSI clustering, which seeks to uncover the intrinsic structure of the data and assign labels without requiring manual annotation.

In this setting, each pixel can be viewed as a high-dimensional spectral signature, and subspace clustering methods have gained popularity for their effectiveness in modeling such signatures as lying near low-dimensional subspaces. On the other hand, hyperspectral data are often contaminated with noise due to sensor imperfections and atmospheric interference, which can result in significant corruption or outliers in the spectral measurements \cite{zhai2018laplacian}. Traditional subspace models tend to be sensitive to such noise, limiting their effectiveness in challenging real-world scenarios.
To address these challenges, numerous adaptations of subspace-based HSI clustering have been developed. These include methods that integrate spectral and spatial information \cite{zhang2016spectral}, kernel-based approaches that capture nonlinear structures \cite{patel2014kernel, zhai2017kernel}, and techniques that leverage the inherent graph-structured relationships in hyperspectral data \cite{zhai2018laplacian, cai2020graph}.

Assuming that the spectral signature of a pixel is denoted as $\mathbf{x}\in\mathbb{R}_+^N$, the most common generative model for
$\mathbf{x}$ is the classical Linear Mixing Model (LMM) \cite{adams1986spectral, liangrocapart1998mixed, keshava2000algorithm, keshava2002spectral, bioucas2012hyperspectral}, which represents the observed spectrum as a linear combination of $P$ endmember (pure materials in the scene) spectra with corresponding abundance coefficients. This model can be expressed as
\begin{equation}
    \mathbf{x} = \mathbf{M}\boldsymbol{\alpha}+\mathbf{v},
\end{equation}
where $\mathbf{M}\in\mathbb{R}_+^{N\times P}$ holds the $P$ endmember spectra, $\boldsymbol{\alpha}\in\mathbb{R}_+^P$ is the abundance vector
holding the proportions of each endmember, and $\mathbf{v}\in\mathbb{R}^N$ expresses the additive noise, which accounts for various factors such as sensor noise, atmospheric interference, and other types of distortion.

In most cases, it is reasonable to assume that neighboring pixels in an HSI are composed of similar materials and thus have a high probability of belonging to the same land-cover class \cite{fauvel2008spectral, zhang2016spectral}. In the absence of noise, this suggests that for a collection of $S$ such neighboring pixels, the following relation should hold
\begin{equation}
\text{col}\left(\left[\mathbf{x}_1,\cdots,\mathbf{x}_S\right]\right) \subseteq \text{col}\left(\mathbf{M}\right),
\end{equation}
i.e., the subspace spanned by the spectra of these $S$ pixels is contained within the subspace spanned by the endmember signatures in $\mathbf{M}$.

Now, if we further assume that for each neighborhood the matrix of endmember signatures, $\mathbf{M}$, can be decomposed into two parts—one corresponding to the signatures of the class that the neighborhood belongs to, and another capturing the spectra of materials that appear locally but are unrelated to the associated class—then the generative model for a matrix of $S$ neighboring pixel spectra, $\mathbf{X}_{\text{neigh}} \in \mathbb{R}_+^{N \times S}$, coincides with our proposed model in (\ref{GenModel})
\begin{equation}
    \mathbf{X}_{\text{neigh}} = \left[\mathbf{G}_{r\text{-neigh}},~\mathbf{H}_{\text{neigh}}\right] \mathbf{Q}_{\text{neigh}},
\end{equation}
where $\mathbf{G}_{r\text{-neigh}} \in \mathbb{R}^{N \times L_{r\text{-neigh}}}$ denotes an orthonormal basis spanning the endmember spectra associated with the class of the neighborhood, $\mathbf{H}_{\text{neigh}} \in \mathbb{R}^{N \times (S - L_{r\text{-neigh}})}$ denotes an orthonormal basis spanning locally occurring endmember spectra or noise-related factors, and $\mathbf{Q}_{\text{neigh}} \in \mathbb{R}^{S \times S}$ is the mixing matrix, which is a linear transformation of the abundance coefficients for each pixel in $\mathbf{X}_{\text{neigh}}$.

Representing each data point using the center pixel together with its neighboring pixels is a widely adopted strategy in various HSI spectral–spatial classification methods \cite{cai2019bs, fauvel2012advances, he2017recent}. This is typically done by stacking the PCA-compressed spectral signatures of the neighborhood pixels in a sequential manner \cite{cai2020graph}. However, this raises the important question of how to optimally order the features extracted from the neighborhood of a pixel, as the correspondence between neighboring pixels across different neighborhoods is generally ambiguous. Working with the subspace spanned by the spectra of pixels within a neighborhood not only resolves this ambiguity, but also enhances the contributions of weak endmember signatures that might otherwise be difficult to detect. 

Furthermore, our framework allows for the inclusion of spatial information by considering a more general regularization term in the problem formulation~(\ref{FitCri}):
\begin{equation}
\mathrm{tr}\left(\mathbf{Q}\,\widehat{\mathbf{C}}^T\mathbf{W}_{\mathrm{norm}}\mathbf{W}_{\mathrm{norm}}^T\widehat{\mathbf{C}}\right),
\label{Reg}
\end{equation}
where $\mathbf{W} \in \{0, 1\}^{K \times K}$ is a matrix that encodes the neighborhood relationships between the pixels, capturing spatial dependencies, and $\mathbf{W}_{\mathrm{norm}}$ denotes its column-stochastic version. Although its size is $K \times K$, both matrices $\mathbf{W}$ and $\mathbf{W}_{\mathrm{norm}}$ are highly sparse with $KS$ nonzero entries, where $S$ stands for the considered neighborhood size.
Its construction can be performed only once with complexity $\mathcal{O}\left(K\log(K)\right)$. As a result, the resulting term penalizes assignments to more than one clusters on a neighborhood level. Similarly, the term $\widehat{\mathbf{C}}^T\mathbf{W}_{\mathrm{norm}}\mathbf{W}_{\mathrm{norm}}^T\widehat{\mathbf{C}}$ can be also used in the problem formulation of (\ref{eq:aug_lagrangian}), but also in (\ref{eq:aug_lagrangian_II}) after redefining $\Psi$ as 
\begin{equation}
\boldsymbol{\Psi}_{\text{HSI}}\left(\mathbf{C}\right):=\sqrt{\mathbf{C}_{norm}^T\mathbf{W}_{norm}\mathbf{W}_{norm}^T\mathbf{C}_{norm}+\varepsilon}.
\label{Psi_HSI}
\end{equation}

In our experiments, we have observed that when the penalization parameters become very large, the optimization may prefer to leave some samples unassigned in order to resolve emerging conflicts. Such behavior can range from affecting a few samples to many, which is undesirable. To mitigate this issue, we introduce an additional regularization term, also considered in \cite{zhang2016spectral}, defined as
\begin{equation}
h_{\text{HSI}}\left(\mathbf{C}\right) := \left\|\mathbf{C}_{\mathrm{norm}} - \mathbf{W}_{\mathrm{norm}}^T \mathbf{C}_{\mathrm{norm}} \right\|_F^2.
\label{h_HSI}
\end{equation}
This regularization also relies on the column-wise normalized version of $\mathbf{C}$ to ensure scale invariance, thereby avoiding potential conflicts and numerical instability. Its purpose is to discourage solutions where subsets of rows of $\mathbf{C}$ become zero, by promoting homogeneity among neighboring samples that are likely to belong to the same cluster. 

Using a fixed regularization parameter for \( h_{\text{HSI}} \) is typically not effective, as the penalization for assigning a sample to multiple clusters intensifies with the penalty parameter or the augmented Lagrangian method. To counterbalance the growing pressure against \emph{over-assignment}, it becomes necessary to apply the penalty or augmented Lagrangian formulation to \( h_{\text{HSI}} \) as well. In our experiments, we use the augmented Lagrangian of the form
\begin{equation}
    w_{\mathbfcal{P}|\widehat{\mathbf{G}}}\left(\mathbf{C}\right) + \left<\boldsymbol{\Lambda}^{(h)}_t, \left(\mathbf{I}_K-\mathbf{W}_{\mathrm{norm}}^T\right) \mathbf{C}_{\mathrm{norm}}\right> + \frac{\nu_t}{2}h_{\text{HSI}}\left(\mathbf{C}\right),
    \label{aug_lag_III}
\end{equation}
where $w_{\mathbfcal{P}|\widehat{\mathbf{G}}}$ stands for the combination of $f_{\mathbfcal{P}|\widehat{\mathbf{G}}}$ with any other regularization type emerging from the formulations of (\ref{FitCri}), (\ref{eq:aug_lagrangian}), or (\ref{eq:aug_lagrangian_II}).

\section{Experiments}

In this section, we demonstrate the capabilities of the proposed framework by presenting experimental results. To maintain continuity with the theory developed in the previous section, we first apply our method to real-life hyperspectral images for pixel clustering and compare it against several state-of-the-art methods. The maps generated by all methods across all datasets are provided in the supplementary material of this manuscript.


\subsection{Experiments with Real-Life Hyperspectral Images}

To assess the applicability of the proposed model to real hyperspectral data, we consider three widely used datasets: the Salinas and the Indian Pines scenes acquired by the AVIRIS sensor, and the University of Pavia scene collected by the ROSIS sensor. For computational efficiency, we follow the standard practice in the literature \cite{zhai2017kernel, zhang2016spectral, kong2019hyperspectral, zeng2019spectral, pan2019efficient, cai2020graph} and extract smaller subscenes from each dataset for evaluation. Specifically, the selected subscenes are located within the original images at spatial regions $[30{:}115,\ 24{:}94]$ for Indian Pines and $[150{:}350,\ 100{:}200]$ for Pavia University. Salinas A is typically used in its entirety, as it represents a smaller, self-contained subscene of the full Salinas hyperspectral dataset. The characteristics of each dataset can be found in Table \ref{tab:summary}.

We test our proposed theory by considering, for each pixel, the subspace spanned by its $3 \times 3$ neighborhood and performing Subspace Clustering of Subspaces (SCoS) within our framework. From the proposed framework, we consider the problem formulation of (\ref{eq:aug_lagrangian_II}) together with the variation based on $\boldsymbol{\Psi}_{\text{HSI}}$
in (\ref{Psi_HSI}) and the use of $h_{\text{HSI}}$ in (\ref{h_HSI}) as additional regularization. For the minimization of $h_{\text{HSI}}$ we also consider the augmented Lagrangian method emerging from (\ref{aug_lag_III}).

We compare our approach against a range of baseline and state-of-the-art methods. These include standard clustering techniques such as \text{K-Means} and \text{Spectral Clustering}, two classical subspace clustering algorithms—the \text{Elastic Net Subspace Clustering} (ENSC)~\cite{you2016oracle} and the \text{Block Diagonal Representation-based Subspace Clustering} (BDRSC)~\cite{lu2018subspace}—as well as three methods specifically designed for hyperspectral image (HSI) clustering: $\text{S}^4\text{C}$~\cite{zhang2016spectral}, \text{EGCSC}, and its nonlinear kernel-based extension \text{EKGCSC}~\cite{cai2020graph}.

For ENSC, BDRSC, EGCSC, and EKGCSC, we used the official implementations provided by the original authors.\footnote{See: \url{https://github.com/ChongYou/subspace-clustering}, \url{https://github.com/canyilu/Block-Diagonal-Representation-for-Subspace-Clustering}, \url{https://github.com/AngryCai/GraphConvSC}} The BDRSC method produces two matrices, $\mathbf{B}$ and $\mathbf{Z}$, either of which can be used to construct the affinity matrix~\cite{lu2018subspace}. In our experiments, we use the matrix $\mathbf{Z}$, and we refer to this variant as \text{BDR-Z}. For K-Means and Spectral Clustering, we used the implementations provided by the \texttt{scikit-learn} Python package. Finally, the parameters of all models were manually tuned to maximize performance.

Moreover, to highlight the potential of Subspace Clustering of Subspaces (SCoS), we also propose adaptations of classic Spectral Clustering, Spectral Clustering of Subspaces (Spectral CoS). 
Note that EGCSC and EKGCSC utilize pixel representations based on local neighborhood spectra, where the neighborhood size (denoted as $s_r$ below) is a model parameter. For all methods that use an $s_r\times s_r$ neighborhood of pixels to construct a pixel representation, we indicate the $s_r$ value next to the method's name. This implies that for SCoS-based methods using $3 \times 3$ neighborhoods, $s_r = 3$. 

$\text{S}^4\text{C}$ and all SCoS-based methods require the construction of an adjacency matrix—like the one defined in~(\ref{Reg})—to encode neighboring patches. We denote the size of these neighborhoods as $s_a$ (i.e., $s_a\times s_a$ neighborhoods). Additionally, EGCSC and EKGCSC incorporate graph-based information by constructing an adjacency matrix where each pixel is connected to its $s_c$ most correlated pixels in terms of spectral and not locality. 

In Tables~\ref{tab:indian}, \ref{tab:pavia}, and \ref{tab:salinas}, we report the Adjusted Rand Index (ARI), Normalized Mutual Information (NMI), Overall Accuracy (OA), and Average Per-Class Recall (APR) to evaluate clustering performance. APR reflects the average recall across classes, offering a balanced perspective on performance in imbalanced datasets. We also report APR-C$x$, the recall for class $x$, to highlight class-specific clustering accuracy and reveal trends in cluster purity not captured by aggregate metrics. The top three methods per metric are highlighted in bold. Finally, the maps generated by all methods across all datasets are provided in the supplementary material of this manuscript.

The proposed method shows a significant improvement over the state-of-the-art on the Indian Pines dataset. It achieves an ARI of 0.9219, which is notably higher than the best linear competitor EGCSC with $s_r$ = 9 (ARI = 0.6337), corresponding to a 45.5\% relative increase. Compared to K-Means (ARI = 0.3133), this improvement is even more substantial, exceeding 194\% relative gain. The nonlinear method EKGCSC with $s_r$ = 9 reaches an ARI of 0.6829, which, while better than other linear methods, still falls around 26\% below the proposed method’s ARI. Similar trends hold for NMI and OA, where the proposed method scores 0.9188 and 0.9743, respectively, surpassing EGCSC (NMI = 0.6373, OA = 0.8447) and EKGCSC (NMI = 0.6960, OA = 0.8700) by large margins. The APR metric also reflects this trend, with the proposed method scoring 0.9812 compared to EGCSC’s 0.8868 and EKGCSC’s 0.9068. While EKGCSC’s nonlinear approach provides some performance benefits over linear methods like EGCSC, the proposed method’s linear framework outperforms both, delivering more accurate and robust clustering results on Indian Pines, albeit with a higher computational cost (approximately 39 seconds vs. EGCSC’s 12–13 seconds and EKGCSC’s 14–19 seconds).

On the Pavia dataset, the proposed method again demonstrates strong performance, achieving an ARI of 0.9320, an NMI of 0.9029, an OA of 0.9401, and an APR of 0.8342. Compared to the linear baseline EGCSC with $s_r$ = 9, which achieves an ARI of 0.8129 and OA of 0.8442, the proposed method represents a 14.7\% relative increase in ARI and an 11.3\% relative increase in OA. The nonlinear EKGCSC achieves an ARI of 0.9798 and OA of 0.9586, outperforming the proposed method by about 5.1\% in ARI and 2\% in OA. However, the proposed method still holds a strong competitive edge compared to all linear approaches, outperforming K-Means (ARI = 0.4667) and Spectral Clustering (ARI = 0.3052) by nearly 100\% and over 200\% relative gains, respectively. Notably, the proposed method’s average precision rate (APR) of 0.8342 is close to EKCSC’s 0.8453, showing comparable robustness in cluster purity despite the nonlinear advantage of EKCSC. This high accuracy comes at the cost of higher computational time—approximately 100 seconds for the proposed method compared to 27.6 seconds for EGCSC and 47 seconds for EKGCSC—highlighting a tradeoff between accuracy and computational efficiency.

On the Salinas-A dataset, the proposed method achieves perfect scores across all metrics—ARI, NMI, OA, and APR—all equal to 1.0000, indicating flawless clustering performance. This marks a minor improvement over the linear EGCSC method with $s_r$=9, which attains an ARI of 0.9981 and OA of 0.9993. Compared to nonlinear EKGCSC with the same parameter setting, which also reaches perfect scores, the proposed method matches the state-of-the-art nonlinear accuracy but with better computational efficiency, taking 27.7 seconds versus EKGCSC’s 55.7 seconds, a roughly 50\% reduction in time. The proposed approach vastly outperforms classic linear baselines such as K-Means (ARI = 0.7611) and Spectral Clustering (ARI = 0.7831), with relative improvements of approximately 31-32\% in ARI.

\begin{table}[ht]
\centering
\resizebox{0.5\textwidth}{!}{
\label{tab:dataset_summary}
\begin{tabular}{cccc}
\hline
Datasets & SalinasA & Indian Pines & Pavia University \\
\hline
Pixels   & $83 \times 86$   & $85 \times 70$    & $140 \times 150$ \\
Channels & 204              & 200               & 103              \\
Clusters & 6                & 4                 & 8                \\
Samples  & 5348             & 4391              & 6445             \\
Sensor   & AVIRIS           & AVIRIS            & ROSIS            \\
\hline
\end{tabular}
}
\caption{Summary of SalinasA, Indian Pines, and Pavia University datasets.}
\label{tab:summary}
\end{table}

\begin{table*}[ht]
\centering
\resizebox{1\textwidth}{!}{
\scriptsize
\begin{tabular}{|c|c|c|c|c|c|c|c|c|c|c|}
\hline
Method & ARI & NMI & OA & APR & APR C1 & APR C2 & APR C3 & APR C4 & time (s) \\
\hline
Proposed SCoS ($s_r=3$, $s_a=3$)           & \textbf{0.9219} & \textbf{0.9188} & \textbf{0.9743} & \textbf{0.9812} & 0.9656 & \textbf{1}       & \textbf{1}        & \textbf{0.9594} & 38.987  \\
ENSC                & 0.4548 & 0.4867 & 0.7498 & 0.7657 & 0.8469 & \textbf{0.9944}  & 0.9802 & 0.7277 & 11.7159 \\
BDR-Z               & 0.4192 & 0.4688 & 0.7181 & 0.7459 & 0.9275  & 0.9565  & 0.4325   & 0.6670 & 52.4709  \\
K-Means             & 0.3133 & 0.4304 & 0.6284 & 0.6994 & 0.8738  & 0.9264  & 0.3158   & 0.6818 & 0.962   \\
Spectral Clustering & 0.4616 & 0.4950 & 0.7566 & 0.7644 & 0.8365  & 0.9347  & 0.5578   & 0.7284 & 1.976   \\
S4C                 & 0.3809 & 0.4517 & 0.7045 & 0.7231 & 0.8039  & 0.9466  & 0.4208   & 0.7209 & 1.998   \\
Spectral CoS                & 0.3241 & 0.4162 & 0.6591 & 0.6882 & 0.4805  & 0.9847  & 0.6046   & 0.6830 & 2.9591   \\
EGCSC ($s_r=3$, $s_c=30$)           & 0.3222 & 0.4397 & 0.6566 & 0.5914 & \textbf{1}       & 0.7775  & 0        & 0.5880 & 12.971  \\
EKGCSC ($s_r=3$, $s_c=30$)          & 0.4264 & 0.5711 & 0.7167 & 0.6787 & 0.9712  & \textbf{0.9877}  & 0.1287   & 0.6273 & 19.056  \\
EGCSC ($s_r=9$, $s_c=30$)          & \textbf{0.6337} & \textbf{0.6373} & \textbf{0.8447} & \textbf{0.8868} & \textbf{0.9765}  & 0.7636  & \textbf{0.9981}   & \textbf{0.8089} & 12.465  \\
EKGCSC ($s_r=9$, $s_c=30$)         & \textbf{0.6829} & \textbf{0.6960} & \textbf{0.8700} & \textbf{0.9068} & \textbf{0.9725}  & 0.8391  & \textbf{0.9929}   & \textbf{0.8226} & 14.434  \\
\hline
\end{tabular}
}
\caption{Clustering performance metrics across methods on Indian Pines Dataset. Bold values indicate the top three performing methods for each metric. The parameters $s_{r}$ and $s_a$ refer to neighborhood sizes used in the representation and clustering purposes. Specifically, $s_r$ denotes the size of the $n \times n$ pixel neighborhood used to construct each pixel's representation, while $s_a$ refers to the patch size $n \times n$ of the neighborhood employed in building the adjacency matrix. $s_{\text{neigh}}$ is also used for the construction of adjacency matrices based on highly correlated pixels.} 
\label{tab:indian}
\end{table*}

\begin{table*}[ht]
\centering
\resizebox{1\textwidth}{!}{
\scriptsize
\setlength{\tabcolsep}{3pt}
\begin{tabular}{|c|c|c|c|c|c|c|c|c|c|c|c|c|c|c|}
\hline
Method & ARI & NMI & OA & APR & APR C1 & APR C2 & APR C3 & APR C4 & APR C5 & APR C6 & APR C7 & APR C8 & time (s) \\
\hline
 Proposed SCoS ($s_r=3$, $s_a=5$) & \textbf{0.9320} & \textbf{0.9029} & \textbf{0.9401} & \textbf{0.8342} & \textbf{0.9487} & \textbf{0.9783} & \textbf{1.0000} & \textbf{1.0000} & \textbf{0.9778} & 0.7685 & 0 & \textbf{1.0000} & 100.492 \\
ENSC & 0.4109 & 0.4914 & 0.6251 & 0.4514 & 0.1645 & 0.5161 & 0.0992 & 0.9953 & 0.8829 & 0.4666 & 0 & 0.4866 & 7.23408 \\
BDR-Z & 0.5445 & 0.6416 & 0.6939 & 0.5886 & 0.4047 & 0.5603 & 0 & 0.9924 & 0.9006 & 0.8917 & 0.1001 & 0.8585 & 34.8317 \\
K-Means & 0.4667 & 0.5664 & 0.6583 & 0.5512 & \textbf{0.3780} & 0.4173 & 0 & 0.9575 & 0.8029 & \textbf{0.8357} & \textbf{0.0972} & 0.9214 & 0.9676 \\
Spectral Clustering & 0.3052 & 0.5359 & 0.5105 & 0.4627 & 0.0431 & 0.3600 & 0.0778 & 0.9957 & 0.7383 & 0.5679 & 0 & 0.9186 & 2.8449 \\
S4C & 0.4831 & 0.5965 & 0.6416 & 0.5658 & 0.2664 & 0.5980 & 0 & 1.0000 & 0.8575 & \textbf{0.7860} & \textbf{0.1074} & 0.9144 &  9.98565 \\
Spectral CoS & 0.3179 & 0.4049 & 0.4922 & 0.4344 & 0.1838 & 0.5436 & 0.0030 & 0.9759 & 0.8566 & 0.7615 & 0.0202 & 0.1310 &  12.2659  \\
EGCSC ($s_r=9$, $s_c=30$) & \textbf{0.8129} & \textbf{0.8401} & \textbf{0.8442} & \textbf{0.6640} & 0 & \textbf{0.6888} & \textbf{1.0000} & \textbf{1.0000} & \textbf{0.9996} & 0.6280 & 0 & \textbf{0.9953} & 27.6136 \\
EKGCSC ($s_r=9$, $s_c=30$) & \textbf{0.9798} & \textbf{0.9488} & \textbf{0.9586} & \textbf{0.8453} & \textbf{0.8343} & \textbf{0.9808} & \textbf{1.0000} & \textbf{1.0000} & \textbf{1.0000} & \textbf{0.9471} & 0 & \textbf{1.0000} & 47.0074 \\
\hline
\end{tabular}}
\caption{Clustering performance metrics across methods on University of Pavia Dataset. Bold values indicate the top three performing methods for each metric. The parameters $s_r$ and $s_a$ refer to neighborhood sizes used in the representation and clustering purposes. Specifically, $s_r$ denotes the size of the $n \times n$ pixel neighborhood used to construct each pixel's representation, while $s_a$ refers to the patch size $n \times n$ of the neighborhood employed in building the adjacency matrix. $s_c$ is also used for the construction of adjacency matrices based on highly correlated pixels.}
\label{tab:pavia}
\end{table*}

\begin{table*}[h]
\centering
\resizebox{1\textwidth}{!}{
\scriptsize
\begin{tabular}{|c|c|c|c|c|c|c|c|c|c|c|c|}
\hline
{Method} & {ARI} & NMI & {OA} & {APR} & {APR C1} & {APR C2} & {APR C3} & {APR C4} & {APR C5} & {APR C6} & {time (s)} \\
\hline
Proposed SCoS  ($s_r=3$, $s_a=3$)  & \textbf{1.0000} & \textbf{1.0000} & \textbf{1.0000} & \textbf{1.0000} & \textbf{1.0000} & \textbf{1.0000} & \textbf{1.0000} & \textbf{1.0000} & \textbf{1.0000} & \textbf{1.0000} & 27.6474 \\
ENSC                & 0.7629 & 0.8319 & 0.8364 & 0.8887 & 1 & 0.9928 & 0.4251 & 0.9794 & 0.9924 & 0.9427 & 35.4402 \\
BDR-Z               & 0.7699 & 0.8363 & 0.8743 & 0.9079 & 1 & 0.9937 & 0.9909 & 0.9299 & 0.5407 & 0.9925 & 136.524 \\
K-Means             &  0.6669 & 0.7403  & 0.7061 & 0.6697 & 0 & 1 & 0.4251 & 0.9501 & 0.9978 & 0.6450 & 1.08062 \\
Spectral Clustering & 0.7831 & 0.8668 & 0.8435 & 0.8967 & \textbf{1.0000} & 0.9928 & 0.4273 & 0.9726 & 0.9985 & 0.9888 & 2.8153 \\
S4C                 & 0.7286 & 0.7964 & 0.8108 & 0.8704 & 0.9773 & \textbf{1.0000} & 0.4203 & 0.9356 & 1.0000 & 0.8891 & 3.3157 \\
Spectral CoS                & 0.3808 & 0.4596 & 0.5611 & 0.5258 & 0.4850 & 0.9172 & 0.3241 & 0.5978 & 0.7436 & 0.5978 & 4.6709 \\
EGCSC ($s_r=3$, $s_{\text{neigh}}=30$) & 0.7858 & 0.8623 & 0.8702 & 0.9000 & \textbf{1.0000} & \textbf{1.0000} & 0.8745 & 0.9967 & 0.5286 & \textbf{1.0000} & 16.2556  \\
EKGCSC ($s_r=3$, $s_c=30$) & 0.7849 & 0.8600 & 0.8725 & 0.9002 & \textbf{1.0000} & \textbf{1.0000} & 0.8699 & 0.9993 & 0.5345 & 0.9975 & 20.9916 \\
EGCSC ($s_r=9$, $s_c=30$) & \textbf{0.9981} & \textbf{0.9971} & \textbf{0.9993} & \textbf{0.9992} & \textbf{1.0000} & \textbf{1.0000} & \textbf{1.0000} & \textbf{1.0000} & \textbf{1.0000} & 0.9950 & 20.0051 \\
EKGCSC ($s_r=9$, $s_c=30$) & \textbf{1.0000} & \textbf{1.0000} & \textbf{1.0000} & \textbf{1.0000} & \textbf{1.0000} & \textbf{1.0000} & \textbf{1.0000} & \textbf{1.0000} & \textbf{1.0000} & \textbf{1.0000} & 32.2392 \\
\hline
\end{tabular}
}
\caption{Clustering performance metrics across methods on Salinas-A Dataset. Bold values indicate the top three performing methods for each metric. The parameters $s_r$ and $s_a$ refer to neighborhood sizes used in the representation and clustering purposes. Specifically, $s_r$ denotes the size of the $n \times n$ pixel neighborhood used to construct each pixel's representation, while $s_a$ refers to the patch size $n \times n$ of the neighborhood employed in building the adjacency matrix. $s_c$ is also used for the construction of adjacency matrices based on highly correlated pixels. }\label{tab:salinas}
\end{table*}

\section{Conclusions}
In this paper, we proposed Subspace Clustering of Subspaces, where the goal is to cluster a collection of tall matrices based on their column spaces. We established connections—and highlighted key differences—between this problem and the well-studied areas of Subspace Clustering and Generalized Canonical Correlation Analysis (GCCA). Our approach is grounded in the classic Block Term tensor Decomposition (BTD) formulation, along with an effective optimization algorithm for learning both the cluster memberships and the partially ``common" subspaces shared across clusters. Leveraging the connection to tensor decomposition, we also provided the first identifiability guarantees for this setting. To ensure scalability, we developed efficient algorithms suitable for large-scale datasets. We demonstrated the applicability of the proposed framework in hyperspectral imaging, specifically for pixel clustering. Through extensive experiments on both real-world hyperspectral and synthetic datasets, we compared our method against several state-of-the-art subspace clustering techniques. Finally, we showed that our approach achieves superior performance and robustness, even in the presence of significant noise and interference.

\bibliographystyle{IEEEtran}
\bibliography{IEEEabrv,refrences}

\section*{Supplementary Material}






\title{Supplementary Material of Subspace Clustering of Subspaces: Unifying Canonical Correlation Analysis and Subspace Clustering}

\author{Paris~A.~Karakasis,~\IEEEmembership{Graduate Student Member,~IEEE} and Nicholas~D.~Sidiropoulos,~\IEEEmembership{Fellow,~IEEE}%
\thanks{Paris~A.~Karakasis and Nicholas~D.~Sidiropoulos are with the Department of Electrical and Computer Engineering, University of Virginia, Charlottesville, VA 22904 USA (e-mail: $\left\{\text{karakasis,~nikos}\right\}$@virginia.edu).}
}





\section{Determining the Dimensions of the Partially Common Subspaces-Algorithmic Description}

In this section, we provide a detailed algorithmic description of the procedure proposed to estimate the dimensions and bases of the partially common subspaces introduced in the main paper. As discussed therein, given a fixed assignment of data matrices to clusters, the estimation of each subspace can be performed independently. This supplement formalizes the computational strategy for doing so efficiently, depending on the ambient dimension $N$ and the dimensions of the participating matrices.

The key quantity that must be computed for each cluster is the set of eigenvalues of the matrix $\mathbf{T}_r$. As explained in the main manuscript, this can be accomplished either by explicitly forming $\mathbf{T}_r$ and computing its eigendecomposition, or by exploiting the potential low-rank structure of $\mathbf{T}_r$. The latter is achieved by stacking the basis matrices $\mathbf{U}_k$, for all $k \in \mathbb{I}_r$, side by side, and then computing the singular value decomposition (SVD) of the resulting matrix. Once the eigenvalues of $\mathbf{T}_r$ are available, the dimension $L_r^*$ of the $r$-th partially common subspace can be estimated using established model selection techniques. These include the Akaike Information Criterion (AIC), the Minimum Description Length (MDL), or other approaches such as the ones proposed in \cite{liavas1999blind} and \cite{karakasis2022multisubject}.

Algorithm~1 summarizes the proposed procedure for estimating the values $L_r^*$ for all clusters, given a fixed assignment of samples to clusters.

\begin{algorithm}
\caption{Dimension and Basis Estimation via Projectors or Low Dimensional Bases Stacking}
\begin{algorithmic}[1]
\Input Matrices $\{\mathbf{U}_k \in \mathbb{R}^{N \times M_k}\}_{k=1}^K$ and assignment vector $\mathbf{z} \in [R]^K$
\Output Estimated dimensions $L_r^*$ and bases $\mathbf{G}_r \in \mathbb{R}^{N \times L_r^*}$ for all $r \in [R]$

\For{each cluster $r = 1$ to $R$}
    \State $\mathbb{I}_r \gets \{k \in [K] ~|~ \mathbf{z}_k = r\}$
    \State $S_r \gets \sum_{k \in \mathbb{I}_r} M_k$
    
    \If{$N > S_r^2$} \Comment{Low-rank case — use stacking}
        \State Form $\bar{\mathbf{U}}_{\mathbb{I}_r} = [\ldots~\mathbf{U}_k~\ldots]_{k \in \mathbb{I}_r} \in \mathbb{R}^{N \times S_r}$
        \State Compute SVD of $\bar{\mathbf{U}}_{\mathbb{I}_r} = \mathbf{U}_r \boldsymbol{\Sigma}_r \mathbf{V}^T_r$
        \State $\boldsymbol{\Lambda}_r=\boldsymbol{\Sigma}_r^2$
    \Else \Comment{Use sum of projectors}
        \State $\mathbf{T}_r \gets \sum_{k \in \mathbb{I}_r} \mathbf{U}_k \mathbf{U}_k^T$
        \State Compute eigendecomposition: $\mathbf{T}_r = \mathbf{U}_r \boldsymbol{\Lambda}_r \mathbf{U}_r^T$
    \EndIf

    \State Estimate $L_r^*$ from $\boldsymbol{\Lambda}_r$ using model selection (e.g., AIC, MDL, \cite{liavas1999blind}, \cite{karakasis2022multisubject})
    \State Set $\mathbf{G}_r \gets$ first $L_r^*$ columns of $\mathbf{U}_r$
\EndFor
\end{algorithmic}
\end{algorithm}

\section{Segmentation Maps from Hyperspectral Image Pixel Clustering}

This section presents the segmentation maps produced by all evaluated methods on the three hyperspectral datasets. In Figs. \ref{fig:seg_maps_indian_pines}, \ref{fig:seg_maps_univ_pavia}, and \ref{fig:seg_maps_salinas}, the reader may find the segmentation maps for the Indian Pines, University of Pavia, and Salinas-A datasets, respectively. In all cases, our method based on Subspace Clustering of Subspaces produces segmentations of high clarity and coherence—closely matching the ground truth in the Indian Pines and University of Pavia datasets, and achieving perfect clustering of the pixels in the Salinas-A dataset.

\begin{figure*}[htbp]
  \centering
  \includegraphics[trim=129pt 37pt 80pt 10pt, clip, width=0.25\linewidth]{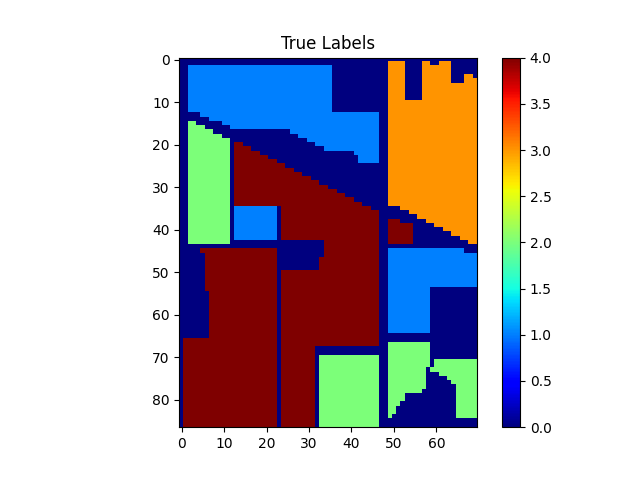}\hspace{-6mm}
  \includegraphics[trim=129pt 37pt 80pt 10pt, clip, width=0.25\linewidth]{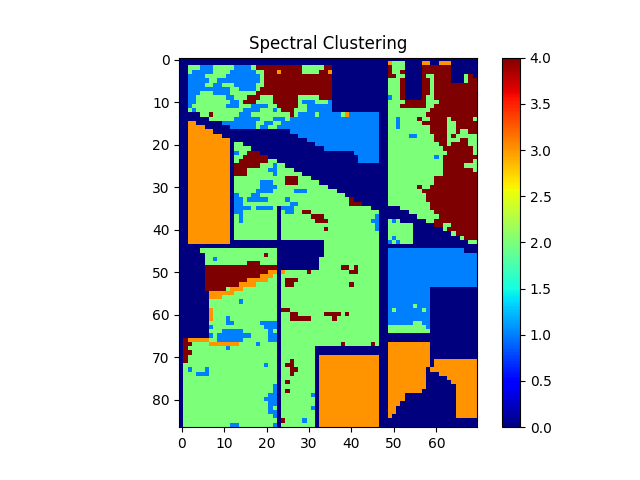}\hspace{-6mm}
  \includegraphics[trim=129pt 37pt 80pt 10pt, clip, width=0.25\linewidth]{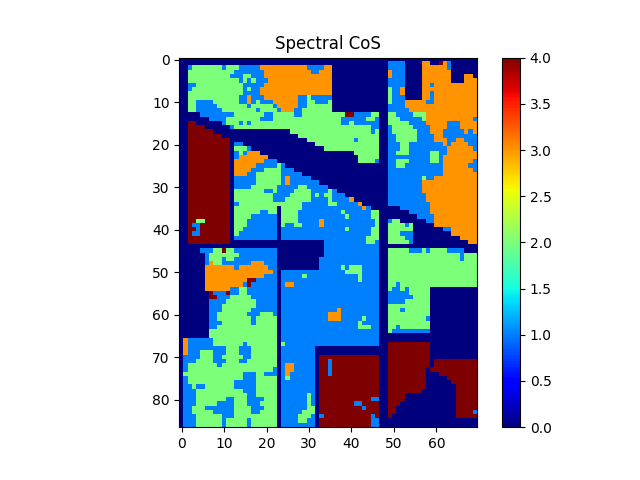}\hspace{-6mm}
  \raisebox{-0.4mm}{\includegraphics[trim=129pt 34pt 20pt 10pt, clip, width=0.309\linewidth]{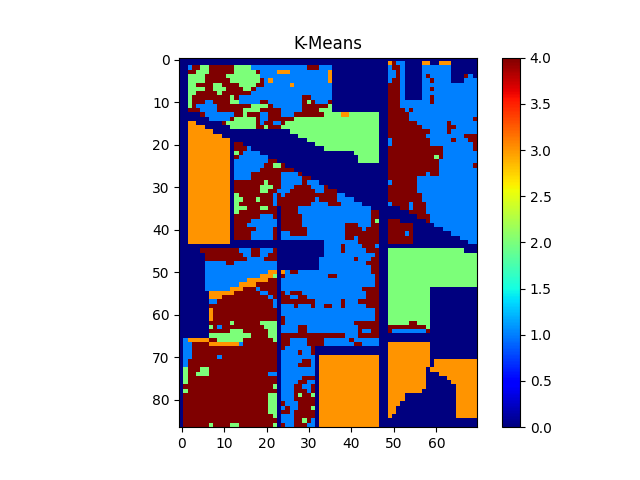}}

  \vspace{1mm}

  \includegraphics[trim=129pt 37pt 80pt 10pt, clip, width=0.25\linewidth]{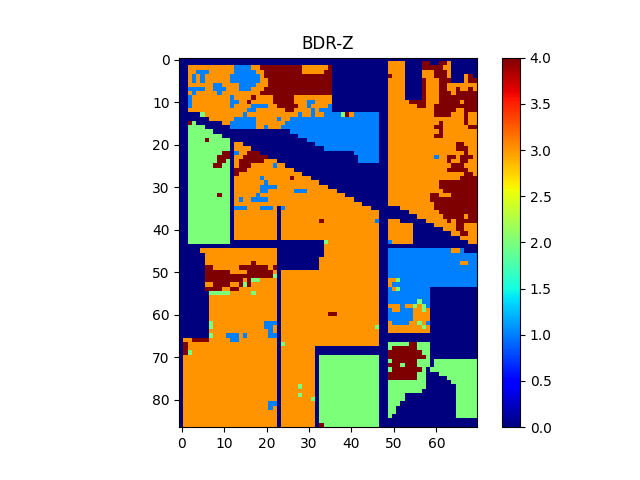}\hspace{-6mm}
  \includegraphics[trim=129pt 37pt 80pt 10pt, clip, width=0.25\linewidth]{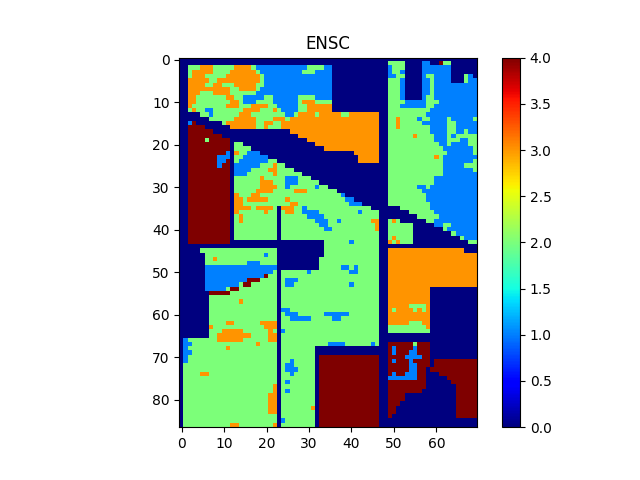}\hspace{-6mm}
  \includegraphics[trim=129pt 37pt 80pt 10pt, clip, width=0.25\linewidth]{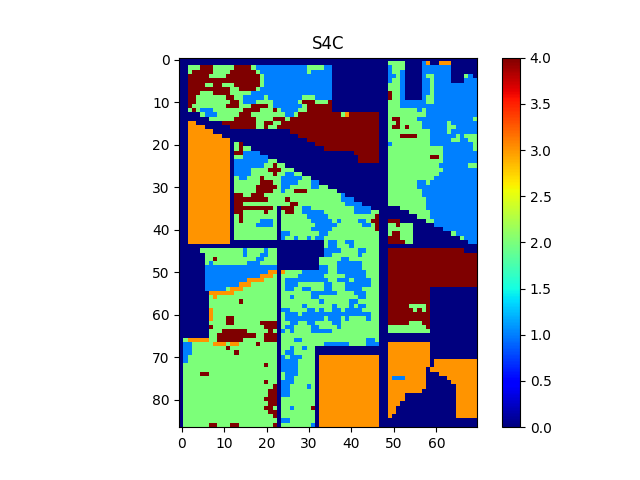}\hspace{-6mm}
   \raisebox{-0.5mm}{\includegraphics[trim=129pt 33pt 20pt 10pt, clip, width=0.308\linewidth]{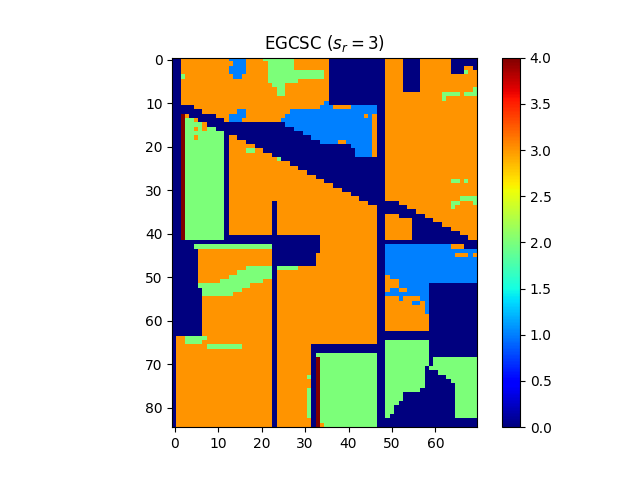}}

  \vspace{1mm}

  \includegraphics[trim=129pt 37pt 80pt 10pt, clip, width=0.25\linewidth]{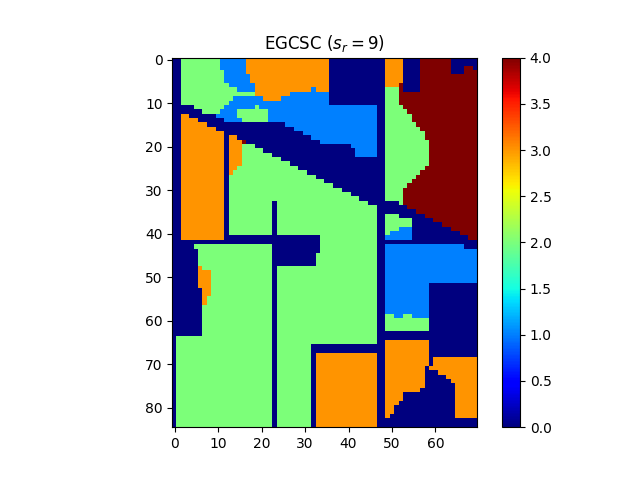}\hspace{-6mm}
  \includegraphics[trim=129pt 37pt 80pt 10pt, clip, width=0.25\linewidth]{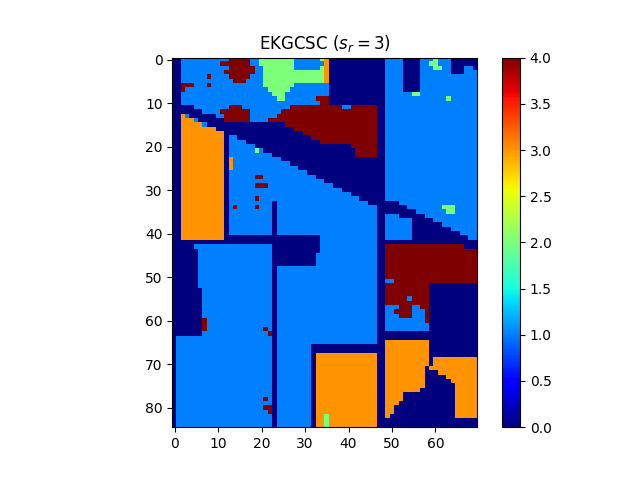}\hspace{-6mm}
  \includegraphics[trim=129pt 37pt 80pt 10pt, clip, width=0.25\linewidth]{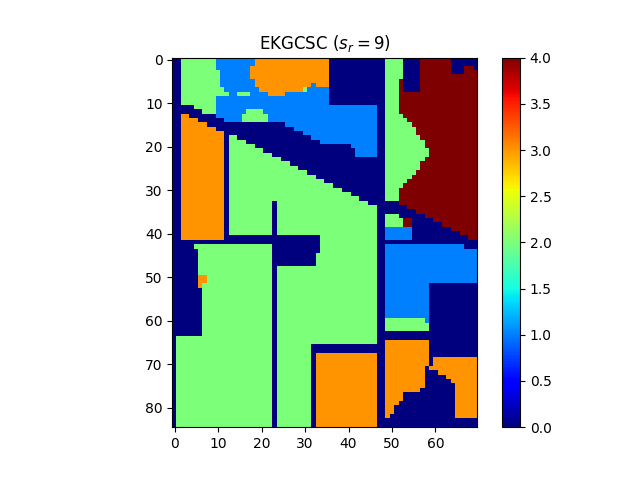}\hspace{-6mm}
   \raisebox{-0.5mm}{\includegraphics[trim=129pt 33pt 20pt 10pt, clip, width=0.308\linewidth]{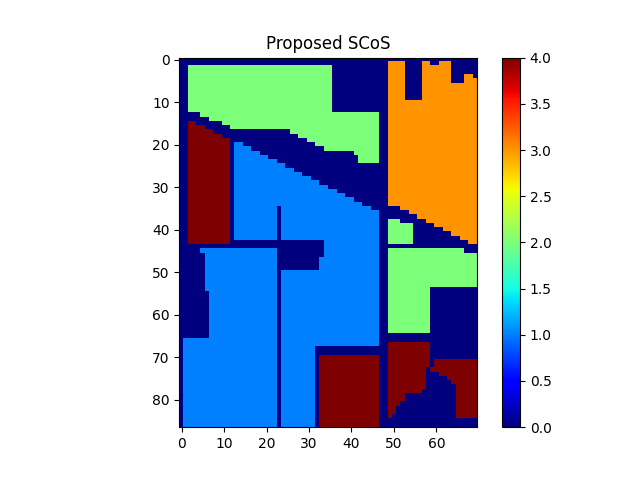}}
  
  \caption{Segmentation maps of the Indian Pines dataset produced by all considered methods.}
  \label{fig:seg_maps_indian_pines}
\end{figure*}

\begin{figure*}[htbp]
  \centering
  \includegraphics[trim=210pt 37pt 90pt 10pt, clip, width=0.21\linewidth]{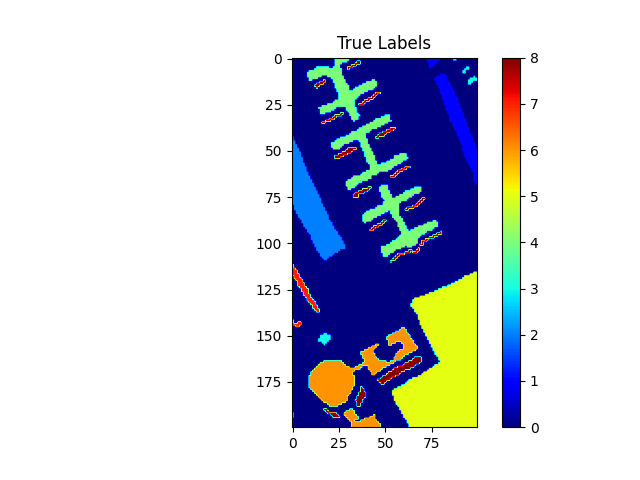}\hspace{-7mm}
  \includegraphics[trim=210pt 37pt 90pt 10pt, clip, width=0.21\linewidth]{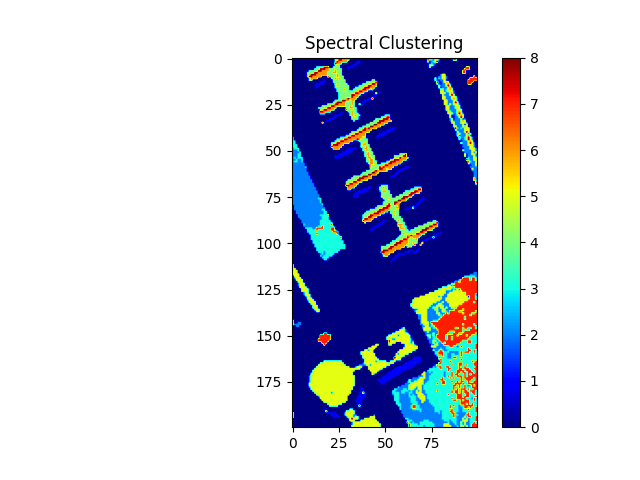}\hspace{-7mm}
  \includegraphics[trim=210pt 37pt 90pt 10pt, clip, width=0.21\linewidth]{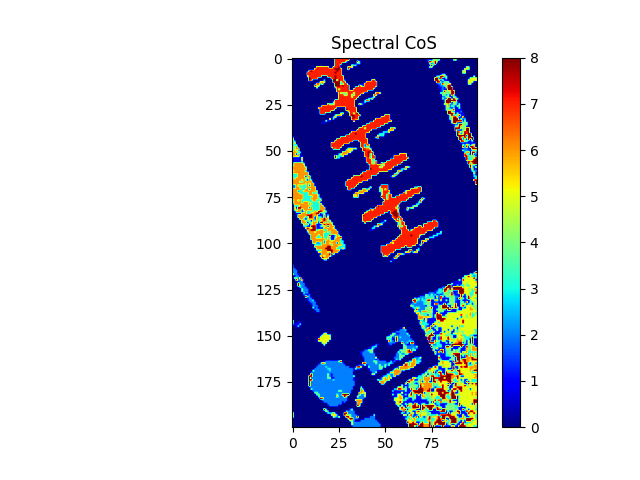}\hspace{-7mm}
  \includegraphics[trim=210pt 37pt 90pt 10pt, clip, width=0.21\linewidth]{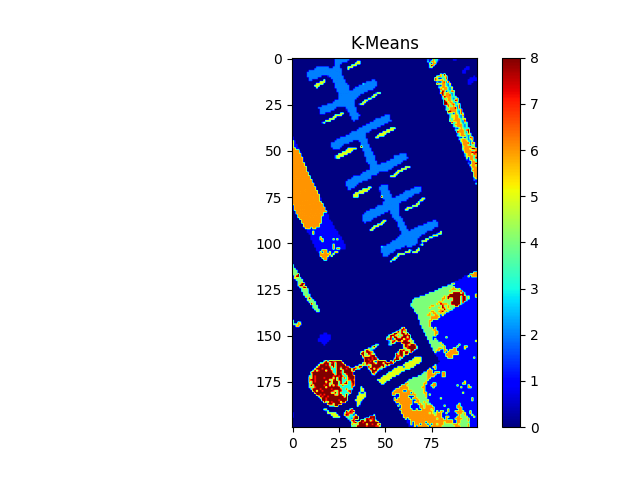}\hspace{-7mm}
 \raisebox{-0.75mm}{\includegraphics[trim=210pt 33pt 40pt 10pt, clip, width=0.274\linewidth]{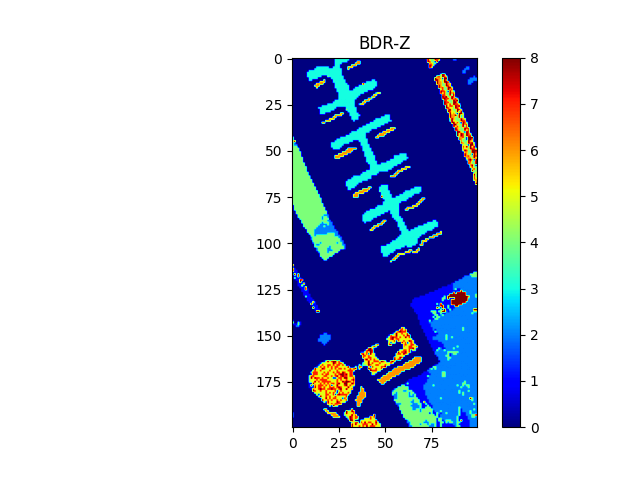}}
 
  \vspace{1mm}

  \includegraphics[trim=210pt 37pt 90pt 10pt, clip, width=0.21\linewidth]{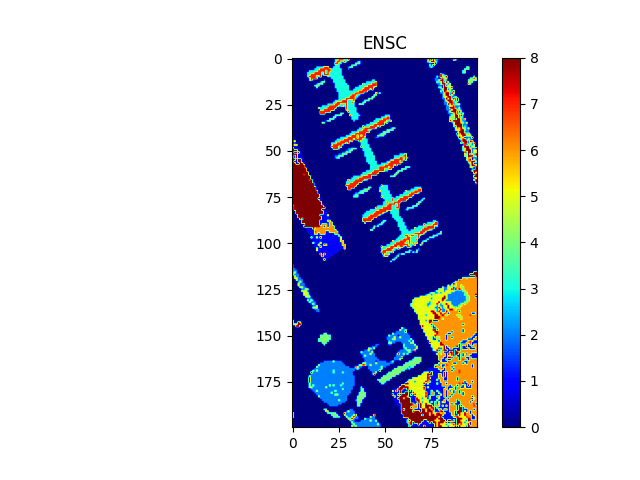}\hspace{-7mm}
  \includegraphics[trim=210pt 37pt 90pt 10pt, clip, width=0.21\linewidth]{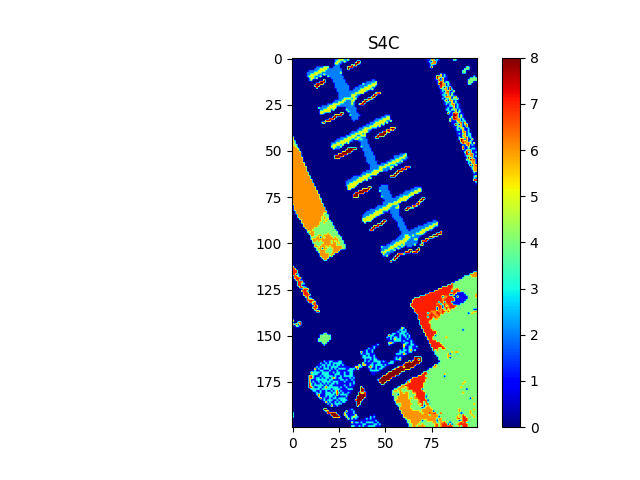}\hspace{-7mm}
  \includegraphics[trim=210pt 37pt 90pt 10pt, clip, width=0.21\linewidth]{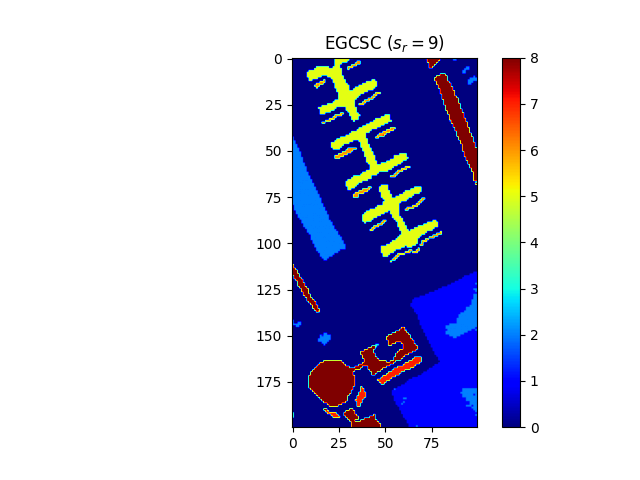}\hspace{-7mm}
  \includegraphics[trim=210pt 37pt 90pt 10pt, clip, width=0.21\linewidth]{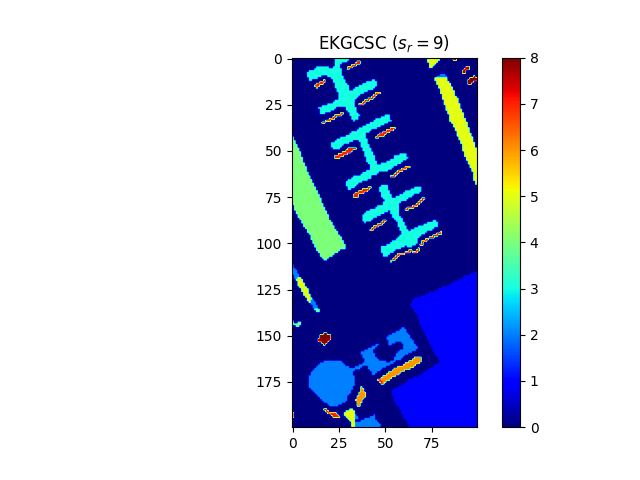}\hspace{-7mm}
   \raisebox{-0.75mm}{\includegraphics[trim=210pt 33pt 40pt 10pt, clip, width=0.274\linewidth]{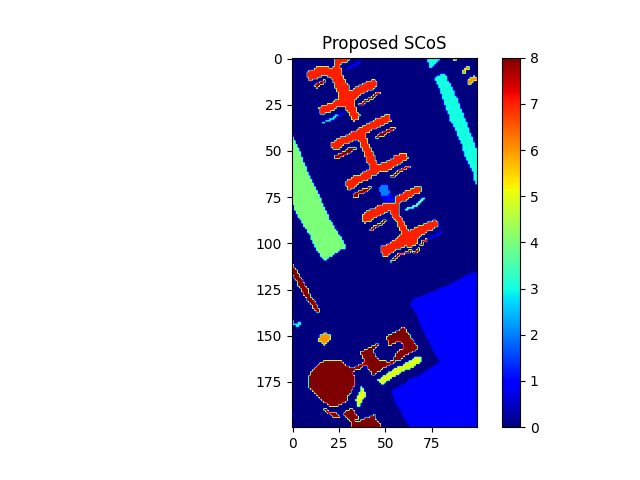}}
  
  \caption{Segmentation maps of the University of Pavia dataset produced by all considered methods.}
  \label{fig:seg_maps_univ_pavia}
\end{figure*}

\begin{figure*}[htbp]
  \centering
  \includegraphics[trim=68pt 38pt 80pt 10pt, clip, width=0.25\linewidth]{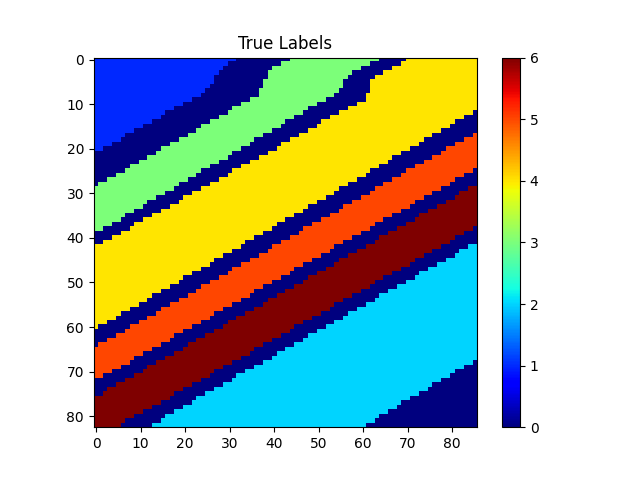}\hspace{-6mm}
  \includegraphics[trim=68pt 38pt 80pt 10pt, clip, width=0.25\linewidth]{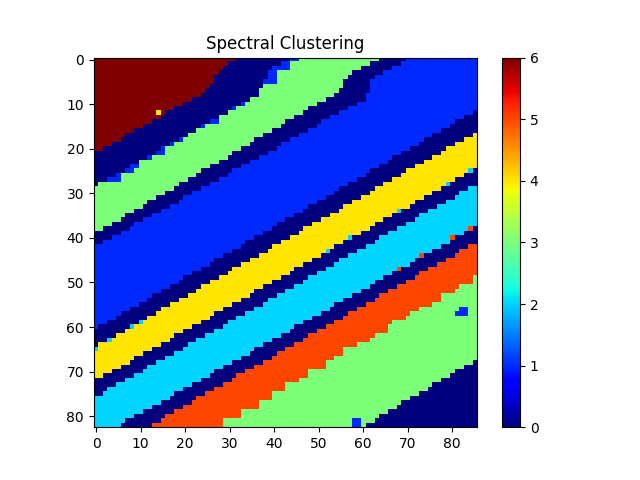}\hspace{-6mm}
  \includegraphics[trim=68pt 38pt 80pt 10pt, clip, width=0.25\linewidth]{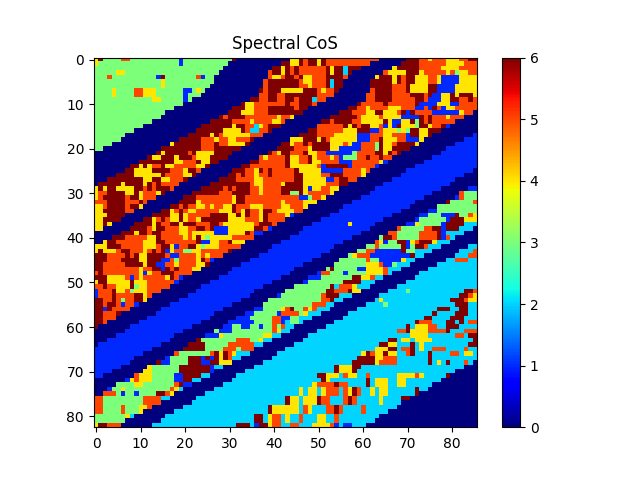}\hspace{-6mm}
  \raisebox{-0.475mm}{\includegraphics[trim=68pt 33pt 20pt 10pt, clip, width=0.297\linewidth]{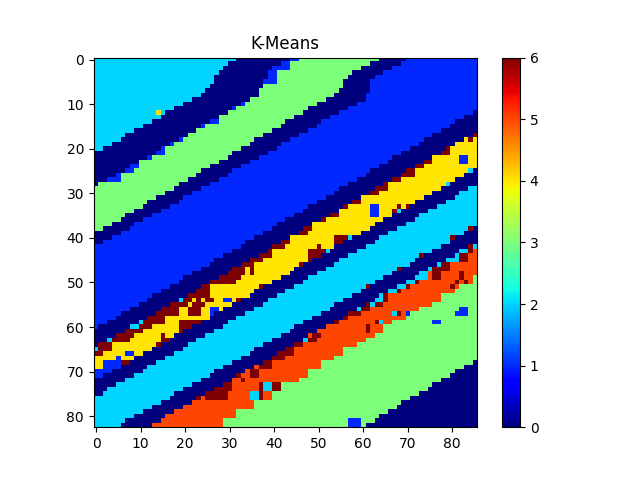}}

  \vspace{1mm}

  \includegraphics[trim=68pt 37pt 80pt 10pt, clip, width=0.25\linewidth]{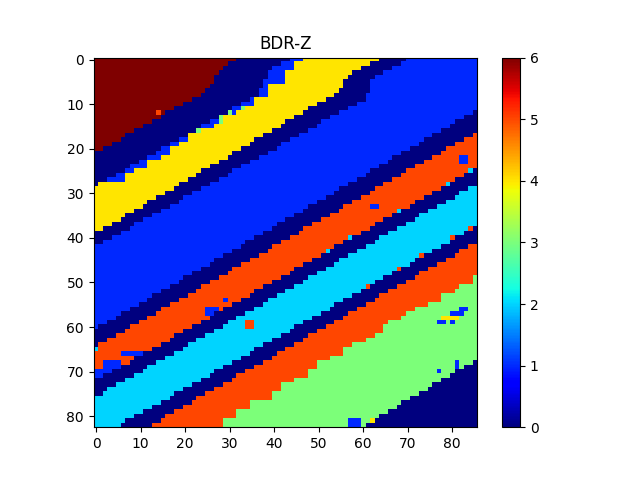}\hspace{-6mm}
  \includegraphics[trim=68pt 37pt 80pt 10pt, clip, width=0.25\linewidth]{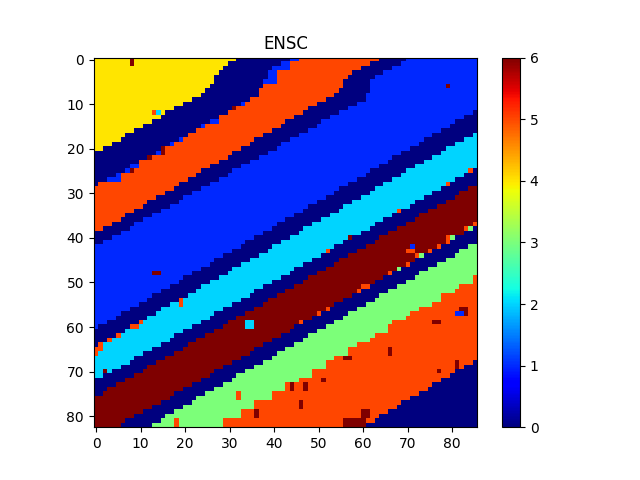}\hspace{-6mm}
  \includegraphics[trim=68pt 37pt 80pt 10pt, clip, width=0.25\linewidth]{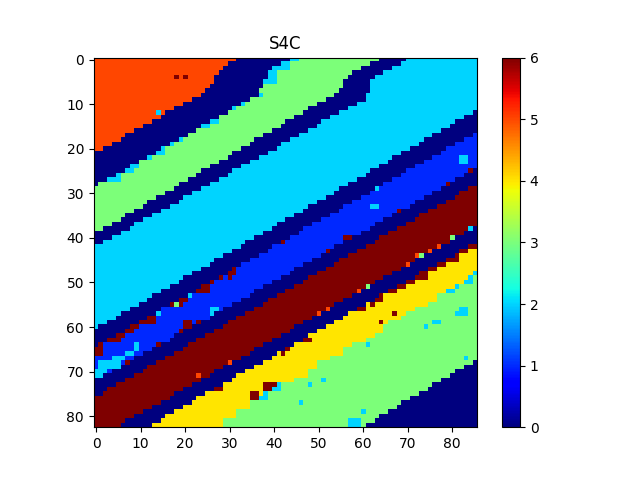}\hspace{-6mm}
   \raisebox{-0.475mm}{\includegraphics[trim=68pt 33pt 20pt 10pt, clip, width=0.297\linewidth]{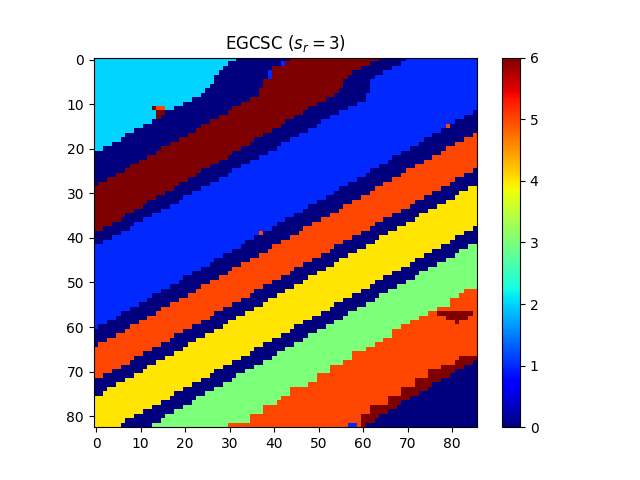}}

  \vspace{1mm}

  \includegraphics[trim=68pt 37pt 80pt 10pt, clip, width=0.25\linewidth]{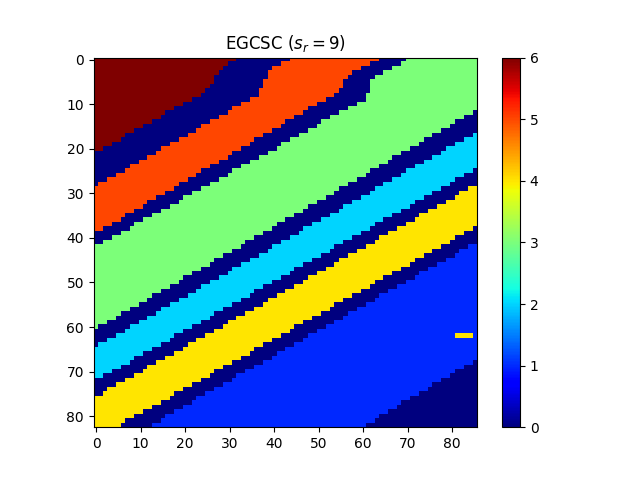}\hspace{-6mm}
  \includegraphics[trim=68pt 37pt 80pt 10pt, clip, width=0.25\linewidth]{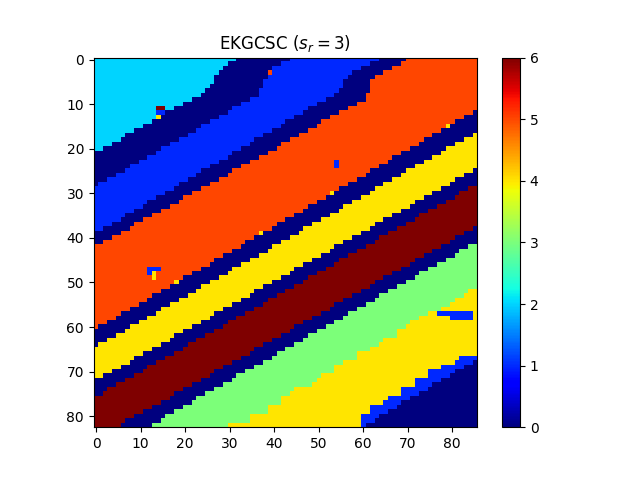}\hspace{-6mm}
  \includegraphics[trim=68pt 37pt 80pt 10pt, clip, width=0.25\linewidth]{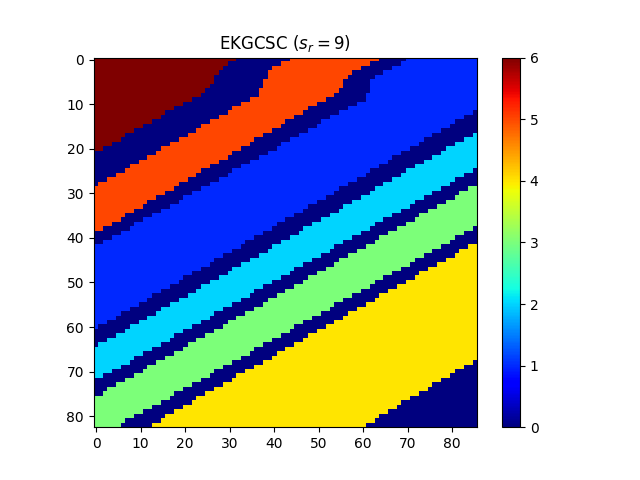}\hspace{-6mm}
   \raisebox{-0.5mm}{\includegraphics[trim=68pt 33pt 20pt 10pt, clip, width=0.297\linewidth]{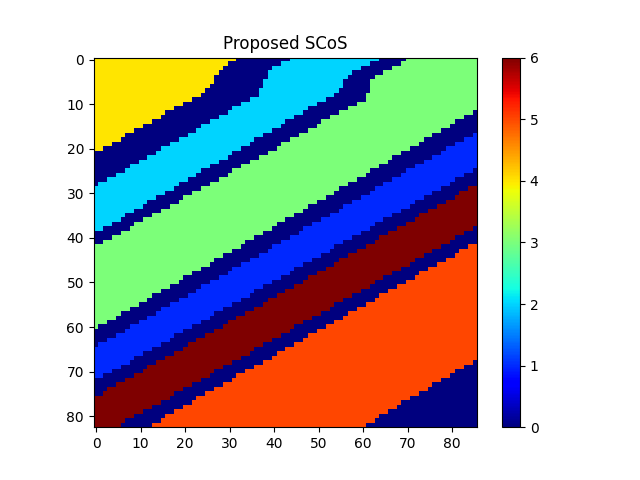}}
  
  \caption{Segmentation maps of the Salinas-A dataset produced by all considered methods.}
  \label{fig:seg_maps_salinas}
\end{figure*}

\section{Experiments with synthetic data}

In this section, we compare our method, and specifically the algorithm we proposed in \cite{karakasis2023clustering}, to three popular subspace clustering methods: Sparse Subspace Clustering OMP (SSC-OMP) \cite{you2016scalable}, Elastic Net Subspace Clustering (ENSC) \cite{you2016oracle}, and Block Diagonal Representation based Subspace clustering (BDRSC) \cite{lu2018subspace}. We also use two classical clustering methods, namely, K-means and Spectral Clustering, as additional baselines.

For our experiments, we use synthetic data and we consider four different scenarios. For all scenarios, we consider five different partially common subspaces of dimension equal to $L_r=L=20$ in $\mathbb{R}^{1000}$. We set the number of views to $K=100$, the number of columns of each view to $M=50$, and we assign each view, $\mathbf{X}_k$, to one of the clusters (we denote it as $r_k$) equiprobably. Then, the data matrix of each view is generated according to the generative model in relation (5) of the main manuscript with additive normal i.i.d. noise, i.e. 
\begin{equation}
    \mathbf{X}_k = \mathbf{G}_{r_k}\mathbf{Q}_k^{(1)}+\mathbf{H}_k\mathbf{Q}_k^{(2)}+\mathbf{E}_k,
\end{equation}
where $\mathbf{Q}_k^{(1)}\in\mathbb{R}^{L\times M}$, $\mathbf{Q}_k^{(2)}\in\mathbb{R}^{M-L\times M}$, $\mathbf{G}_{r_k}\in\mathbb{R}^{N\times L}$ with $\mathbf{G}_{r_k}^T\mathbf{G}_{r_k}=\mathbf{I}_L$, $\mathbf{H}_{k}\in\mathbb{R}^{N\times M-L}$ with $\mathbf{H}_{k}^T\mathbf{H}_{k}=\mathbf{I}_{M-L}$, while we also ensure that $\text{col}\left(\mathbf{G}_{r_k}\right)\cap\text{col}\left(\mathbf{H}_k\right)=\emptyset$. Matrices $\mathbf{Q}_k^{(1)}\in\mathbb{R}^{L\times M}$, $\mathbf{Q}_k^{(2)}\in\mathbb{R}^{M-L\times M}$, and $\mathbf{E}_k\in\mathbb{R}^{N\times M}$ are drawn to have normal i.i.d. elements. After letting 
\begin{equation}
    \text{INR} = \text{INR}_k:= \frac{\left\|\mathbf{H}_k\mathbf{Q}_k^{(2)}\right\|_F^2}{\left\|\mathbf{E}_k\right\|_F^2}
\end{equation}
and 
\begin{equation}
    \text{SINR} =  \text{SINR}_k:=\frac{\left\|\mathbf{G}_{r_k}\mathbf{Q}_k^{(1)}\right\|_F^2}{\left\|\mathbf{H}_k\mathbf{Q}_k^{(2)}+\mathbf{E}_k\right\|_F^2},
\end{equation}
we consider the values $\{0.5, \, 1, \,5, \,10\}$, one in each scenario, for $\text{INR}$, while in all scenarios, we let $\text{SINR}$ vary from $-15$\,dB to $5$\,dB.

Since the number of columns, $M$, is the same for each view, we apply the considered subspace clustering methods to the vectorized versions of the views $\mathbf{X}_k$. For the SSC-OMP, ENSC, and BDRSC methods, we used the implementations provided by the original authors.\footnote{See: \url{https://github.com/ChongYou/subspace-clustering}, \url{https://github.com/canyilu/Block-Diagonal-Representation-for-Subspace-Clustering}} The BDRSC method computes two matrices, $\mathbf{B}$ and $\mathbf{Z}$, which can be used interchangeably to construct the affinity matrix~\cite{lu2018subspace}. In our experiments, we use the matrix $\mathbf{Z}$, and we refer to this variant as BDR-Z. For K-Means and Spectral Clustering, we used the implementations provided by the \texttt{scikit-learn} Python package. Finally, the parameters of all models were manually tuned to maximize performance.

We compare all the methods in terms of clustering performance using the following three criteria, Clustering Accuracy (ACC), Normalized Mutual Information (NMI), and Adjusted Rand Index (ARI). More specifically, for each scenario and each value of SINR, we conduct $100$ Monte Carlo runs and report the averages together with the corresponding standard deviations for all the methods. In addition, we report the average walltime and the corresponding standard deviations. The results from each scenario appear in Figs \ref{fig:acc}, \ref{fig:ari}, \ref{fig:nmi}, and \ref{fig:times}.

We observe that in all scenarios, the proposed method consistently outperforms all baselines across all considered classification metrics and scenarios. In the high SINR regime, it achieves very high performance with negligible deviation, similar to ENSC and SSC-OMP. However, as SINR decreases, the performance of the baselines deteriorates significantly earlier than that of the proposed method, both in terms of average accuracy and variability. This trend holds consistently across all considered INR values. Specifically, the proposed method maintains peak performance with zero deviation even at SINRs as low as $-7.5\,\mathrm{dB}$, $-9\,\mathrm{dB}$, $-11\,\mathrm{dB}$, and $-15\,\mathrm{dB}$ for INR values of $0.5$, $1$, $5$, and $10$, respectively. In contrast, the baselines begin to degrade once SINR drops below $-3.5\,\mathrm{dB}$, regardless of the INR or metric. For SINRs below $-7.5\,\mathrm{dB}$, ACC falls below $0.65$, and both ARI and NMI drop below $0.4$, for all other methods. Finally, regarding computational efficiency, the proposed method demonstrates comparable or lower runtime in the SINR range where all methods perform well, and requires more time only in extremely low SINRs---where all baselines already fail to perform adequately.

\begin{figure}[t]
    \centering
    \includegraphics[width=0.42\textwidth, trim=20 10 30 10, clip]{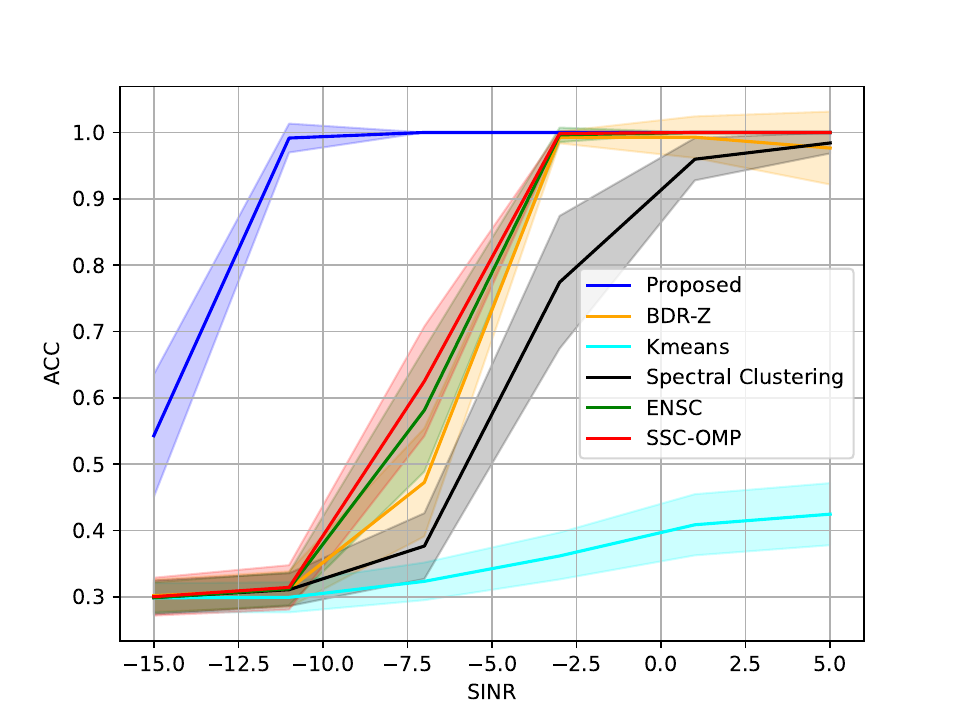}
    \includegraphics[width=0.42\textwidth, trim=20 10 30 10, clip]{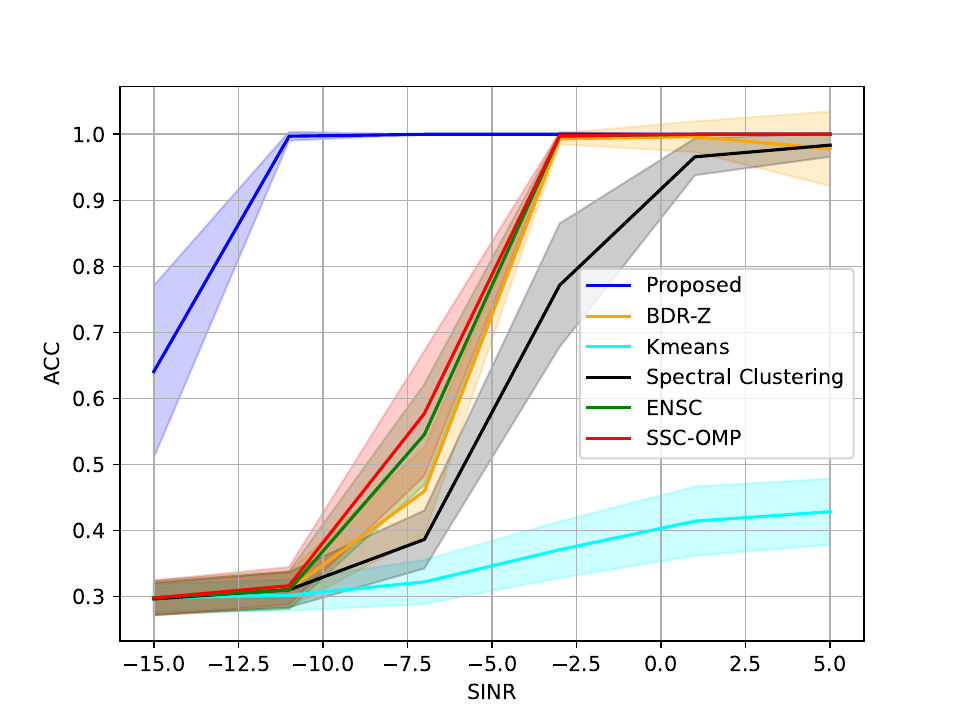}
    \includegraphics[width=0.42\textwidth, trim=20 10 30 10, clip]{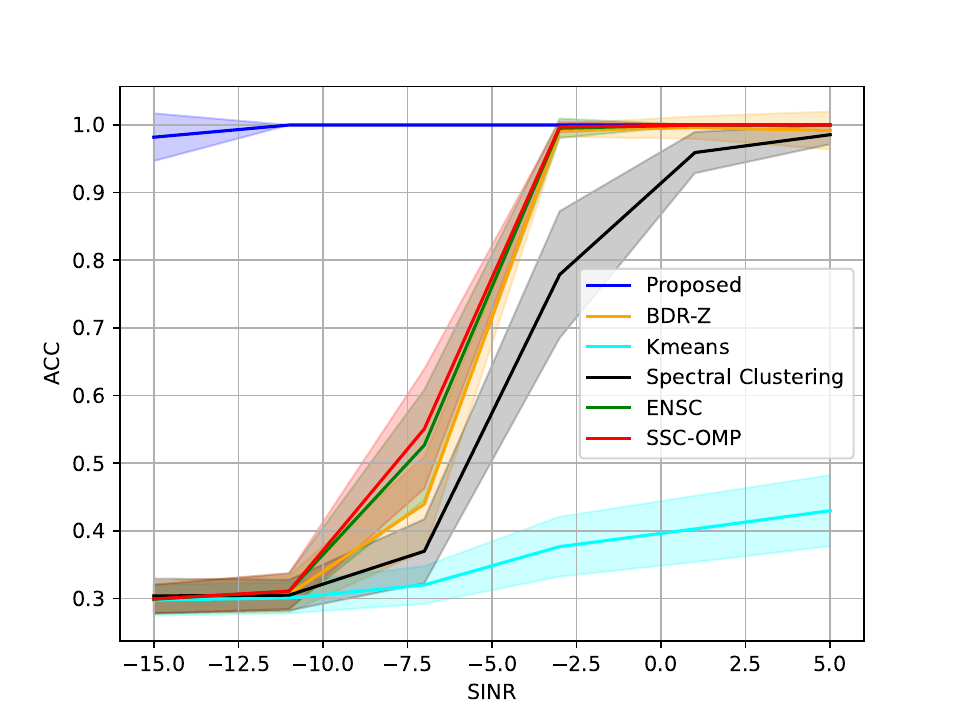}
    \includegraphics[width=0.42\textwidth, trim=20 10 30 10, clip]{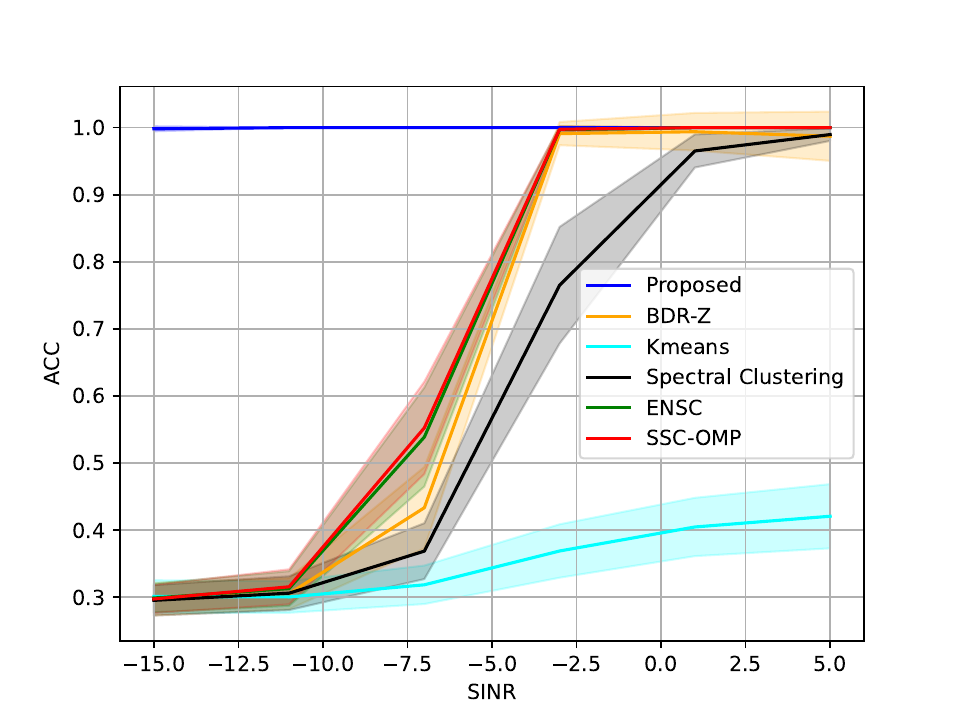}
    \caption{Average ACC metric over SINR for INR = $\left\{0.5,~1,~5,~10\right\}$ (starting from above), based on $100$ Monte Carlo runs.}
    \label{fig:acc}
\end{figure}

\begin{figure}[t]
    \centering
    \includegraphics[width=0.42\textwidth, trim=20 10 30 10, clip]{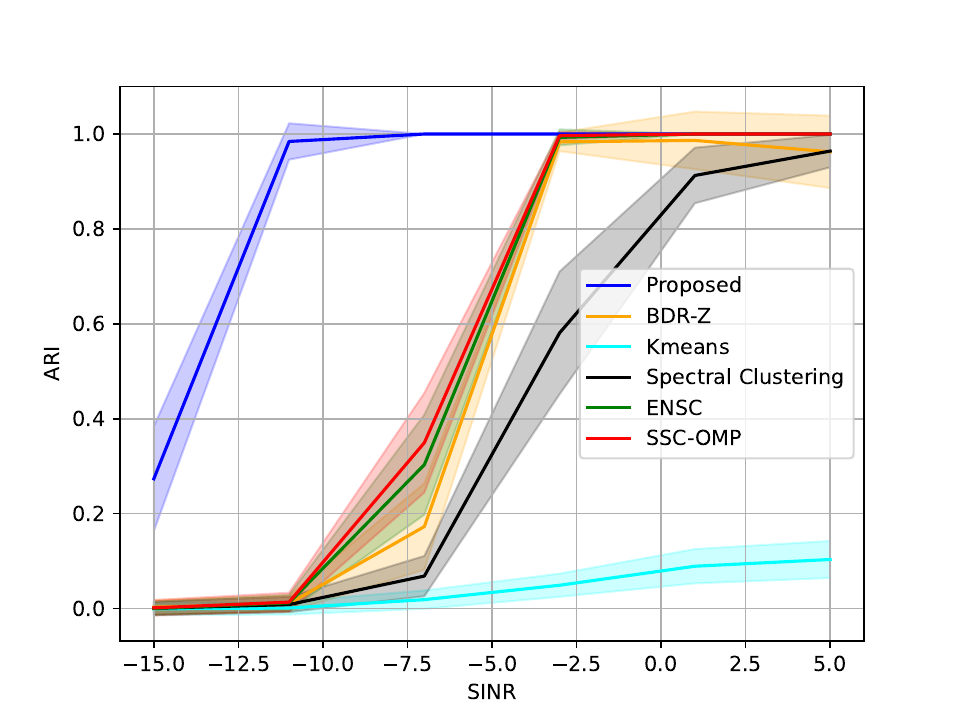}
    \includegraphics[width=0.42\textwidth, trim=20 10 30 10, clip]{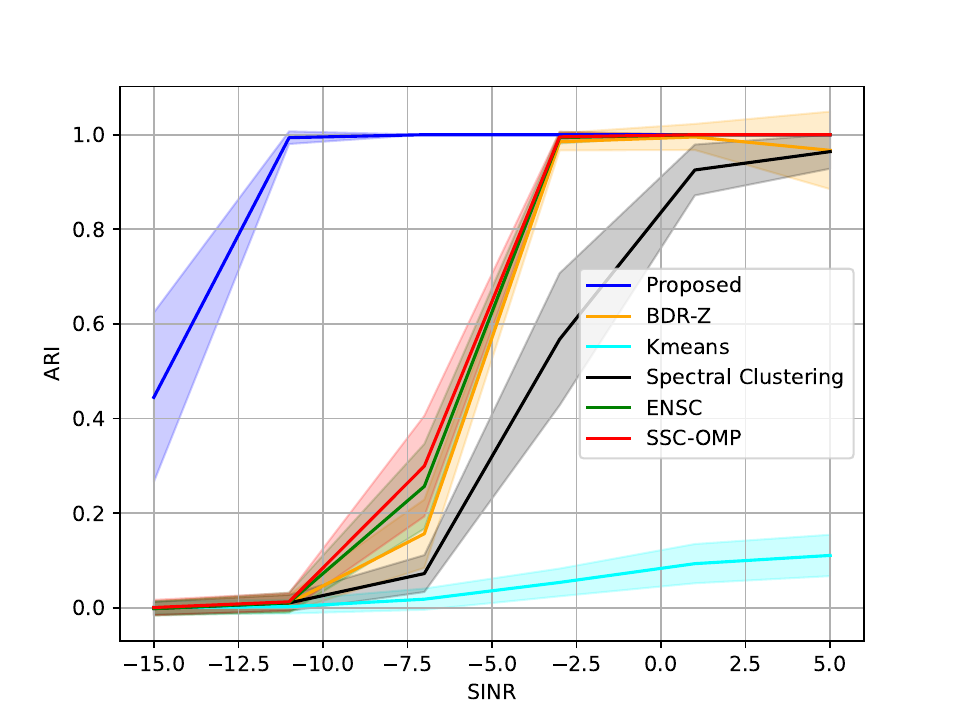}
    \includegraphics[width=0.42\textwidth, trim=20 10 30 10, clip]{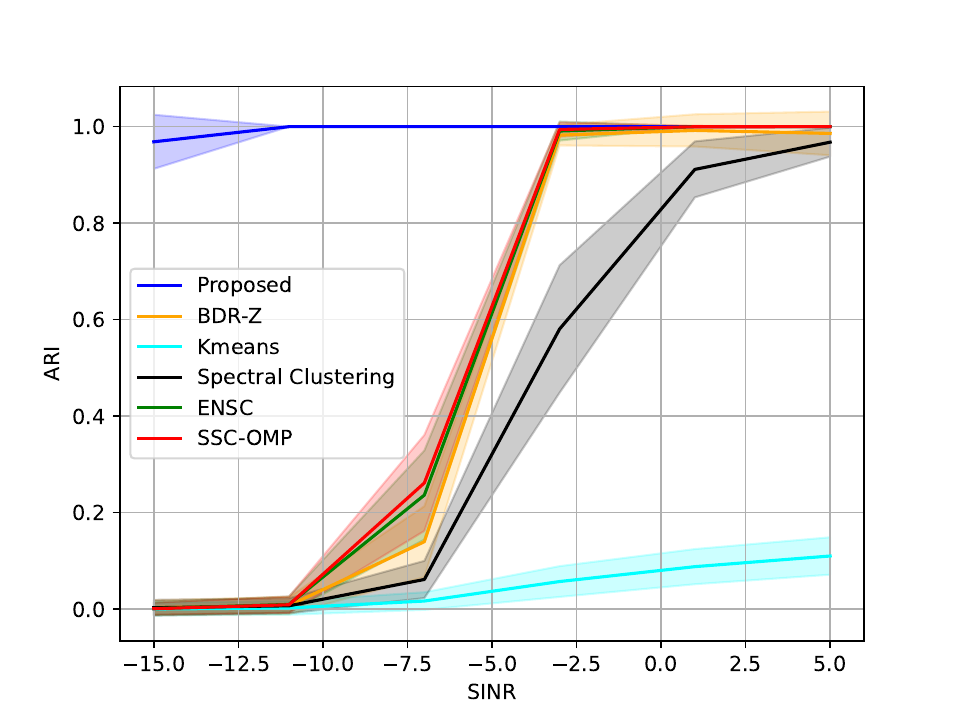}
    \includegraphics[width=0.42\textwidth, trim=20 10 30 10, clip]{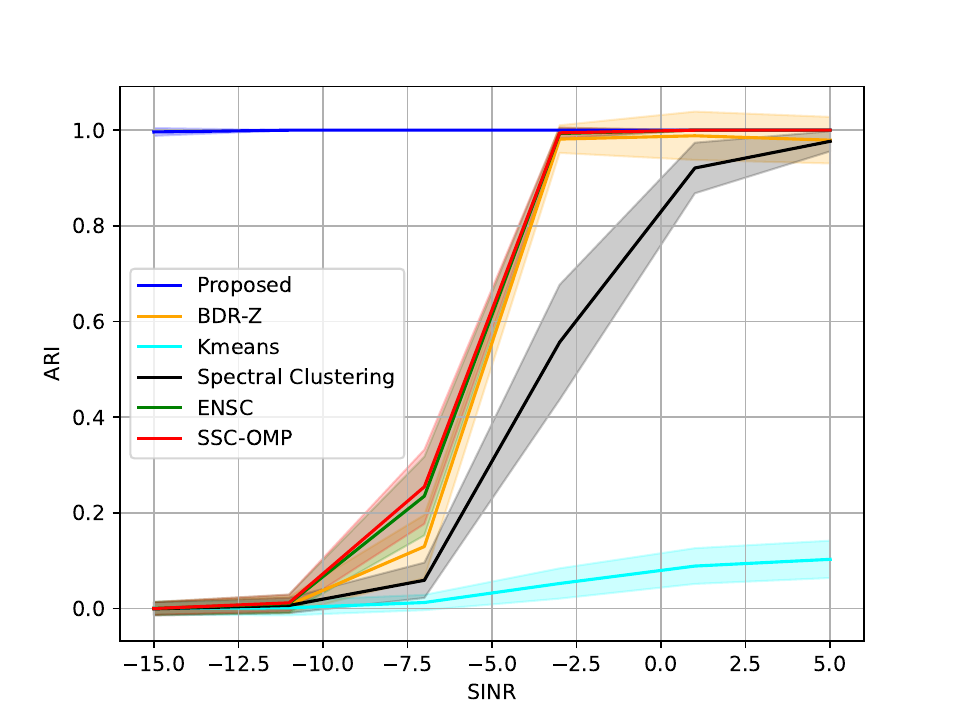}
    \caption{Average ARI metric over SINR for INR = $\left\{0.5,~1,~5,~10\right\}$ (starting from above), based on $100$ Monte Carlo runs.}
    \label{fig:ari}
\end{figure}

\begin{figure}[t]
    \centering
    \includegraphics[width=0.42\textwidth, trim=20 10 30 10, clip]{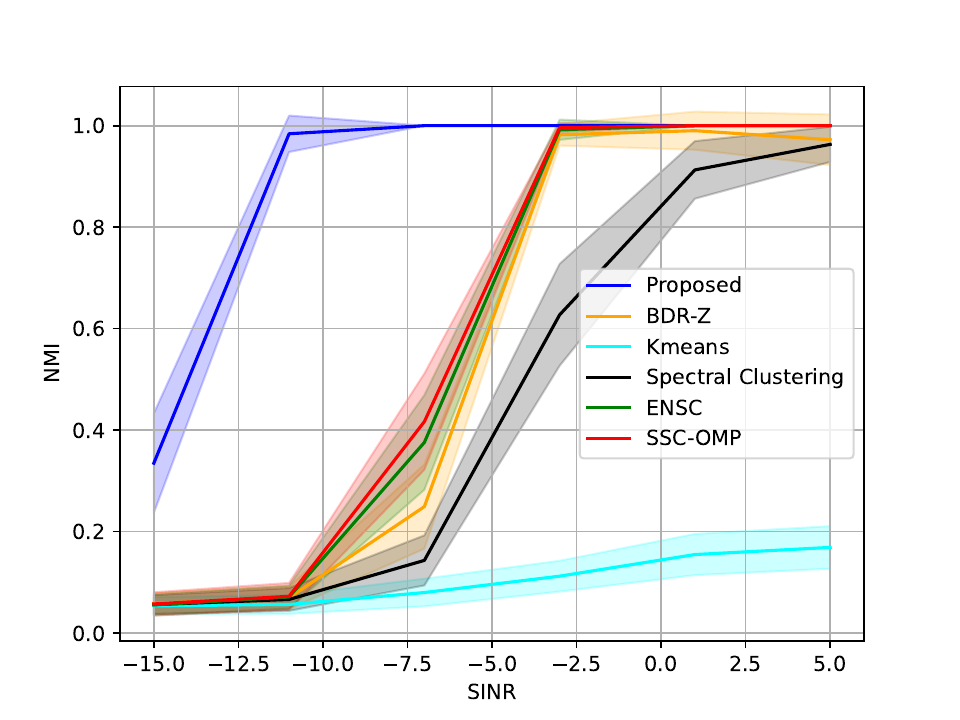}
    \includegraphics[width=0.42\textwidth, trim=20 10 30 10, clip]{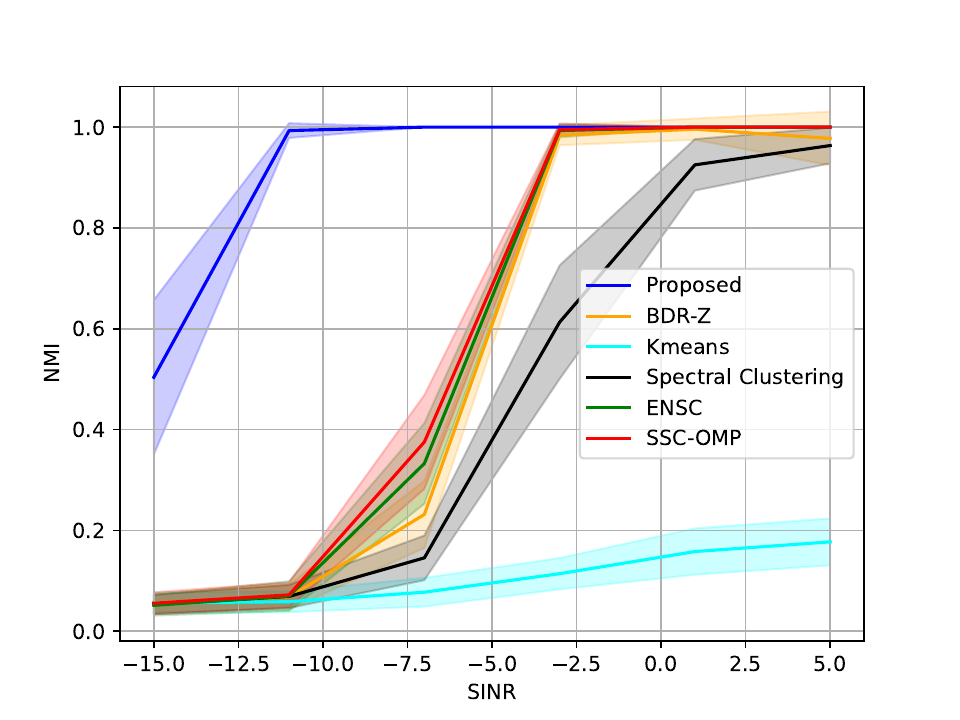}
    \includegraphics[width=0.42\textwidth, trim=20 10 30 10, clip]{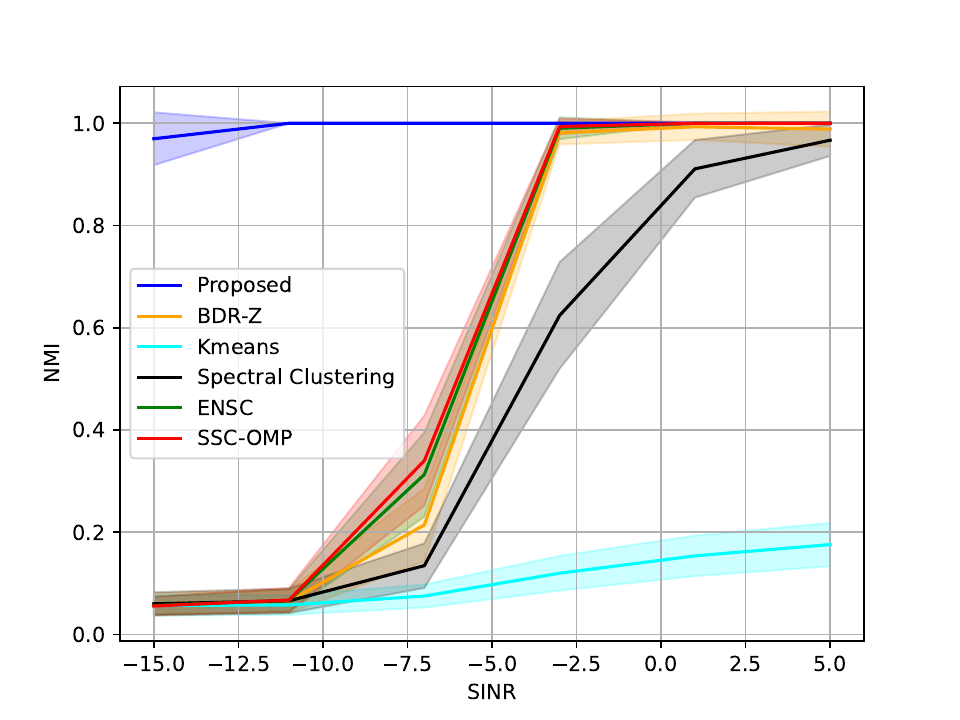}
    \includegraphics[width=0.42\textwidth, trim=20 10 30 10, clip]{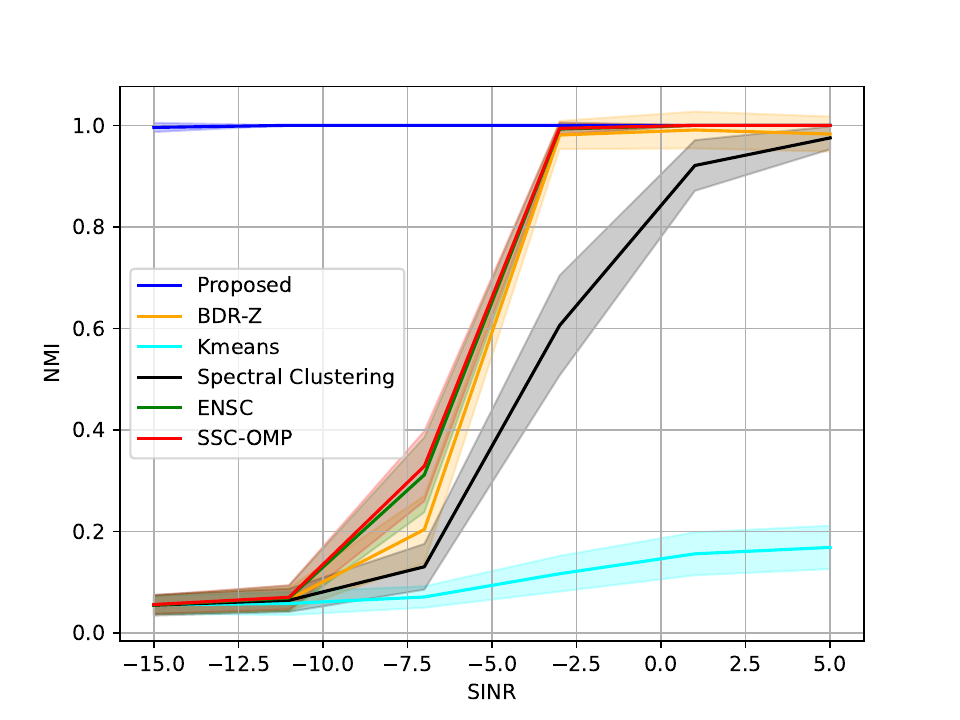}
    \caption{Average NMI metric over SINR for INR = $\left\{0.5,~1,~5,~10\right\}$ (starting from above), based on $100$ Monte Carlo runs.}
    \label{fig:nmi}
\end{figure}

\begin{figure}[t]
    \centering
    \includegraphics[width=0.42\textwidth, trim=20 10 30 10, clip]{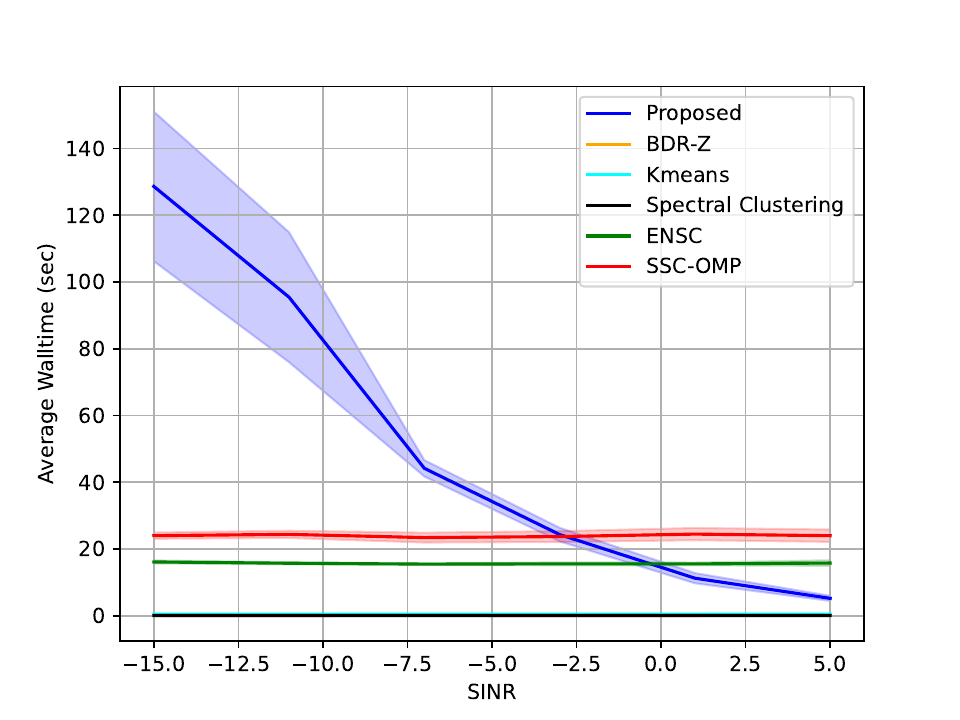}
    \includegraphics[width=0.42\textwidth, trim=20 10 30 10, clip]{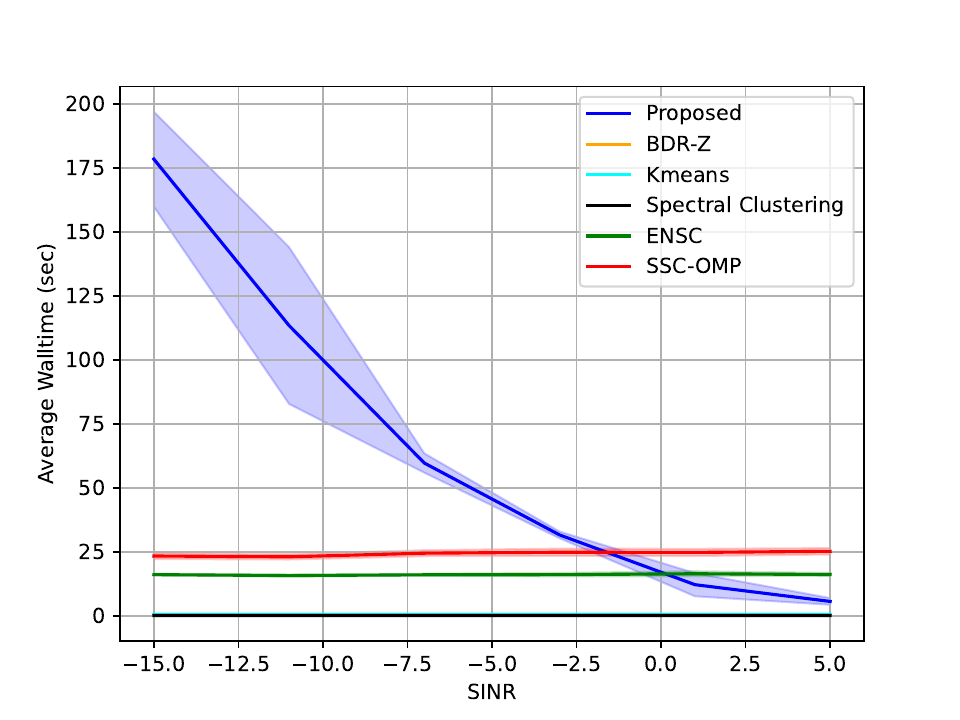}
    \includegraphics[width=0.42\textwidth, trim=20 10 30 10, clip]{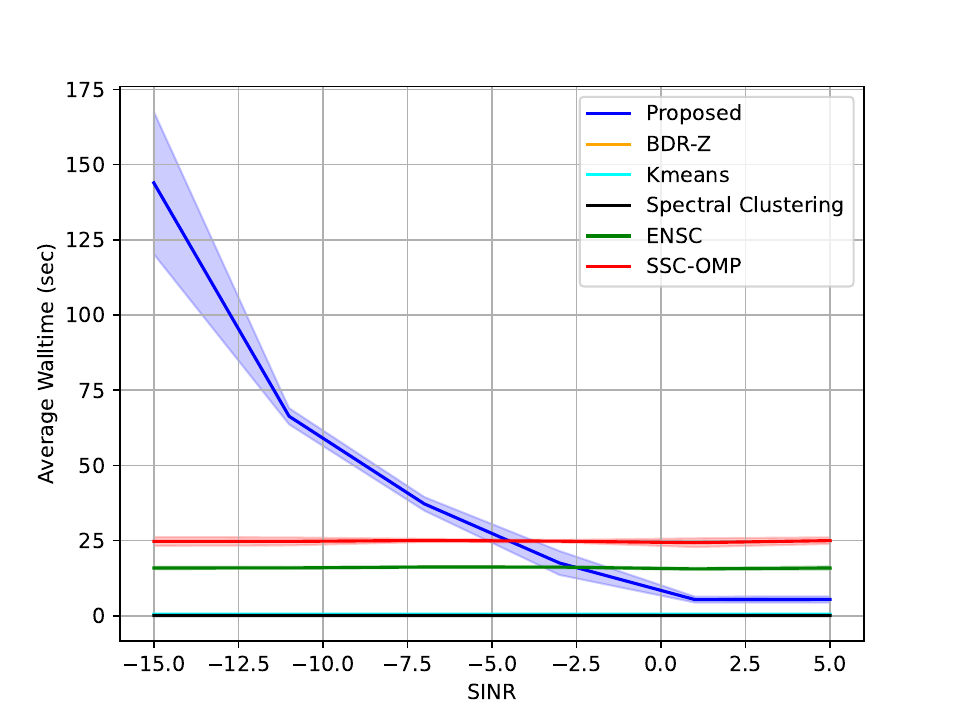}
    \includegraphics[width=0.42\textwidth, trim=20 10 30 10, clip]{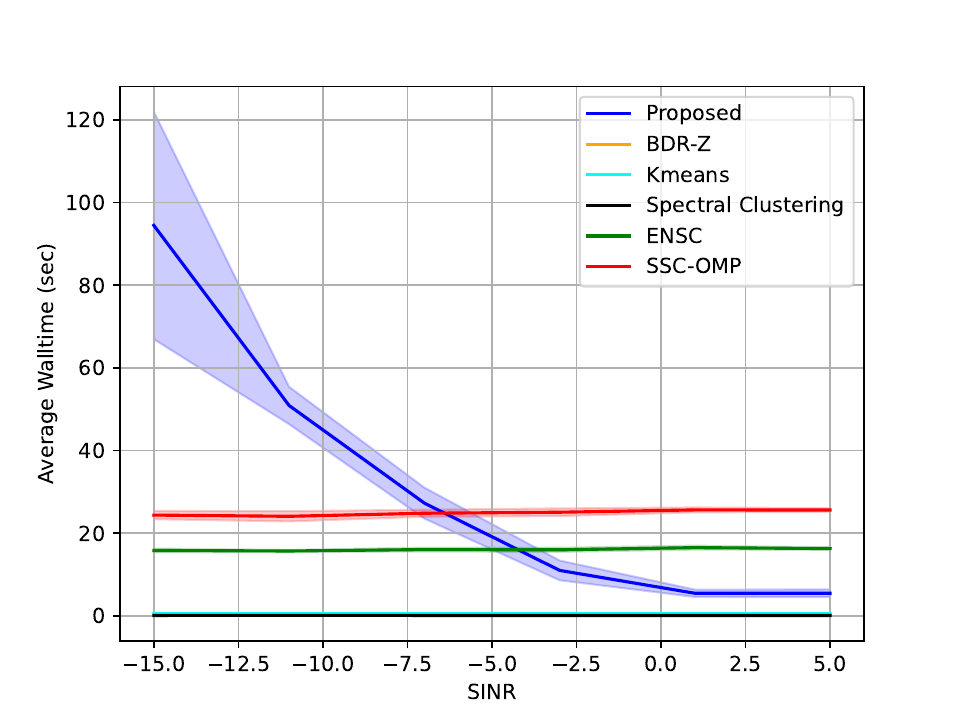}
    
    \caption{Average walltimes over SINR for INR = $\left\{0.5,~1,~5,~10\right\}$ (starting from above), based on $100$ Monte Carlo runs.}
    \label{fig:times}
\end{figure}

\end{document}